\documentclass{article}




\usepackage[final]{neurips_2024}


\usepackage[utf8]{inputenc} 
\usepackage[T1]{fontenc}    
\usepackage{hyperref}       
\usepackage{url}            
\usepackage{booktabs}       
\usepackage{amsfonts}       
\usepackage{nicefrac}       
\usepackage{microtype}      
\usepackage{xcolor}         

\usepackage{algorithm}
\usepackage{algorithmic}

\usepackage{amsmath}
\usepackage{amssymb}
\usepackage{mathtools}
\usepackage{amsthm}
\usepackage{bm}
\usepackage{enumerate}
\usepackage{booktabs}

\DeclareMathOperator{\rank}{rank}
\DeclareMathOperator{\Tr}{Tr}

\theoremstyle{plain}
\newtheorem{theorem}{Theorem}
\newtheorem{proposition}{Proposition}
\newtheorem{lemma}{Lemma}

\theoremstyle{definition}

\theoremstyle{remark}

\title{Knowledge Graph Completion by Intermediate Variables Regularization}

%

\author{%
  Changyi Xiao, Yixin Cao\thanks{Corresponding author.} \\
  School of Computer Science, Fudan University\\
  \texttt{changyi\_xiao@fudan.edu.cn, caoyixin2011@gmail.com} \\
}

\begin{document}

\maketitle

\begin{abstract}
Knowledge graph completion (KGC) can be framed as a 3-order binary tensor completion task. Tensor decomposition-based (TDB) models have demonstrated strong performance in KGC. In this paper, we provide a summary of existing TDB models and derive a general form for them, serving as a foundation for further exploration of TDB models. Despite the expressiveness of TDB models, they are prone to overfitting. Existing regularization methods merely minimize the norms of embeddings to regularize the model, leading to suboptimal performance. Therefore, we propose a novel regularization method for TDB models that addresses this limitation. The regularization is applicable to most TDB models and ensures tractable computation. Our method minimizes the norms of intermediate variables involved in the different ways of computing the predicted tensor. To support our regularization method, we provide a theoretical analysis that proves its effect in promoting low trace norm of the predicted tensor to reduce overfitting. Finally, we conduct experiments to verify the effectiveness of our regularization technique as well as the reliability of our theoretical analysis. The code is available at \url{https://github.com/changyi7231/IVR}.
\end{abstract}

\section{Introduction}
A knowledge graph (KG) can be represented as a 3rd-order binary tensor, in which each entry corresponds to a triplet of the form $(head\ entity, relation, tail\ entity)$. A value of 1 denotes a known true triplet, while 0 denotes a false triplet. Despite containing a large number of known triplets, KGs are often incomplete, with many triplets missing. Consequently, the 3rd-order tensors representing the KGs are incomplete. The objective of knowledge graph completion (KGC) is to infer the true or false values of the missing triplets based on the known ones, i.e., to predict which of the missing entries in the tensor are 1 or 0.

A number of models have been proposed for KGC, which can be classified into translation-based models, tensor decomposition-based (TDB) models and neural networks models \citep{zhang2021neural}. We only focus on TDB models in this paper due to their wide applicability and great performance \citep{lacroix2018canonical,zhang2020duality}. TDB models can be broadly categorized into two groups: CANDECOMP/PARAFAC (CP) decomposition-based models, including CP \citep{lacroix2018canonical}, DistMult \citep{yang2014embedding} and ComplEx \citep{trouillon2017knowledge}, and Tucker decomposition-based models, including SimplE \citep{kazemi2018simple}, ANALOGY \citep{liu2017analogical}, QuatE \citep{zhang2019quaternion} and TuckER \citep{balavzevic2019tucker}. To provide a thorough understanding of TDB models, we present a summary of existing models and derive a general form that unifies them, which provides a fundamental basis for further exploration of TDB models.

TDB models have been proven to be theoretically fully expressive \citep{trouillon2017knowledge,kazemi2018simple,balavzevic2019tucker}, implying they can represent any real-valued tensor. However, in practice, TDB models frequently fall prey to severe overfitting. To counteract this issue, various regularization techniques have been employed in KGC. One commonly used technique is the squared Frobenius norm regularization \citep{nickel2011three,yang2014embedding,trouillon2017knowledge}. \citet{lacroix2018canonical} proposed another regularization, N3 norm regularization, based on the tensor nuclear $p$-norm, which outperforms the squared Frobenius norm in terms of performance. Additionally, \citet{zhang2020duality} introduced DURA, a regularization technique based on the duality of TDB models and distance-based models that results in significant improvements on benchmark datasets. Nevertheless, N3 and DURA rely on CP decomposition, limiting their applicability to CP \citep{lacroix2018canonical} and ComplEx \citep{trouillon2017knowledge}. As a result, there is a pressing need for a regularization technique that is widely applicable and can effectively alleviate the overfitting issue.

In this paper, we introduce a novel regularization method for KGC to improve the performance of TDB models. Our regularization focuses on preventing overfitting while maintaining the expressiveness of TDB models as much as possible. It is applicable to most TDB models, while also ensuring tractable computation. Existing regularization methods for KGC rely on minimizing the norms of embeddings to regularize the model \citep{yang2014embedding,lacroix2018canonical}, leading to suboptimal performance. To achieve superior performance, we present an intermediate variables regularization (IVR) approach that minimizes the norms of intermediate variables involved in the processes of computing the predicted tensor of TDB models. Additionally, our approach fully considers the computing ways because different ways of computing the predicted tensor may generate different intermediate variables.

To support the efficacy of our regularization approach, we further provide a theoretical analysis. We prove that our regularization is an upper bound of the overlapped trace norm \citep{tomioka2011statistical}. The overlapped trace norm is the sum of the trace norms of the unfolding matrices along each mode of a tensor \citep{kolda2009tensor}, which can be considered as a surrogate measure of the rank of a tensor. Thus, the overlapped trace norm reflects the correlation among entities and relations, which can pose a constraint to jointly entities and relations embeddings learning. In specific, entities and relations in KGs are usually highly correlated. For example, some relations are mutual inverse relations, or a relation may be a composition of another two relations \citep{zhang2021neural}. Through minimizing the upper bound of the overlapped trace norm, we encourage a high correlation among entities and relations, which brings strong regularization and alleviates the overfitting problem.

The main contributions of this paper are listed below:
\begin{enumerate}[1.]
\item We present a detailed overview of a wide range of TDB models and establish a general form to serve as a foundation for further TDB model analysis.
\item We introduce a new regularization approach for TDB models based on the general form to mitigate the overfitting issue, which is notable for its generality and effectiveness.
\item We provide a theoretical proof of the efficacy of our regularization and validate its practical utility through experiments.
\end{enumerate}

\section{Related Work}
\paragraph{Tensor Decomposition Based Models}
Research in KGC has been vast, with TDB models garnering attention due to their superior performance. The two primary TDB approaches that have been extensively studied are CP decomposition \citep{hitchcock1927expression} and Tucker decomposition \citep{tucker1966some}. CP decomposition represents a tensor as a sum of $n$ rank-one tensors, while Tucker decomposition decomposes a tensor into a core tensor and a set of matrices. \citet{kolda2009tensor} shows more details about CP decomposition and Tucker decomposition.

Several techniques have been developed for applying CP decomposition and Tucker decomposition in KGC. For instance, \citet{lacroix2018canonical} employed the original CP decomposition, whereas DistMult \citep{yang2014embedding}, a variant of CP decomposition, made the embedding matrices of head entities and tail entities identical to simplify the model. SimplE \citep{kazemi2018simple} tackled the problem of independence among the embeddings of head and tail entities within CP decomposition. ComplEx \citep{trouillon2017knowledge} extended DistMult to the complex space to handle asymmetric relations. QuatE \citep{zhang2019quaternion} explored hypercomplex space to further enhance KGC. Other techniques include HolE \citep{nickel2016holographic} proposed by \citet{nickel2016holographic}, which operates on circular correlation, and ANALOGY \citep{liu2017analogical}, which explicitly utilizes the analogical structures of KGs. Notably, \citet{liu2017analogical} confirmed that HolE is equivalent to ComplEx. Additionally, \citet{balavzevic2019tucker} introduced TuckER, which is based on the Tucker decomposition and has achieved state-of-the-art performance across various benchmark datasets for KGC. \citet{nickel2011three} proposed a three-way decomposition RESCAL over each relational slice of the tensor. Please refer to \citep{zhang2021neural} or \citep{ji2021survey} for more detailed discussion about KGC models or TDB models.

\paragraph{Regularization}
Although TDB models are highly expressive \citep{trouillon2017knowledge,kazemi2018simple,balavzevic2019tucker}, they can suffer severely from overfitting in practice. Consequently, several regularization approaches have been proposed. A common regularization approach is to apply the squared Frobenius norm to the model parameters \citep{nickel2011three, yang2014embedding, trouillon2017knowledge}. However, this approach does not correspond to a proper tensor norm, as shown by \citet{lacroix2018canonical}. Therefore, they proposed a novel regularization method, N3, based on the tensor nuclear 3-norm, which is an upper bound of the tensor nuclear norm. Likewise, \citet{zhang2020duality} introduced DURA, a regularization method that exploits the duality of TDB models and distance-based models, and serves as an upper bound of the tensor nuclear 2-norm. However, both N3 and DURA are derived from the CP decomposition, and thus are only applicable to CP and ComplEx models.

\section{Methods}
In Section \ref{section:3.1}, we begin by providing an overview of existing TDB models and derive a general form for them. Thereafter, we present our intermediate variables regularization (IVR) approach in Section \ref{section:3.2}. Finally, in Section \ref{section:3.3}, we provide theoretical analysis to support the efficacy of our proposed regularization technique.

\subsection{General Form}
\label{section:3.1}
To facilitate the subsequent theoretical analysis, we initially provide a summary of existing TDB models and derive a general form for them. Given a set of entities $\mathcal{E}$ and a set of relations $\mathcal{R}$, a KG contains a set of triplets $\mathcal{S}=\{(i,j,k)\}\subset \mathcal{E}\times \mathcal{R}\times \mathcal{E}$.
Let $\bm{X}\in \{0,1\}^{|\mathcal{E}|\times |\mathcal{R}| \times |\mathcal{E}|}$ represent the KG tensor, with $\bm{X}_{ijk}=1$ iff $(i,j,k)\in\mathcal{S}$, where $|\mathcal{E}|$ and $|\mathcal{R}|$ denote the number of entities and relations, respectively.
Let $\bm{H}\in \mathbb{R}^{|\mathcal{E}|\times D}, \bm{R}\in \mathbb{R}^{|\mathcal{R}|\times D}$ and $\bm{T}\in \mathbb{R}^{|\mathcal{E}|\times D}$ be the embedding matrices of head entities, relations and tail entities, respectively, where $D$ is the embedding dimension. 

Various TDB models can be attained by partitioning the embedding matrices into $P$ parts. We reshape $\bm{H}\in \mathbb{R}^{|\mathcal{E}|\times D}, \bm{R}\in \mathbb{R}^{|\mathcal{R}|\times D}$ and $\bm{T}\in \mathbb{R}^{|\mathcal{E}|\times D}$ into $\bm{H}\in \mathbb{R}^{|\mathcal{E}|\times (D/P)\times P}, \bm{R}\in \mathbb{R}^{|\mathcal{R}|\times (D/P)\times P}$ and $\bm{T}\in \mathbb{R}^{|\mathcal{E}|\times (D/P)\times P}$, respectively, where $P$ is the number of parts we partition. For different $P$, we can get different TDB models.

\paragraph{CP/DistMult} Let $P=1$, CP \citep{lacroix2018canonical} can be represented as
\begin{align*}
    \bm{X}_{ijk} = \langle \bm{H}_{i:1}, \bm{R}_{j:1}, \bm{T}_{k:1} \rangle:=\sum_{d=1}^{D/P}\bm{H}_{id1}\bm{R}_{jd1}\bm{T}_{kd1}
\end{align*}
where $\langle \cdot,\cdot,\cdot\rangle$ is the dot product of three vectors. DistMult \citep{yang2014embedding}, a particular case of CP, which shares the embedding matrices of head entities and tail entities, i.e., $\bm{H}=\bm{T}$.
\paragraph{ComplEx/HolE}
Let $P=2$, ComplEx \citep{trouillon2017knowledge} can be represented as
\begin{align*}
    \bm{X}_{ijk} = \langle \bm{H}_{i:1}, \bm{R}_{j:1}, \bm{T}_{k:1}\rangle +\langle \bm{H}_{i:2}, \bm{R}_{j:1}, \bm{T}_{k:2}\rangle
+\langle\bm{H}_{i:1}, \bm{R}_{j:2}, \bm{T}_{k:2}\rangle -\langle\bm{H}_{i:2}, \bm{R}_{j:2}, \bm{T}_{k:1}\rangle
\end{align*}
\citet{liu2017analogical} proved that HolE \citep{nickel2011three} is equivalent to ComplEx.

\paragraph{SimplE}
Let $P=2$, SimplE \citep{kazemi2018simple} can be represented as
\begin{align*}
    \bm{X}_{ijk} = \langle \bm{H}_{i:1}, \bm{R}_{j:1}, \bm{T}_{k:2}\rangle +\langle \bm{H}_{i:2}, \bm{R}_{j:2}, \bm{T}_{k:1}\rangle 
\end{align*}

\paragraph{ANALOGY}
Let $P=4$, ANALOGY \citep{liu2017analogical} can be represented as
\begin{align*}
    \bm{X}_{ijk} &= \langle \bm{H}_{i:1}, \bm{R}_{j:1}, \bm{T}_{k:1}\rangle +\langle \bm{H}_{i:2}, \bm{R}_{j:2}, \bm{T}_{k:2}\rangle 
+\langle \bm{H}_{i:3}, \bm{R}_{j:3}, \bm{T}_{k:3}\rangle +\langle \bm{H}_{i:3}, \bm{R}_{j:4}, \bm{T}_{k:4}\rangle \\
&+\langle \bm{H}_{i:4}, \bm{R}_{j:3}, \bm{T}_{k:4}\rangle -\langle \bm{H}_{i:4}, \bm{R}_{j:4}, \bm{T}_{k:3}\rangle 
\end{align*}

\paragraph{QuatE}
Let $P=4$, QuatE \citep{zhang2019quaternion} can be represented as
\begin{align*}
    \bm{X}_{ijk} &= \langle \bm{H}_{i:1}, \bm{R}_{j:1}, \bm{T}_{k:1}\rangle -\langle \bm{H}_{i:2}, \bm{R}_{j:2}, \bm{T}_{k:1}\rangle -\langle \bm{H}_{i:3}, \bm{R}_{j:3}, \bm{T}_{k:1}\rangle -\langle \bm{H}_{i:4}, \bm{R}_{j:4}, \bm{T}_{k:1}\rangle \\
&+\langle \bm{H}_{i:1}, \bm{R}_{j:2}, \bm{T}_{k:2}\rangle +\langle \bm{H}_{i:2}, \bm{R}_{j:1}, \bm{T}_{k:2}\rangle +\langle \bm{H}_{i:3}, \bm{R}_{j:4}, \bm{T}_{k:2}\rangle -\langle \bm{H}_{i:4}, \bm{R}_{j:3}, \bm{T}_{k:2}\rangle \\
&+\langle \bm{H}_{i:1}, \bm{R}_{j:3}, \bm{T}_{k:3}\rangle -\langle \bm{H}_{i:2}, \bm{R}_{j:4}, \bm{T}_{k:3}\rangle +\langle \bm{H}_{i:3}, \bm{R}_{j:1}, \bm{T}_{k:3}\rangle +\langle \bm{H}_{i:4}, \bm{R}_{j:2}, \bm{T}_{k:3}\rangle \\
&+\langle \bm{H}_{i:1}, \bm{R}_{j:4}, \bm{T}_{k:4}\rangle  +\langle \bm{H}_{i:2}, \bm{R}_{j:3}, \bm{T}_{k:4}\rangle -\langle \bm{H}_{i:3}, \bm{R}_{j:2}, \bm{T}_{k:4}\rangle+\langle \bm{H}_{i:4}, \bm{R}_{j:1}, \bm{T}_{k:4}\rangle 
\end{align*}

\paragraph{TuckER}
Let $P=D$, TuckER \citep{balavzevic2019tucker} can be represented as
\begin{align*}
    \bm{X}_{ijk} = \sum_{l=1}^{P}\sum_{m=1}^{P}\sum_{n=1}^{P}\bm{W}_{lmn}\bm{H}_{i1l}\bm{R}_{j1m}\bm{T}_{k1n}
\end{align*}
where $\bm{W}\in \mathbb{R}^{P\times P\times P}$ is the core tensor.

\paragraph{General Form}
Through our analysis, we observe that all TDB models can be expressed as a linear combination of several dot product. The key distinguishing factors among these models are the choice of the number of parts $P$ and the core tensor $\bm{W}$. The number of parts $P$ determines the dimensions of the dot products of the embeddings, while the core tensor $\bm{W}$ determines the strength of the dot products. It is important to note that TuckER uses a parameter tensor as its core tensor, whereas the core tensors of other models are predetermined constant tensors. Therefore, we can derive a general form of these models as
\begin{align}
    \bm{X}_{ijk} &= \sum_{l=1}^{P}\sum_{m=1}^{P}\sum_{n=1}^{P}\bm{W}_{lmn}\langle \bm{H}_{i:l}, \bm{R}_{j:m}, \bm{T}_{k:n}\rangle = \sum_{l=1}^{P}\sum_{m=1}^{P}\sum_{n=1}^{P}\bm{W}_{lmn}(\sum_{d=1}^{D/P}\bm{H}_{idl}\bm{R}_{jdm}\bm{T}_{kdn})\notag \\
    &= \sum_{d=1}^{D/P}(\sum_{l=1}^{P}\sum_{m=1}^{P}\sum_{n=1}^{P}\bm{W}_{lmn}\bm{H}_{idl}\bm{R}_{jdm}\bm{T}_{kdn})
\label{equation:1}
\end{align}
or 
\begin{align}
    \bm{X} = \sum_{d=1}^{D/P}\bm{W}\times_{1}\bm{H}_{:d:}\times_{2}\bm{R}_{:d:}\times_{3}\bm{T}_{:d:}
    \label{equation:2}
\end{align}
where $\times_{n}$ is the mode-$n$ product \citep{kolda2009tensor}, and $\bm{W}\in \mathbb{R}^{P\times P\times P}$ is the core tensor, which can be a parameter tensor or a predetermined constant tensor. The general form Eq.(\ref{equation:2}) can also be considered as a sum of $D/P$ TuckER decompositions, which is also called block-term decomposition \citep{de2008decompositions}. Eq.(\ref{equation:2}) is a block-term decomposition with a shared core tensor $\bm{W}$. This general form is easy to understand, facilitates better understanding of TDB models and paves the way for further exploration of TDB models. The general form presents a unified view of TDB models and helps the researchers understand the relationship between different TDB models. Moreover, the general form motivates the researchers to propose new methods and establish unified theoretical frameworks that are applicable to most TDB models. Our proposed regularization in Section \ref{section:3.2} and the theoretical analysis Section \ref{section:3.3} are examples of such contributions.

\paragraph{The Number of Parameters and Computational Complexity}
The parameters of Eq.(\ref{equation:2}) come from two parts, the core tensor $\bm{W}$ and the embedding matrices $\bm{H},\bm{R}$ and $\bm{T}$. The number of parameters of the core tensor $\bm{W}$ is equal to $P^3$ if $\bm{W}$ is a parameter tensor and otherwise equal to 0. The number of parameters of the embedding matrices is equal to $|\mathcal{E}|D+|\mathcal{R}|D$ if $\bm{H}=\bm{T}$ and otherwise equal to $2|\mathcal{E}|D+|\mathcal{R}|D$. The computational complexity of Eq.(\ref{equation:2}) is equal to $\mathcal{O}(DP^2|\mathcal{E}|^2|\mathcal{R}|)$. The larger the number of parts $P$, the more expressive the model and the more the computation. Therefore, the choice of $P$ is a trade-off between expressiveness and computation.

\paragraph{TuckER and Eq.(\ref{equation:2})}
TuckER \citep{balavzevic2019tucker} also demonstrated that TDB models can be represented as a Tucker decomposition by setting specific core tensors $\bm{W}$. Nevertheless, we must stress that TuckER does not explicitly consider the number of parts $P$ and the core tensor $\bm{W}$, which are pertinent to the number of parameters and computational complexity of TDB models. Moreover, in Appendix \ref{appendix:1}, we demonstrate that the conditions for a TDB model to learn logical rules are also dependent on $P$ and $\bm{W}$. By selecting appropriate $P$ and $\bm{W}$, TDB models can be able to learn symmetry rules, antisymmetry rules, and inverse rules.

\begin{figure}[t]
\begin{center}
\centerline{\includegraphics[width=0.4\textwidth]{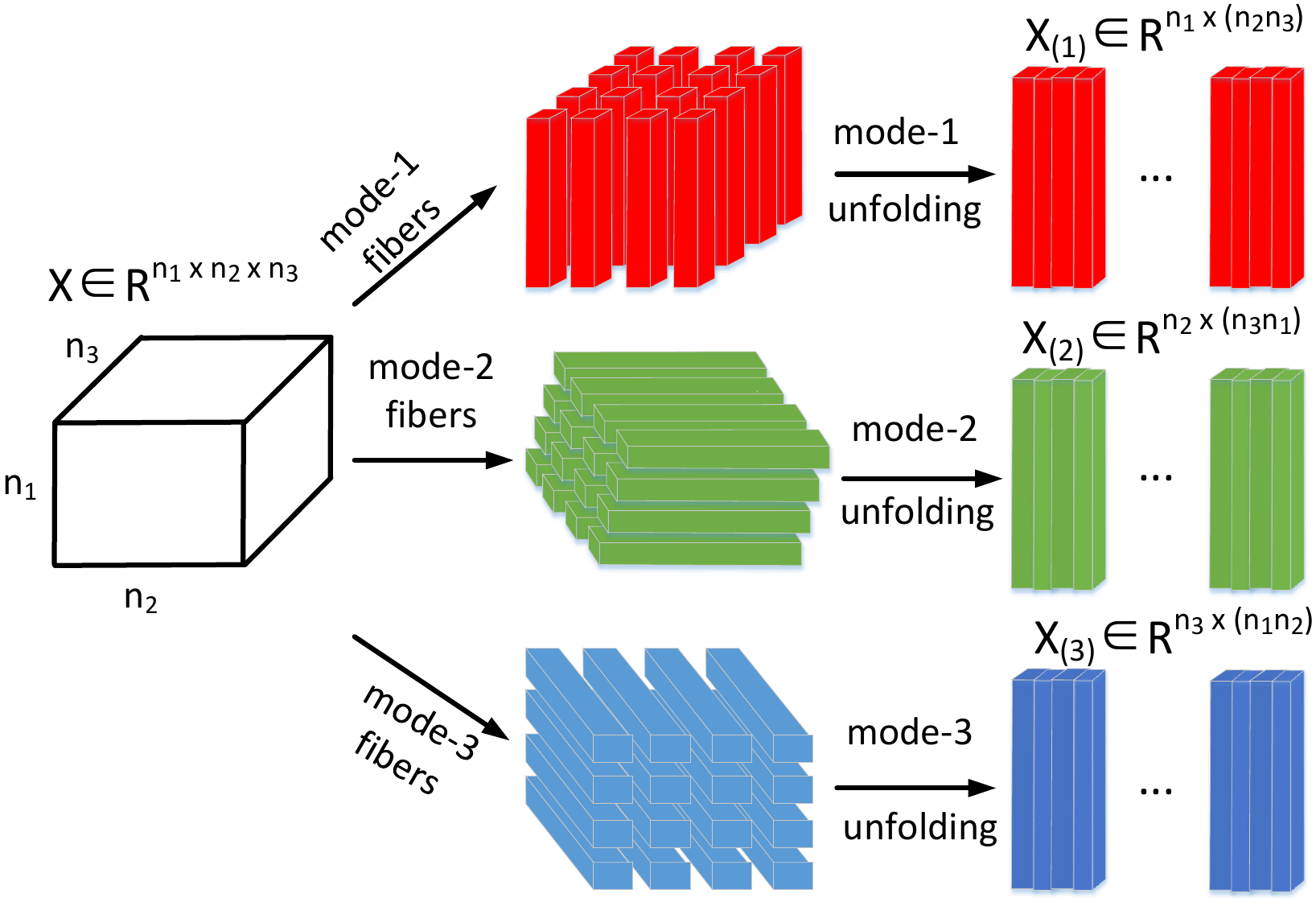}}
\caption{Left shows a 3rd order tensor. Middle describes the corresponding mode-$i$ fibers of the tensor. Fibers are the higher-order analogue of matrix rows and columns. A fiber is defined by fixing every index but one. Right describes the corresponding mode-$i$ unfolding of the tensor. The mode-$i$ unfolding of a tensor arranges the mode-$i$ fibers to be the columns of the resulting matrix.}
\label{figure:1}
\end{center}
\end{figure}

\subsection{Intermediate Variables Regularization}
\label{section:3.2}
TDB models are theoretically fully expressive \citep{trouillon2017knowledge, kazemi2018simple,balavzevic2019tucker}, which can represent any real-valued tensor. However, TDB models suffer from the overfitting problem in practice \citep{lacroix2018canonical}. Therefore, several regularization methods have been proposed, such as squared Frobenius norm method \citep{yang2014embedding} and nuclear 3-norm method \citep{lacroix2018canonical}, which minimize the norms of the embeddings $\{\bm{H},\bm{R},\bm{T}\}$ to regularize the model. Nonetheless, merely minimizing the embeddings tends to have suboptimal impacts on the model performance. To enhance the model performance, we introduce a new regularization method that minimizes the norms of the intermediate variables involved in the processes of computing $\bm{X}$. To ensure the broad applicability of our method, our regularization is rooted in the general form of TDB models Eq.(\ref{equation:2}).

To compute $\bm{X}$ in Eq.(\ref{equation:2}), we can first compute the intermediate variable $\bm{W}\times_{1}\bm{H}_{:d:}\times_{2}\bm{R}_{:d:}$, and then combine $\bm{T}_{:d:}$ to compute $\bm{X}$. Thus, in addition to minimizing the norm of $\bm{T}_{:d:}$, we also need to minimize the norm of $\bm{W}\times_{1}\bm{H}_{:d:}\times_{2}\bm{R}_{:d:}$. Since different ways of computing $\bm{X}$ can result in different intermediate variables, we fully consider the computing ways of $\bm{X}$. Eq.(\ref{equation:2}) can also be written as:
\begin{align*}
\bm{X} &= \sum_{d=1}^{D/P}(\bm{W}\times_{2}\bm{R}_{:d:}\times_{3}\bm{T}_{:d:})\times_{1}\bm{H}_{:d:},\\
\bm{X} &= \sum_{d=1}^{D/P}(\bm{W}\times_{3}\bm{T}_{:d:}\times_{1}\bm{H}_{:d:})\times_{2}\bm{R}_{:d:}
\end{align*}
Thus, we also need to minimize the norms of intermediate variables $\{\bm{W}\times_{2}\bm{R}_{:d:}\times_{3}\bm{T}_{:d:},\bm{W}\times_{3}\bm{T}_{:d:}\times_{1}\bm{H}_{:d:}\}$ and $\{\bm{H}_{:d:},\bm{R}_{:d:}\}$. In summary, we should minimize the (power of Frobenius) norms $\{\|\bm{H}_{:d:}\|_{F}^{\alpha},\|\bm{R}_{:d:}\|_{F}^{\alpha},\|\bm{T}_{:d:}\|_{F}^{\alpha}\}$ and
$\{\|\bm{W}\times_{1}\bm{H}_{:d:}\times_{2}\bm{R}_{:d:}\|_{F}^{\alpha},\|\bm{W}\times_{2}\bm{R}_{:d:}\times_{3}\bm{T}_{:d:}\|_{F}^{\alpha},\|\bm{W}\times_{3}\bm{T}_{:d:}\times_{1}\bm{H}_{:d:}\|_{F}^{\alpha}\}$, where $\alpha$ is the power of the norms.

Since computing $\bm{X}$ is equivalent to computing $\bm{X}_{(1)}$ or $\bm{X}_{(2)}$ or $\bm{X}_{(3)}$, we can also minimize the norms of intermediate variables involved in the processes of computing $\bm{X}_{(1)}$, $\bm{X}_{(2)}$ and $\bm{X}_{(3)}$, where $\bm{X}_{(n)}$ is the mode-$n$ unfolding of a tensor $\bm{X}$ \citep{kolda2009tensor}. See Figure \ref{figure:1} for an example of the notation $\bm{X}_{(n)}$. We can represent $\bm{X}_{(1)}$, $\bm{X}_{(2)}$ and $\bm{X}_{(3)}$ as \citep{kolda2009tensor}:
\begin{align*}
    \bm{X}_{(1)} &= \sum_{d=1}^{D/P}(\bm{W}\times_{1}\bm{H}_{:d:})_{(1)}(\bm{T}_{:d:}\otimes \bm{R}_{:d:})^{T},\notag\\
    \bm{X}_{(2)} &= \sum_{d=1}^{D/P}(\bm{W}\times_{2}\bm{R}_{:d:})_{(2)}(\bm{T}_{:d:}\otimes \bm{H}_{:d:})^{T},\notag\\
    \bm{X}_{(3)} &= \sum_{d=1}^{D/P}(\bm{W}\times_{3}\bm{T}_{:d:})_{(3)}(\bm{R}_{:d:}\otimes \bm{H}_{:d:})^{T}
\end{align*}
where $\otimes$ is the Kronecker product. Thus, the intermediate variables include $\{\bm{W}\times_{1}\bm{H}_{:d:},\bm{W}\times_{2}\bm{R}_{:d:},\bm{W}\times_{3}\bm{T}_{:d:}\}$ and $\{\bm{T}_{:d:}\otimes \bm{R}_{:d:},\bm{T}_{:d:}\otimes \bm{H}_{:d:},\bm{R}_{:d:}\otimes \bm{H}_{:d:}\}$. Therefore, we should minimize the (power of Frobenius) norms $\{\|\bm{W}\times_{1}\bm{H}_{:d:}\|_{F}^{\alpha},\|\bm{W}\times_{2}\bm{R}_{:d:}\|_{F}^{\alpha},\|\bm{W}\times_{3}\bm{T}_{:d:}\|_{F}^{\alpha}\}$ and $\{\|\bm{T}_{:d:}\otimes \bm{R}_{:d:}\|_{F}^{\alpha}=\|\bm{T}_{:d:}\|_{F}^{\alpha}\|\bm{R}_{:d:}\|_{F}^{\alpha},\|\bm{T}_{:d:}\otimes\bm{H}_{:d:}\|_{F}^{\alpha}=\|\bm{T}_{:d:}\|_{F}^{\alpha}\|\bm{H}_{:d:}\|_{F}^{\alpha},\|\bm{R}_{:d:}\otimes \bm{H}_{:d:}\|_{F}^{\alpha}=\|\bm{R}_{:d:}\|_{F}^{\alpha}\|\bm{H}_{:d:}\|_{F}^{\alpha}\}$.

Our Intermediate Variables Regularization (IVR) is defined as a combination of all these norms:
\begin{align}
    &\text{reg}(\bm{X})=\sum_{d=1}^{D/P}\lambda_{1}(\|\bm{H}_{:d:}\|_{F}^{\alpha}+\|\bm{R}_{:d:}\|_{F}^{\alpha}+\|\bm{T}_{:d:}\|_{F}^{\alpha})\notag\\
    &+\lambda_{2}(\|\bm{T}_{:d:}\|_{F}^{\alpha}\|\bm{R}_{:d:}\|_{F}^{\alpha}+\|\bm{T}_{:d:}\|_{F}^{\alpha}\|\bm{H}_{:d:}\|_{F}^{\alpha}+\|\bm{R}_{:d:}\|_{F}^{\alpha}\|\bm{H}_{:d:}\|_{F}^{\alpha})\notag\\
    &+\lambda_{3}(\|\bm{W}\times_{1}\bm{H}_{:d:}\|_{F}^{\alpha}+\|\bm{W}\times_{2}\bm{R}_{:d:}\|_{F}^{\alpha}+\|\bm{W}\times_{3}\bm{T}_{:d:}\|_{F}^{\alpha})\notag\\
&+\lambda_{4}(\|\bm{W}\times_{2}\bm{R}_{:d:}\times_{3}\bm{T}_{:d:}\|_{F}^{\alpha}+\|\bm{W}\times_{3}\bm{T}_{:d:}\times_{1}\bm{H}_{:d:}\|_{F}^{\alpha}+\|\bm{W}\times_{1}\bm{H}_{:d:}\times_{2}\bm{R}_{:d:}\|_{F}^{\alpha})
    \label{equation:3}
\end{align}
where $\{\lambda_{i}>0|i=1,2,3,4\}$ are the regularization coefficients.

In conclusion, our proposed regularization term is the sum of the norms of variables involved in the different ways of computing the tensor $\bm{X}$.

We can easily get the weighted version of Eq.(\ref{equation:3}), in which the regularization term corresponding to the sampled training triplets only \citep{lacroix2018canonical,zhang2020duality}. For a training triplet $(i,j,k)$, the weighted version of Eq.(\ref{equation:3}) is as follows:
\begin{align}
        &\text{reg}(\bm{X}_{ijk}) = 
        \sum_{d=1}^{D/P}\lambda_{1}(\|\bm{H}_{id:}\|_{F}^{\alpha}+\|\bm{R}_{jd:}\|_{F}^{\alpha}+\|\bm{T}_{kd:}\|_{F}^{\alpha})\notag\\
    &+\lambda_{2}(\|\bm{T}_{kd:}\|_{F}^{\alpha}\|\bm{R}_{jd:}\|_{F}^{\alpha}+\|\bm{T}_{kd:}\|_{F}^{\alpha}\|\bm{H}_{id:}\|_{F}^{\alpha}+\|\bm{R}_{jd:}\|_{F}^{\alpha}\|\bm{H}_{id:}\|_{F}^{\alpha})\notag\\
    &+\lambda_{3}(\|\bm{W}\times_{1}\bm{H}_{id:}\|_{F}^{\alpha}+\|\bm{W}\times_{2}\bm{R}_{jd:}\|_{F}^{\alpha}+\|\bm{W}\times_{3}\bm{T}_{kd:}\|_{F}^{\alpha})\notag\\
&+\lambda_{4}(\|\bm{W}\times_{2}\bm{R}_{jd:}\times_{3}\bm{T}_{kd:}\|_{F}^{\alpha}+\|\bm{W}\times_{3}\bm{T}_{kd:}\times_{1}\bm{H}_{id:}\|_{F}^{\alpha}+\|\bm{W}\times_{1}\bm{H}_{id:}\times_{2}\bm{R}_{jd:}\|_{F}^{\alpha})
    \label{equation:4}
\end{align}
The computational complexity of Eq.(\ref{equation:4}) is the same as that of Eq.(\ref{equation:1}), i.e., $\mathcal{O}(DP^2)$, which ensures that our regularization is computationally tractable.

The hyper-parameters $\lambda_{i}$ make IVR scalable. We can easily reduce the number of hyper-parameters by setting some of them zero or equal. The hyper-parameters make us able to achieve a balance between performance and efficiency as shown in Section \ref{section:4.3}. We set $\lambda_{1}=\lambda_{3}$ and $\lambda_{2}=\lambda_{4}$ for all models to reduce the number of hyper-parameters. Please refer to Appendix \ref{appendix:3} for more details about the setting of hyper-parameters.

We use the same loss function, multiclass log-loss function, as in \citep{lacroix2018canonical}. For a training triplet $(i,j,k)$, our loss function is
\begin{align*}
    \ell(\bm{X}_{ijk})=-\bm{X}_{ijk}+\log(\sum_{k^{'}=1}^{|\mathcal{E}|}\exp(\bm{X}_{ijk^{'}}))+\text{reg}(\bm{X}_{ijk})
\end{align*}
At test time, we use $\bm{X}_{i,j,:}$ to rank tail entities for a query $(i,j,?)$.

\subsection{Theoretical Analysis}
\label{section:3.3}
To support the effectiveness of our regularization IVR, we provide a deeper theoretical analysis of its properties. The establishment of the theoretical framework of IVR is inspired by Lemma \ref{lemma:a.1} in Appendix \ref{appendix:2}, which relates the Frobenius norm and the trace norm of a matrix. Lemma 1 shows that the trace norm of a matrix is an upper bound of a function of several Frobenius norms of intermediate variables, which prompts us to establish the relationship between IVR and trace norm. Based on Lemma \ref{lemma:a.1}, we prove that IVR serves as an upper bound for the overlapped trace norm of the predicted tensor, which promotes the low nuclear norm of the predicted tensor to regularize the model.

The overlapped trace norm \citep{kolda2009tensor} for a 3rd-order tensor is defined as:
\begin{align*}
L(\bm{X};\alpha):=\|\bm{X}_{(1)}\|_{*}^{\alpha/2}+\|\bm{X}_{(2)}\|_{*}^{\alpha/2}+\|\bm{X}_{(3)}\|_{*}^{\alpha/2}
\end{align*}
where $\alpha$ is the power coefficient in Eq.(\ref{equation:3}). $\|\bm{X}_{(1)}\|_{*}, \|\bm{X}_{(2)}\|_{*}$ and $\|\bm{X}_{(3)}\|_{*}$ are the matrix trace norms of $\bm{X}_{(1)}, \bm{X}_{(2)}$ and $\bm{X}_{(3)}$, respectively, which are the sums of singular values of the respective matrices. The matrix trace norm is widely used as a convex surrogate for matrix rank due to the non-differentiability of matrix rank \citep{goldfarb2014robust,lu2016tensor,mu2014square}. Thus, $L(\bm{X};\alpha)$ serves as a surrogate for $\rank(\bm{X}_{(1)})^{\alpha/2}+\rank(\bm{X}_{(2)})^{\alpha/2}+\rank(\bm{X}_{(3)})^{\alpha/2}$, where $\rank(\bm{X}_{(1)}), \rank(\bm{X}_{(2)})$ and $\rank(\bm{X}_{(3)})$ are the matrix ranks of $\bm{X}_{(1)}, \bm{X}_{(2)}$ and $\bm{X}_{(3)}$, respectively. In KGs, each head entity, each relation and each tail entity uniquely corresponds to a row of $\bm{X}_{(1)},\bm{X}_{(2)}$ and $\bm{X}_{(3)}$, respectively. Therefore, $\rank(\bm{X}_{(1)}), \rank(\bm{X}_{(2)})$ and $\rank(\bm{X}_{(3)})$ measure the correlation among the head entities, relations and tail entities, respectively. Entities or relations in KGs are highly correlated. For instance, some relations are mutual inverse relations or one relation may be a composition of another two relations \citep{zhang2021neural}. Thus, the overlapped trace norm $L(\bm{X};\alpha)$ can pose a constraint to the embeddings of entities and relations. Minimizing $L(\bm{X};\alpha)$ encourage a high correlation among entities and relations, which brings strong regularization and reduces overfitting. We next establish the relationship between our regularization term Eq.(\ref{equation:3}) and $L(\bm{X};\alpha)$ by Proposition \ref{proposition:1} and Proposition \ref{proposition:2}. We will prove that Eq.(\ref{equation:3}) is an upper bound of $L(\bm{X};\alpha)$.

\begin{proposition}
\label{proposition:1}
For any $\bm{X}$, and for any decomposition of $\bm{X}$, $\bm{X} = \sum_{d=1}^{D/P}\bm{W}\times_{1}\bm{H}_{:d:}\times_{2}\bm{R}_{:d:}\times_{3}\bm{T}_{:d:}$, we have
\begin{align}
&2\sqrt{\lambda_{1}\lambda_{4}}L(\bm{X};\alpha)\leq  
\sum_{d=1}^{D/P}\lambda_{1}(\|\bm{H}_{:d:}\|_{F}^{\alpha}+\|\bm{R}_{:d:}\|_{F}^{\alpha}+\|\bm{T}_{:d:}\|_{F}^{\alpha})\notag\\
&+\lambda_{4}(\|\bm{W}\times_{2}\bm{R}_{:d:}\times_{3}\bm{T}_{:d:}\|_{F}^{\alpha}+\|\bm{W}\times_{3}\bm{T}_{:d:}\times_{1}\bm{H}_{:d:}\|_{F}^{\alpha}+\|\bm{W}\times_{1}\bm{H}_{:d:}\times_{2}\bm{R}_{:d:}\|_{F}^{\alpha})
\label{equation:5}
\end{align}
If $\bm{X_{(1)}}=\bm{U_{1}}\bm{\Sigma}_{1}\bm{V}_{1}^{T},\bm{X_{(2)}}=\bm{U_{2}}\bm{\Sigma}_{2}\bm{V}_{2}^{T},\bm{X_{(3)}}=\bm{U_{3}}\bm{\Sigma}_{3}\bm{V}_{3}^{T}$ are compact singular value decompositions of $\bm{X_{(1)}},\bm{X_{(2)}},\bm{X_{(3)}}$ \citep{bai2000templates}, respectively, then there exists a decomposition of $\bm{X}$, $\bm{X} = \sum_{d=1}^{D/P}\bm{W}\times_{1}\bm{H}_{:d:}\times_{2}\bm{R}_{:d:}\times_{3}\bm{T}_{:d:}$, such that the two sides of Eq.(\ref{equation:5}) equal.
\end{proposition}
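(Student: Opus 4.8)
The plan is to reduce the tensor inequality Eq.(\ref{equation:5}) to three independent matrix inequalities, one per unfolding $\bm{X}_{(n)}$, and then add them; this is exactly the role of Lemma \ref{lemma:a.1}. The bridge is the factorization of each unfolding induced by the given decomposition. Starting from $\bm{X} = \sum_{d}(\bm{W}\times_2\bm{R}_{:d:}\times_3\bm{T}_{:d:})\times_1\bm{H}_{:d:}$ and the identity $(\mathcal{G}\times_1\bm{U})_{(1)} = \bm{U}\,\mathcal{G}_{(1)}$, I would write
\begin{align*}
\bm{X}_{(1)} = \sum_{d=1}^{D/P}\bm{H}_{:d:}\,(\bm{W}\times_2\bm{R}_{:d:}\times_3\bm{T}_{:d:})_{(1)},
\end{align*}
together with the two cyclic analogues for $\bm{X}_{(2)},\bm{X}_{(3)}$. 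This is the factorization that pairs $\bm{H}_{:d:}$ with $\bm{W}\times_2\bm{R}_{:d:}\times_3\bm{T}_{:d:}$ (not the one displayed in the IVR construction), which is precisely the pairing generating the $\lambda_1$ and $\lambda_4$ terms of Eq.(\ref{equation:5}). Since mode-$n$ unfolding preserves the Frobenius norm, $\|(\bm{W}\times_2\bm{R}_{:d:}\times_3\bm{T}_{:d:})_{(1)}\|_F = \|\bm{W}\times_2\bm{R}_{:d:}\times_3\bm{T}_{:d:}\|_F$, so the norms on the right of Eq.(\ref{equation:5}) already appear.

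\textbf{The inequality.} For the mode-$1$ term I would chain three elementary facts: subadditivity of the trace norm, the bound $\|\bm{A}\bm{B}\|_{*}\le\|\bm{A}\|_F\|\bm{B}\|_F$, and weighted AM--GM. First,
\begin{align*}
\|\bm{X}_{(1)}\|_{*} \le \sum_{d}\|\bm{H}_{:d:}\|_F\,\|\bm{W}\times_2\bm{R}_{:d:}\times_3\bm{T}_{:d:}\|_F .
\end{align*}
Raising to the power $\alpha/2$ and invoking subadditivity of $t\mapsto t^{\alpha/2}$ (which requires $0<\alpha\le 2$; at $\alpha=2$ this step is an equality and $L(\bm{X};2)$ is the usual overlapped trace norm), then applying the termwise bound $2\sqrt{\lambda_1\lambda_4}\,(ab)^{\alpha/2}\le\lambda_1 a^{\alpha}+\lambda_4 b^{\alpha}$, yields $2\sqrt{\lambda_1\lambda_4}\,\|\bm{X}_{(1)}\|_{*}^{\alpha/2}\le\sum_{d}(\lambda_1\|\bm{H}_{:d:}\|_F^{\alpha}+\lambda_4\|\bm{W}\times_2\bm{R}_{:d:}\times_3\bm{T}_{:d:}\|_F^{\alpha})$. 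Summing this with its cyclic analogues for modes $2$ and $3$ reproduces Eq.(\ref{equation:5}) exactly, since the $\lambda_1$ contributions collect into $\|\bm{H}_{:d:}\|_F^{\alpha}+\|\bm{R}_{:d:}\|_F^{\alpha}+\|\bm{T}_{:d:}\|_F^{\alpha}$ and the $\lambda_4$ contributions into the three doubly-contracted norms.

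\textbf{The equality construction.} For the second claim I would exhibit a single-block decomposition ($D/P=1$) built from the compact SVDs. Setting $c=(\lambda_4/\lambda_1)^{1/(2\alpha)}$ and $P=\max_n\rank(\bm{X}_{(n)})$, and zero-padding columns to width $P$, take
\begin{align*}
\bm{H}_{:1:}=c\,\bm{U}_1\bm{\Sigma}_1^{1/2},\quad \bm{R}_{:1:}=c\,\bm{U}_2\bm{\Sigma}_2^{1/2},\quad \bm{T}_{:1:}=c\,\bm{U}_3\bm{\Sigma}_3^{1/2},\quad \bm{W}=\bm{X}\times_1\bm{H}_{:1:}^{\dagger}\times_2\bm{R}_{:1:}^{\dagger}\times_3\bm{T}_{:1:}^{\dagger},
\end{align*}
with $\dagger$ the Moore--Penrose pseudoinverse. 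I would first verify this reconstructs $\bm{X}$ via the HOSVD projection identity $\bm{X}=\bm{X}\times_1(\bm{U}_1\bm{U}_1^{T})\times_2(\bm{U}_2\bm{U}_2^{T})\times_3(\bm{U}_3\bm{U}_3^{T})$, which holds because the mode-$n$ fibers of $\bm{X}$ lie in the range of $\bm{U}_n$. Unfolding $\bm{W}\times_2\bm{R}_{:1:}\times_3\bm{T}_{:1:}$ in mode $1$ and substituting $\bm{W}$ collapses the mode-$2,3$ projections and leaves $\bm{H}_{:1:}^{\dagger}\bm{X}_{(1)}=\tfrac{1}{c}\bm{\Sigma}_1^{1/2}\bm{V}_1^{T}$; hence $\|\bm{H}_{:1:}\|_F^{2}=c^{2}\|\bm{X}_{(1)}\|_{*}$ and $\|\bm{W}\times_2\bm{R}_{:1:}\times_3\bm{T}_{:1:}\|_F^{2}=c^{-2}\|\bm{X}_{(1)}\|_{*}$, making both the product bound and the AM--GM step tight for mode $1$. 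The identical computation, cyclically, gives tightness for modes $2$ and $3$ with the same scalar $c$.

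\textbf{Main obstacle.} The delicate point is the equality direction. Because Eq.(\ref{equation:5}) is the sum of three separately valid mode inequalities, overall equality forces simultaneous tightness in all three modes, and these three bounds couple the shared factors $\bm{H},\bm{R},\bm{T}$ through the single shared core $\bm{W}$ in three different pairings, which a priori looks over-constrained. The resolution is that the uniform rescaling by $c=(\lambda_4/\lambda_1)^{1/(2\alpha)}$ balances AM--GM identically in every mode, while the pseudoinverse definition of $\bm{W}$ makes each doubly-contracted tensor reduce, after the HOSVD projection identity, to $\tfrac{1}{c}\bm{\Sigma}_n^{1/2}\bm{V}_n^{T}$, so all three product bounds attain their exact Frobenius values at once. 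Checking that this one core $\bm{W}$ really realizes the correct contracted norm in each mode is the crux of the argument.
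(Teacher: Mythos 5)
Your proof is correct and follows essentially the same route as the paper's: the same factorizations of the three unfoldings, the same Cauchy--Schwarz plus AM--GM bound applied termwise (this is exactly the paper's Lemma~\ref{lemma:a.1} with $\lambda=\sqrt{\lambda_{1}/\lambda_{4}}$), and the same tight construction---your scaling $c=(\lambda_{4}/\lambda_{1})^{1/(2\alpha)}$ and pseudoinverse core $\bm{X}\times_{1}\bm{H}_{:1:}^{\dagger}\times_{2}\bm{R}_{:1:}^{\dagger}\times_{3}\bm{T}_{:1:}^{\dagger}$ coincide exactly with the paper's choices $\sqrt{\lambda_{1}/\lambda_{4}}^{\frac{-1}{\alpha}}\bm{U}_{n}\sqrt{\bm{\Sigma}_{n}}$ and $\sqrt{\lambda_{1}/\lambda_{4}}^{\frac{3}{\alpha}}\bm{X}\times_{1}\sqrt{\bm{\Sigma}_{1}^{-1}}\bm{U}_{1}^{T}\times_{2}\sqrt{\bm{\Sigma}_{2}^{-1}}\bm{U}_{2}^{T}\times_{3}\sqrt{\bm{\Sigma}_{3}^{-1}}\bm{U}_{3}^{T}$. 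The only substantive difference is that you are more careful than the paper: your caveat that subadditivity of $t\mapsto t^{\alpha/2}$ requires $\alpha\le 2$ is a genuine restriction (for $\alpha>2$ the multi-block step, and indeed the stated inequality itself, can fail, e.g.\ for a sum of many identical rank-one blocks), whereas the paper performs this step silently for arbitrary $\alpha$.
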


\begin{proposition}
\label{proposition:2}
For any $\bm{X}$, and for any decomposition of $\bm{X}$, $\bm{X} = \sum_{d=1}^{D/P}\bm{W}\times_{1}\bm{H}_{:d:}\times_{2}\bm{R}_{:d:}\times_{3}\bm{T}_{:d:}$, we have
\begin{align}
&2\sqrt{\lambda_{2}\lambda_{3}}L(\bm{X})\leq \sum_{d=1}^{D/P}
    \lambda_{2}(\|\bm{T}_{:d:}\|_{F}^{\alpha}\|\bm{R}_{:d:}\|_{F}^{\alpha}+\|\bm{T}_{:d:}\|_{F}^{\alpha}\|\bm{H}_{:d:}\|_{F}^{\alpha}+\|\bm{R}_{:d:}\|_{F}^{\alpha}\|\bm{H}_{:d:}\|_{F}^{\alpha})\notag\\
    &+\lambda_{3}(\|\bm{W}\times_{1}\bm{H}_{:d:}\|_{F}^{\alpha}+\|\bm{W}\times_{2}\bm{R}_{:d:}\|_{F}^{\alpha}+\|\bm{W}\times_{3}\bm{T}_{:d:}\|_{F}^{\alpha})
\label{equation:6}
\end{align}
And there exists some $\bm{X}^{'}$, and for any decomposition of $\bm{X}^{'}$, such that the two sides of Eq.(\ref{equation:6}) can not achieve equality.
\end{proposition}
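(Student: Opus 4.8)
The plan is to treat Eq.(\ref{equation:6}) as two separate claims: the upper bound itself, and the strictness claim that this bound cannot be attained for some $\bm{X}'$.

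For the inequality I would proceed in exact parallel with Proposition~\ref{proposition:1}, but using the ``opposite'' factorization of each unfolding. Starting from the three mode-$n$ unfolding identities already recorded in Section~\ref{section:3.2}, namely $\bm{X}_{(1)}=\sum_{d}(\bm{W}\times_{1}\bm{H}_{:d:})_{(1)}(\bm{T}_{:d:}\otimes\bm{R}_{:d:})^{T}$ and its cyclic analogues for modes $2,3$, I would view each as a product $\bm{M}=\sum_d \bm{A}_d\bm{B}_d^{T}$ whose left factor is the single-mode intermediate variable (e.g. $\bm{A}_d=(\bm{W}\times_{1}\bm{H}_{:d:})_{(1)}$) and whose right factor is the Kronecker block (e.g. $\bm{B}_d=\bm{T}_{:d:}\otimes\bm{R}_{:d:}$). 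Applying Lemma~\ref{lemma:a.1} to each unfolding with weights $(\lambda_3,\lambda_2)$ yields $2\sqrt{\lambda_2\lambda_3}\,\|\bm{X}_{(n)}\|_*^{\alpha/2}\le\sum_d(\lambda_3\|\bm{A}_d\|_F^{\alpha}+\lambda_2\|\bm{B}_d\|_F^{\alpha})$. I would then rewrite the factor norms into the intended form using two elementary identities: the mode-$n$ unfolding preserves the Frobenius norm, so $\|\bm{A}_d\|_F=\|\bm{W}\times_{1}\bm{H}_{:d:}\|_F$, and the Frobenius norm of a Kronecker product factorizes, so $\|\bm{B}_d\|_F=\|\bm{T}_{:d:}\|_F\|\bm{R}_{:d:}\|_F$ (as noted in Section~\ref{section:3.2}). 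Summing the three resulting bounds reproduces precisely the right-hand side of Eq.(\ref{equation:6}).

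For the strictness claim the guiding intuition is the contrast with Proposition~\ref{proposition:1}. There, equality is attainable because the bounding factorization keeps the raw embedding $\bm{H}_{:d:}$ as its left factor and a \emph{flexible} intermediate variable $(\bm{W}\times_{2}\bm{R}_{:d:}\times_{3}\bm{T}_{:d:})_{(1)}$ as its right factor, leaving enough freedom to realize the compact SVD of every unfolding simultaneously. In Eq.(\ref{equation:6}) the right factor is instead forced to be the Kronecker block $\bm{T}_{:d:}\otimes\bm{R}_{:d:}$, whose Gram matrix is the rigid Kronecker product $(\bm{T}_{:d:}^{T}\bm{T}_{:d:})\otimes(\bm{R}_{:d:}^{T}\bm{R}_{:d:})$. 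I would first isolate the equality conditions: tightness of Lemma~\ref{lemma:a.1} forces, per term and per mode, a proportionality of the factor Gram matrices $\bm{A}_d^{T}\bm{A}_d\propto\bm{B}_d^{T}\bm{B}_d$ together with an additive-alignment condition making the per-term trace norms add, while the weighted AM-GM step forces the scale balance $\lambda_3\|\bm{A}_d\|_F^{\alpha}=\lambda_2\|\bm{B}_d\|_F^{\alpha}$. The plan is then to exhibit one explicit small tensor $\bm{X}'$ (e.g. over $|\mathcal{E}|=|\mathcal{R}|=2$) whose three unfoldings have singular structures that cannot be matched by Kronecker-structured right factors under all these constraints at once, and to derive a contradiction from assuming equality for any decomposition.

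The hard part will be ruling out \emph{all} decompositions simultaneously, since the decomposition ranges over every number of parts $P$, every core tensor $\bm{W}$, and all embeddings, so a direct case analysis is hopeless. I would therefore recast the problem variationally: minimizing the right-hand side of Eq.(\ref{equation:6}) over admissible decompositions defines a ``Kronecker-constrained'' surrogate of the overlapped trace norm, and the inequality just proven says this surrogate is always at least $2\sqrt{\lambda_2\lambda_3}L(\bm{X}')$. Strictness then amounts to showing that, for the chosen $\bm{X}'$, the SVD-optimal factorization of at least one unfolding provably cannot use Kronecker-structured right factors while respecting the cross-mode and scale-balance constraints, so the constrained minimum strictly exceeds the unconstrained overlapped trace norm. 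Making this obstruction genuinely rigorous, rather than merely generic, is the main obstacle; I expect it to reduce to showing that the required proportionality $\bm{A}_d^{T}\bm{A}_d\propto(\bm{T}_{:d:}^{T}\bm{T}_{:d:})\otimes(\bm{R}_{:d:}^{T}\bm{R}_{:d:})$ is incompatible with the singular vectors of $\bm{X}'_{(n)}$ for a concrete, well-chosen $\bm{X}'$.
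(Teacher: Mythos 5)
Your proof of the inequality itself is exactly the paper's argument: unfold along each mode so that $\bm{X}_{(n)}$ is a sum of products whose left factor is the single-mode intermediate variable $(\bm{W}\times_{1}\bm{H}_{:d:})_{(1)}$ (and cyclic analogues) and whose right factor is the Kronecker block, apply Lemma \ref{lemma:a.1} with $\lambda=\sqrt{\lambda_{2}/\lambda_{3}}$, and use $\|\bm{A}\otimes\bm{B}\|_{F}=\|\bm{A}\|_{F}\|\bm{B}\|_{F}$; this half is fine.

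The strictness half, however, has a genuine gap. You correctly guess the shape of the witness (a small $2\times2\times2$ tensor; the paper takes $\bm{X}'_{111}=\bm{X}'_{222}=1$ and all other entries $0$, so that every unfolding is the same rank-$2$ matrix), but the mechanism you propose for excluding \emph{all} decompositions --- Gram-matrix proportionality shown to be incompatible with the singular vectors of $\bm{X}'_{(n)}$, recast variationally as a ``Kronecker-constrained'' surrogate norm --- is precisely the part you admit you cannot yet make rigorous, and it is far harder than what is actually needed. The paper's obstruction is arithmetic, not geometric. Equality in Lemma \ref{lemma:a.1} has the necessary condition $\rank(\bm{U})=\rank(\bm{V})=\rank(\bm{Z})$, so equality in Eq.(\ref{equation:6}) would force the Kronecker right factors to satisfy $\rank(\bm{T}_{:1:}\otimes\bm{R}_{:1:})=\rank(\bm{T}_{:1:}\otimes\bm{H}_{:1:})=\rank(\bm{R}_{:1:}\otimes\bm{H}_{:1:})=2$. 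Since $\rank(\bm{A}\otimes\bm{B})=\rank(\bm{A})\rank(\bm{B})$, this is the integer system $\rank(\bm{T})\rank(\bm{R})=\rank(\bm{T})\rank(\bm{H})=\rank(\bm{R})\rank(\bm{H})=2$, whose product gives $\bigl(\rank(\bm{H})\rank(\bm{R})\rank(\bm{T})\bigr)^{2}=8$ --- impossible for integers (equivalently, the first two equations force $\rank(\bm{R})=\rank(\bm{H})$ and then $\rank(\bm{R})^{2}=2$). This single integrality argument rules out every decomposition at once --- any $P$, any $\bm{W}$, any embeddings --- which is exactly the universal quantifier you flagged as the main obstacle, and it requires no analysis of singular vectors, cross-mode alignment, or AM--GM balance. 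Without this rank-multiplicativity observation (or an equally airtight substitute), your strictness claim remains unproven.
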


Please refer to Appendix \ref{appendix:2} for the proofs. Proposition \ref{proposition:1} establishes that the r.h.s. of Eq.(\ref{equation:5}) provides a tight upper bound for $2\sqrt{\lambda_{1}\lambda_{4}}L(\bm{X};\alpha)$, while Proposition \ref{proposition:2} demonstrates that the r.h.s. of Eq.(\ref{equation:4}) is an upper bound of $2\sqrt{\lambda_{2}\lambda_{3}}L(\bm{X};\alpha)$, but this bound is not always tight. Our proposed regularization term, Eq.(\ref{equation:3}), combines these two upper limits by adding the r.h.s. of Eq.(\ref{equation:5}) and the r.h.s. of Eq.(\ref{equation:6}). As a result, minimizing Eq.(\ref{equation:3}) can effectively minimize $L(\bm{X};\alpha)$ to regularize the model.

The two sides of Eq.(\ref{equation:5}) can achieve equality if $P=D$, meaning that the TDB model is TuckER model \citep{balavzevic2019tucker}. Although $L(\bm{X};\alpha)$ may not always serve as a tight lower bound of the r.h.s. of Eq.(\ref{equation:5}) for TDB models other than TuckER model, it remains a common lower bound for all TDB models. To obtain a more tight lower bound, the exact values of $P$ and $\bm{W}$ are required. For example, in the case of CP model \citep{lacroix2018canonical} ($P=1$ and $\bm{W}=1$), the nuclear 2-norm $\|\bm{X}\|_{*}$ is a more tight lower bound. The nuclear 2-norm is defined as follows:
\begin{align*}
\|\bm{X}\|_{*}:=\min\{\sum_{d=1}^{D}\|\bm{H}_{:d1}\|_{F}\|\bm{R}_{:d1}\|_{F}\|\bm{T}_{:d1}\|_{F}|\bm{X} = \sum_{d=1}^{D}\bm{W}\times_{1}\bm{H}_{:d1}\times_{2}\bm{R}_{:d1}\times_{3}\bm{T}_{:d1}\}
\end{align*}
where $\bm{W}=1$. The following proposition establishes the relationship between $\|\bm{X}\|_{*}$ and $L(\bm{X};2)$:
\begin{proposition}
For any $\bm{X}$, and for any decomposition of $\bm{X}$, $\bm{X} = \sum_{d=1}^{D}\bm{W}\times_{1}\bm{H}_{:d1}\times_{2}\bm{R}_{:d1}\times_{3}\bm{T}_{:d1}$, and $\bm{W}=1$, we have
\begin{align*}
&2\sqrt{\lambda_{1}\lambda_{4}}L(\bm{X};2)\leq  6\sqrt{\lambda_{1}\lambda_{4}}\|\bm{X}\|_{*}\leq
\sum_{d=1}^{D}\lambda_{1}(\|\bm{H}_{:d1}\|_{F}^{2}+\|\bm{R}_{:d1}\|_{F}^{2}+\|\bm{T}_{:d1}\|_{F}^{2a})\\
&+\lambda_{4}(\|\bm{W}\times_{2}\bm{R}_{:d1}\times_{3}\bm{T}_{:d1}\|_{F}^{2}+\|\bm{W}\times_{3}\bm{T}_{:d1}\times_{1}\bm{H}_{:d1}\|_{F}^{2}+\|\bm{W}\times_{1}\bm{H}_{:d1}\times_{2}\bm{R}_{:d1}\|_{F}^{2})
\end{align*}
\end{proposition}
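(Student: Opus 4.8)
The plan is to prove the two inequalities in the chain separately, exploiting the fact that $L(\bm{X};2)$ and $\|\bm{X}\|_{*}$ are intrinsic to $\bm{X}$ and independent of the chosen decomposition, whereas the r.h.s.\ depends on the particular $\{\bm{H}_{:d1},\bm{R}_{:d1},\bm{T}_{:d1}\}$. For the first inequality $2\sqrt{\lambda_{1}\lambda_{4}}L(\bm{X};2)\le 6\sqrt{\lambda_{1}\lambda_{4}}\|\bm{X}\|_{*}$ it suffices to show $L(\bm{X};2)\le 3\|\bm{X}\|_{*}$, i.e. $\|\bm{X}_{(n)}\|_{*}\le\|\bm{X}\|_{*}$ for each mode $n$. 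Fixing any decomposition $\bm{X}=\sum_{d}\bm{W}\times_{1}\bm{H}_{:d1}\times_{2}\bm{R}_{:d1}\times_{3}\bm{T}_{:d1}$ with $\bm{W}=1$, the mode-1 unfolding is $\bm{X}_{(1)}=\sum_{d}\bm{H}_{:d1}(\bm{T}_{:d1}\otimes\bm{R}_{:d1})^{T}$, a sum of rank-one matrices. Applying subadditivity of the trace norm together with $\|\bm{T}_{:d1}\otimes\bm{R}_{:d1}\|_{F}=\|\bm{T}_{:d1}\|_{F}\|\bm{R}_{:d1}\|_{F}$ gives $\|\bm{X}_{(1)}\|_{*}\le\sum_{d}\|\bm{H}_{:d1}\|_{F}\|\bm{R}_{:d1}\|_{F}\|\bm{T}_{:d1}\|_{F}$; taking the infimum over all such decompositions yields $\|\bm{X}_{(1)}\|_{*}\le\|\bm{X}\|_{*}$, and the identical argument handles modes $2$ and $3$.

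For the second inequality $6\sqrt{\lambda_{1}\lambda_{4}}\|\bm{X}\|_{*}\le\text{r.h.s.}$ I would first simplify the r.h.s.\ using $\bm{W}=1$, so that each mode product collapses to an outer product whose Frobenius norm factors, e.g. $\|\bm{W}\times_{2}\bm{R}_{:d1}\times_{3}\bm{T}_{:d1}\|_{F}=\|\bm{R}_{:d1}\|_{F}\|\bm{T}_{:d1}\|_{F}$. Writing $a=\|\bm{H}_{:d1}\|_{F}$, $b=\|\bm{R}_{:d1}\|_{F}$, $c=\|\bm{T}_{:d1}\|_{F}$, the $d$-th summand of the r.h.s.\ becomes $\lambda_{1}(a^{2}+b^{2}+c^{2})+\lambda_{4}(b^{2}c^{2}+c^{2}a^{2}+a^{2}b^{2})$. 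The decisive step is an AM-GM pairing of each degree-2 term against the complementary degree-4 term: $\lambda_{1}a^{2}+\lambda_{4}b^{2}c^{2}\ge 2\sqrt{\lambda_{1}\lambda_{4}}\,abc$, and cyclically for the two remaining pairs. Summing the three inequalities shows the $d$-th summand is at least $6\sqrt{\lambda_{1}\lambda_{4}}\,abc=6\sqrt{\lambda_{1}\lambda_{4}}\|\bm{H}_{:d1}\|_{F}\|\bm{R}_{:d1}\|_{F}\|\bm{T}_{:d1}\|_{F}$.

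Summing over $d$ and invoking $\|\bm{X}\|_{*}\le\sum_{d}\|\bm{H}_{:d1}\|_{F}\|\bm{R}_{:d1}\|_{F}\|\bm{T}_{:d1}\|_{F}$ — valid because the given decomposition is one feasible point in the minimum defining $\|\bm{X}\|_{*}$ — then closes the chain. The one genuinely non-mechanical step is identifying the correct cyclic AM-GM pairing in the second inequality: matching the quadratic $\lambda_{1}$-terms with the quartic $\lambda_{4}$-terms is exactly what makes the geometric means reduce to $abc$ and produces the constant $6\sqrt{\lambda_{1}\lambda_{4}}$ precisely, so this pairing is both the crux of the argument and the explanation of why that particular constant appears.
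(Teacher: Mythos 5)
Your proof is correct, but it takes a genuinely more elementary and self-contained route than the paper's. For the first inequality the paper obtains the mode-wise bound $\|\bm{X}_{(n)}\|_{*}\leq\|\bm{X}\|_{*}$ indirectly: it invokes the variational characterization of the trace norm (Lemma~\ref{lemma:a.1}, via Proposition~\ref{proposition:a.1}) to write $2\sqrt{\lambda_{1}\lambda_{4}}\|\bm{X}_{(n)}\|_{*}$ as the minimum of the paired regularizer over \emph{all} decompositions $S_{1}$, then uses a rescaling (balancing) argument to show that the same minimum restricted to CP decompositions $S_{2}$ equals $2\sqrt{\lambda_{1}\lambda_{4}}\|\bm{X}\|_{*}$, and concludes from the inclusion $S_{2}\subset S_{1}$. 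You instead get the same bound directly by writing $\bm{X}_{(1)}=\sum_{d}\bm{H}_{:d1}(\bm{T}_{:d1}\otimes\bm{R}_{:d1})^{T}$ as a sum of rank-one matrices and using subadditivity of the trace norm together with $\|\bm{u}\bm{v}^{T}\|_{*}=\|\bm{u}\|_{2}\|\bm{v}\|_{2}$, which requires no variational machinery at all. For the second inequality the paper applies Cauchy--Schwarz across the sum over $d$ followed by AM--GM at the aggregate level, again so as to obtain exact minimum characterizations of the form $2\sqrt{\lambda_{1}\lambda_{4}}\|\bm{X}\|_{*}=\min_{S_{2}}\sum_{d}(\lambda_{1}\|\bm{H}_{:d1}\|_{F}^{2}+\lambda_{4}\|\bm{R}_{:d1}\|_{F}^{2}\|\bm{T}_{:d1}\|_{F}^{2})$, whereas you apply AM--GM term by term, $\lambda_{1}a_{d}^{2}+\lambda_{4}b_{d}^{2}c_{d}^{2}\geq 2\sqrt{\lambda_{1}\lambda_{4}}a_{d}b_{d}c_{d}$, cyclically over the three pairings, and then sum and use feasibility of the given decomposition in the minimum defining $\|\bm{X}\|_{*}$. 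What the paper's heavier route buys is the stronger equality statements (tightness of each pairing as a minimum over $S_{2}$), which plug into its surrounding propositions; what your route buys is brevity, transparency about where the constant $6\sqrt{\lambda_{1}\lambda_{4}}$ comes from, and complete independence from Lemma~\ref{lemma:a.1} --- at the cost of not exhibiting when the bounds are attained. (You also implicitly read the exponent $2a$ in the statement as the typo it is for $2$, which matches the paper's appendix version.)
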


Although the r.h.s. of Eq.(\ref{equation:6}) is not always a tight upper bound for $2\sqrt{\lambda_{2}\lambda_{3}}L(\bm{X};\alpha)$ like the r.h.s. of Eq.(\ref{equation:5}), we observe that minimizing the combination of these two bounds, Eq.(\ref{equation:3}), can lead to better performance. The reason behind this is that the r.h.s. of Eq.(\ref{equation:5}) is neither an upper bound nor a lower bound of the r.h.s. of Eq.(\ref{equation:6}) for all $\bm{X}$. We present Proposition \ref{proposition:a.4} in Appendix \ref{appendix:2} to prove this claim.

\begin{table*}[t]
\begin{center}
\caption{Knowledge graph completion results on WN18RR, FB15k-237 and YGAO3-10 datasets.}
\label{table:1}
\begin{small}
\begin{tabular}{llllllllllll}
\toprule
\multicolumn{3}{c}{}
&\multicolumn{3}{c}{\bf WN18RR}&\multicolumn{3}{c}{\bf FB15k-237}  &\multicolumn{3}{c}{\bf YAGO3-10}\\
\cmidrule(r){4-6}\cmidrule(r){7-9}\cmidrule(r){10-12}\\
\multicolumn{3}{c}{} &MRR &H@1 &H@10 &MRR &H@1 &H@10 &MRR &H@1 &H@10\\
\midrule
\multicolumn{3}{l}{CP} &0.438 &0.416 &0.485 &0.332 &0.244 &0.507 &0.567 &0.495 &0.696\\
\multicolumn{3}{l}{CP-F2} &0.449 &0.420 &0.506 &0.331 &0.243 &0.507 &0.570 &0.499 &0.699\\
\multicolumn{3}{l}{CP-N3} &0.469 &0.432 &0.541 &0.355 &0.261 &0.542 &0.575 &0.504 &0.703\\
\multicolumn{3}{l}{CP-DURA} &0.471 &0.433 &0.545 &0.364 &0.269 &0.554 &0.577 &0.504 &0.704\\
\multicolumn{3}{l}{CP-IVR} &\textbf{0.478} &\textbf{0.437} &\textbf{0.554} &\textbf{0.365} &\textbf{0.270} &\textbf{0.555} &\textbf{0.579} &\textbf{0.507} &\textbf{0.708}\\
\midrule
\multicolumn{3}{l}{ComplEx} &0.464 &0.431 &0.526 &0.348 &0.256 &0.531 &0.574 &0.501 &0.704\\
\multicolumn{3}{l}{ComplEx-F2} &0.467 &0.431 &0.538 &0.349 &0.260 &0.529 &0.576 &0.502 &0.709\\
\multicolumn{3}{l}{ComplEx-N3}&0.491 &0.445 &0.578 &0.367 &0.272 &0.559 &0.577 &0.504 &0.707\\
\multicolumn{3}{l}{ComplEx-DURA}&0.484 &0.440 &0.571 &\textbf{0.372} &\textbf{0.277} &\textbf{0.563} &0.585 &0.512 &\textbf{0.714}\\
\multicolumn{3}{l}{ComplEx-IVR} &\textbf{0.494} &\textbf{0.449} &\textbf{0.581} &0.370 &0.275 &0.561  &\textbf{0.586} &\textbf{0.515} &\textbf{0.714}\\
\midrule
\multicolumn{3}{l}{SimplE} &0.443 &0.421 &0.488 &0.337 &0.248 &0.514 &0.565 &0.491 &0.696\\
\multicolumn{3}{l}{SimplE-F2} &0.451 &0.422 &0.506 &0.338 &0.249 &0.514 &0.566 &0.494 &0.699\\
\multicolumn{3}{l}{SimplE-IVR} &\textbf{0.470} &\textbf{0.436} &\textbf{0.537} &\textbf{0.357} &\textbf{0.264} &\textbf{0.544} &\textbf{0.578} &\textbf{0.504} &\textbf{0.707}\\
\midrule
\multicolumn{3}{l}{ANALOGY} &0.458 &0.424 &0.526 &0.348 &0.255 &0.530 &0.573 &0.502 &0.703\\
\multicolumn{3}{l}{ANALOGY-F2} &0.467 &0.434 &0.533 &0.349 &0.258 &0.529 &0.573 &0.501 &0.705\\
\multicolumn{3}{l}{ANALOGY-IVR} &\textbf{0.482} &\textbf{0.439} &\textbf{0.568} &\textbf{0.367} &\textbf{0.272} &\textbf{0.558} &\textbf{0.582} &\textbf{0.509} &\textbf{0.713}\\
\midrule
\multicolumn{3}{l}{QuatE} &0.460 &0.430 &0.518 &0.349 &0.258 &0.530 &0.566 &0.489 &0.702\\
\multicolumn{3}{l}{QuatE-F2} &0.468 &0.435 &0.534 &0.349 &0.259 &0.529 &0.566 &0.489 &0.705\\
\multicolumn{3}{l}{QuatE-IVR} &\textbf{0.493} &\textbf{0.447} &\textbf{0.580}  &\textbf{0.369} &\textbf{0.274} &\textbf{0.561} &\textbf{0.582} &\textbf{0.509} &\textbf{0.712}\\
\midrule
\multicolumn{3}{l}{TuckER} &0.446 &0.423 &0.490 &0.321 &0.233 &0.498 &0.551 &0.476 &0.689\\
\multicolumn{3}{l}{TuckER-F2} &0.449 &0.423 &0.496 &0.327 &0.239 &0.503 &0.566 &0.492 &0.700\\
\multicolumn{3}{l}{TuckER-IVR} &\textbf{0.501} &\textbf{0.460} &\textbf{0.579} &\textbf{0.368} &\textbf{0.274} &\textbf{0.555} &\textbf{0.581} &\textbf{0.508} &\textbf{0.712}\\
\bottomrule
\end{tabular}
\end{small}
\end{center}
\end{table*}

\section{Experiments}
We first introduce the experimental settings in Section \ref{section:4.1} and show the results in Section \ref{section:4.2}. We next conduct ablation studies in Section \ref{section:4.3}. Finally, we verify the reliability of our proposed upper bounds in Section \ref{section:4.4}. Please refer to Appendix \ref{appendix:3} for more experimental details.

\subsection{Experimental Settings}
\label{section:4.1}
\paragraph{Datasets}
We evaluate the models on three KGC datasets, WN18RR \citep{dettmers2018convolutional}, FB15k-237 \citep{toutanova2015representing} and YAGO3-10 \citep{dettmers2018convolutional}.
\paragraph{Models}
We use CP, ComplEx, SimplE, ANALOGY, QuatE and TuckER as baselines. We denote CP with squared Frobenius norm method \citep{yang2014embedding} as CP-F2, CP with N3 method \citep{lacroix2018canonical} as CP-N3, CP with DURA method \citep{zhang2020duality} as CP-DURA and CP with IVR method as CP-IVR. The notations for other models are similar to the notations for CP.
\paragraph{Evaluation Metrics}
We use the filtered MRR and Hits@N (H@N) \citep{bordes2013translating} as evaluation metrics and choose the hyper-parameters with the best filtered MRR on the validation set. We run each model three times with different random seeds and report the mean results.

\subsection{Results}
\label{section:4.2}
See Table \ref{table:1} for the results. For CP and ComplEx, the models that N3 and DURA are suitable, the results show that N3 enhances the models more than F2, and DURA outperforms both F2 and N3, leading to substantial improvements. IVR achieves better performance than DURA on WN18RR dataset and achieves similar performance to DURA on FB15k-237 and YAGO3-10 dataset. For SimplE, ANALOGY, QuatE, and TuckER, the improvement offered by F2 is minimal, while IVR significantly boosts model performance. In summary, these results demonstrate the effectiveness and generality of IVR.

\begin{table}[t]
\begin{center}
\caption{The results on WN18RR and FB15k-237 datasets with different upper bounds.}
\label{table:2}
\begin{small}
\begin{tabular}{llllllllllll}
\toprule
\multicolumn{3}{c}{}&\multicolumn{3}{c}{\bf WN18RR}  &\multicolumn{3}{c}{\bf FB15k-237} &\multicolumn{3}{c}{\bf YAGO3-10}\\
\cmidrule(r){4-6}\cmidrule(r){7-9}\cmidrule(r){10-12}\\
\multicolumn{3}{c}{} &MRR &H@1 &H@10 &MRR &H@1 &H@10 &MRR &H@1 &H@10\\
\midrule
\multicolumn{3}{l}{TuckER} &0.446 &0.423 &0.490 &0.321 &0.233 &0.498 &0.551 &0.476 &0.689\\
\multicolumn{3}{l}{TuckER-IVR-1}  &0.497 &0.455 &0.578 &0.366 &0.272 &0.553 &0.576 &0.501 &0.709\\
\multicolumn{3}{l}{TuckER-IVR-2}  &0.459 &0.423 &0.518 &0.336 &0.241 &0.518 &0.568 &0.493 &0.700\\
\multicolumn{3}{l}{TuckER-IVR}    &\textbf{0.501} &\textbf{0.460} &\textbf{0.579} &\textbf{0.368} &\textbf{0.274} &\textbf{0.555} &\textbf{0.581} &\textbf{0.508} &\textbf{0.712}\\
\bottomrule
\end{tabular}
\end{small}
\end{center}
\end{table}

\begin{table}[t]
\begin{center}
\caption{The results on Kinship dataset with different upper bounds.}
\label{table:3}
\begin{small}
\begin{tabular}{lllllll}
\toprule
\multicolumn{3}{c}{}  &$\|\bm{X}_{(1)}\|_{*}$ &$\|\bm{X}_{(2)}\|_{*}$ &$\|\bm{X}_{(3)}\|_{*}$ &$L(\bm{X})$\\
\midrule
\multicolumn{3}{l}{TuckER}    &10,719 &8,713 &11,354 &30,786\\
\multicolumn{3}{l}{TuckER-IVR-1}             &5,711 &5,021 &6,271 &17,003\\
\multicolumn{3}{l}{TuckER-IVR-2}             &6,145  &5,441 &6,744  &18,330\\
\multicolumn{3}{l}{TuckER-IVR}               &\textbf{3,538}  &\textbf{3,511} &\textbf{3,988}  &\textbf{11,037}\\
\bottomrule
\end{tabular}
\end{small}
\end{center}
\end{table}

\subsection{Ablation Studies}
\label{section:4.3}
We conduct ablation studies to examine the effectiveness of the upper bounds. Our notations for the models are as follows: the model with upper bound Eq.(\ref{equation:5}) is denoted as IVR-1, model with upper bound Eq.(\ref{equation:6}) as IVR-2, and model with upper bound Eq.(\ref{equation:3}) as IVR. 

See Table \ref{table:2} for the results. We use TuckER as the baseline. IVR with only 1 regularization coefficient, IVR-1, achieves comparable results with vanilla IVR, which shows that IVR can still perform well with fewer hyper-parameters. IVR-1  outperforms IVR-2 due to the tightness of Eq.(\ref{equation:5}). 

\subsection{Upper Bounds}
\label{section:4.4}
We verify that minimizing the upper bounds can effectively minimize $L(\bm{X};\alpha)$. $L(\bm{X};\alpha)$ can measure the correlation of $\bm{X}$. Lower values of $L(X;\alpha)$ encourage higher correlations among entities and relations, and thus bring a strong constraint for regularization. Upon training the models, we compute $L(\bm{X};\alpha)$. As computing $L(\bm{X};\alpha)$ for large KGs is impractical, we conduct experiments on a small KG dataset, Kinship \citep{kok2007statistical}, which consists of 104 entities and 25 relations. We use TuckER as the baseline and compare it against IVR-1 (Eq.(\ref{equation:5})), IVR-2 (Eq.(\ref{equation:6})), and IVR (Eq.(\ref{equation:3})).

See Table \ref{table:3} for the results. Our results demonstrate that the upper bounds are effective in minimizing $L(\bm{X};\alpha)$. All three upper bounds can lead to a decrease of $L(\bm{X};\alpha)$, achieving better performance (Table \ref{table:2}) by more effective regularization. The $L(\bm{X};\alpha)$ of IVR-1 is smaller than that of IVR-2 because the upper bound in Eq.(\ref{equation:5}) is tight. IVR, which combines IVR-1 and IVR-2, produces the most reduction of $L(\bm{X};\alpha)$. This finding suggests that combining the two upper bounds can be more effective. Overall, our experimental results confirm the reliability of our theoretical analysis.

\section{Conclusion}
\label{section:5}
In this paper, we undertake an analysis of TDB models in KGC. We first offer a summary of TDB models and derive a general form that facilitates further analysis. TDB models often suffer from overfitting, and thus, we propose a regularization based on our derived general form. It is applicable to most TDB models and improve the model performance. We further propose a theoretical analysis to support our regularization and experimentally validate our theoretical analysis. Our regularization is limited to TDB models, hoping that more regularization will be proposed in other types of models, such as translation-based models and neural networks models. We also intend to explore how to apply our regularization to other fields, such as tensor completion \citep{song2019tensor}.

\section*{Acknowledgements}
This work is supported by the National Key Research and Development Program of China (2020AAA0106000), the National Natural Science Foundation of China (U19A2079), and the CCCD Key Lab of Ministry of Culture and Tourism.

\bibliographystyle{plainnat} 
\bibliography{cite.bib}

\begin{thebibliography}{29}
\providecommand{\natexlab}[1]{#1}
\providecommand{\url}[1]{\texttt{#1}}
\expandafter\ifx\csname urlstyle\endcsname\relax
  \providecommand{\doi}[1]{doi: #1}\else
  \providecommand{\doi}{doi: \begingroup \urlstyle{rm}\Url}\fi

\bibitem[Bai et~al.(2000)Bai, Demmel, Dongarra, Ruhe, and van~der Vorst]{bai2000templates}
Zhaojun Bai, James Demmel, Jack Dongarra, Axel Ruhe, and Henk van~der Vorst.
\newblock \emph{Templates for the solution of algebraic eigenvalue problems: a practical guide}.
\newblock SIAM, 2000.

\bibitem[Bala{\v{z}}evi{\'c} et~al.(2019)Bala{\v{z}}evi{\'c}, Allen, and Hospedales]{balavzevic2019tucker}
Ivana Bala{\v{z}}evi{\'c}, Carl Allen, and Timothy Hospedales.
\newblock Tucker: Tensor factorization for knowledge graph completion.
\newblock In \emph{Proceedings of the 2019 Conference on Empirical Methods in Natural Language Processing and the 9th International Joint Conference on Natural Language Processing (EMNLP-IJCNLP)}, pages 5185--5194, 2019.

\bibitem[Bergstra et~al.(2011)Bergstra, Bardenet, Bengio, and K{\'e}gl]{bergstra2011algorithms}
James Bergstra, R{\'e}mi Bardenet, Yoshua Bengio, and Bal{\'a}zs K{\'e}gl.
\newblock Algorithms for hyper-parameter optimization.
\newblock \emph{Advances in neural information processing systems}, 24, 2011.

\bibitem[Bordes et~al.(2013)Bordes, Usunier, Garcia-Duran, Weston, and Yakhnenko]{bordes2013translating}
Antoine Bordes, Nicolas Usunier, Alberto Garcia-Duran, Jason Weston, and Oksana Yakhnenko.
\newblock Translating embeddings for modeling multi-relational data.
\newblock \emph{Advances in neural information processing systems}, 26, 2013.

\bibitem[Ciliberto et~al.(2017)Ciliberto, Stamos, and Pontil]{ciliberto2017reexamining}
Carlo Ciliberto, Dimitris Stamos, and Massimiliano Pontil.
\newblock Reexamining low rank matrix factorization for trace norm regularization.
\newblock \emph{arXiv preprint arXiv:1706.08934}, 2017.

\bibitem[De~Lathauwer(2008)]{de2008decompositions}
Lieven De~Lathauwer.
\newblock Decompositions of a higher-order tensor in block terms—part i: Lemmas for partitioned matrices.
\newblock \emph{SIAM Journal on Matrix Analysis and Applications}, 30\penalty0 (3):\penalty0 1022--1032, 2008.

\bibitem[Dettmers et~al.(2018)Dettmers, Minervini, Stenetorp, and Riedel]{dettmers2018convolutional}
Tim Dettmers, Pasquale Minervini, Pontus Stenetorp, and Sebastian Riedel.
\newblock Convolutional 2d knowledge graph embeddings.
\newblock In \emph{Thirty-second AAAI conference on artificial intelligence}, 2018.

\bibitem[Duchi et~al.(2011)Duchi, Hazan, and Singer]{duchi2011adaptive}
John Duchi, Elad Hazan, and Yoram Singer.
\newblock Adaptive subgradient methods for online learning and stochastic optimization.
\newblock \emph{Journal of machine learning research}, 12\penalty0 (7), 2011.

\bibitem[Goldfarb and Qin(2014)]{goldfarb2014robust}
Donald Goldfarb and Zhiwei Qin.
\newblock Robust low-rank tensor recovery: Models and algorithms.
\newblock \emph{SIAM Journal on Matrix Analysis and Applications}, 35\penalty0 (1):\penalty0 225--253, 2014.

\bibitem[Hitchcock(1927)]{hitchcock1927expression}
Frank~L Hitchcock.
\newblock The expression of a tensor or a polyadic as a sum of products.
\newblock \emph{Journal of Mathematics and Physics}, 6\penalty0 (1-4):\penalty0 164--189, 1927.

\bibitem[Ji et~al.(2021)Ji, Pan, Cambria, Marttinen, and Philip]{ji2021survey}
Shaoxiong Ji, Shirui Pan, Erik Cambria, Pekka Marttinen, and S~Yu Philip.
\newblock A survey on knowledge graphs: Representation, acquisition, and applications.
\newblock \emph{IEEE transactions on neural networks and learning systems}, 33\penalty0 (2):\penalty0 494--514, 2021.

\bibitem[Kazemi and Poole(2018)]{kazemi2018simple}
Seyed~Mehran Kazemi and David Poole.
\newblock Simple embedding for link prediction in knowledge graphs.
\newblock In \emph{Proceedings of the 32nd International Conference on Neural Information Processing Systems}, pages 4289--4300, 2018.

\bibitem[Kok and Domingos(2007)]{kok2007statistical}
Stanley Kok and Pedro Domingos.
\newblock Statistical predicate invention.
\newblock In \emph{Proceedings of the 24th international conference on Machine learning}, pages 433--440, 2007.

\bibitem[Kolda and Bader(2009)]{kolda2009tensor}
Tamara~G Kolda and Brett~W Bader.
\newblock Tensor decompositions and applications.
\newblock \emph{SIAM review}, 51\penalty0 (3):\penalty0 455--500, 2009.

\bibitem[Lacroix et~al.(2018)Lacroix, Usunier, and Obozinski]{lacroix2018canonical}
Timoth{\'e}e Lacroix, Nicolas Usunier, and Guillaume Obozinski.
\newblock Canonical tensor decomposition for knowledge base completion.
\newblock In \emph{International Conference on Machine Learning}, pages 2863--2872. PMLR, 2018.

\bibitem[Liu et~al.(2017)Liu, Wu, and Yang]{liu2017analogical}
Hanxiao Liu, Yuexin Wu, and Yiming Yang.
\newblock Analogical inference for multi-relational embeddings.
\newblock In \emph{International conference on machine learning}, pages 2168--2178. PMLR, 2017.

\bibitem[Lu et~al.(2016)Lu, Feng, Chen, Liu, Lin, and Yan]{lu2016tensor}
Canyi Lu, Jiashi Feng, Yudong Chen, Wei Liu, Zhouchen Lin, and Shuicheng Yan.
\newblock Tensor robust principal component analysis: Exact recovery of corrupted low-rank tensors via convex optimization.
\newblock In \emph{Proceedings of the IEEE conference on computer vision and pattern recognition}, pages 5249--5257, 2016.

\bibitem[Mu et~al.(2014)Mu, Huang, Wright, and Goldfarb]{mu2014square}
Cun Mu, Bo~Huang, John Wright, and Donald Goldfarb.
\newblock Square deal: Lower bounds and improved relaxations for tensor recovery.
\newblock In \emph{International conference on machine learning}, pages 73--81. PMLR, 2014.

\bibitem[Nickel et~al.(2011)Nickel, Tresp, and Kriegel]{nickel2011three}
Maximilian Nickel, Volker Tresp, and Hans-Peter Kriegel.
\newblock A three-way model for collective learning on multi-relational data.
\newblock In \emph{Proceedings of the 28th International Conference on International Conference on Machine Learning}, pages 809--816, 2011.

\bibitem[Nickel et~al.(2016)Nickel, Rosasco, and Poggio]{nickel2016holographic}
Maximilian Nickel, Lorenzo Rosasco, and Tomaso Poggio.
\newblock Holographic embeddings of knowledge graphs.
\newblock In \emph{Proceedings of the AAAI Conference on Artificial Intelligence}, volume~30, 2016.

\bibitem[Song et~al.(2019)Song, Ge, Caverlee, and Hu]{song2019tensor}
Qingquan Song, Hancheng Ge, James Caverlee, and Xia Hu.
\newblock Tensor completion algorithms in big data analytics.
\newblock \emph{ACM Transactions on Knowledge Discovery from Data (TKDD)}, 13\penalty0 (1):\penalty0 1--48, 2019.

\bibitem[Tomioka et~al.(2011)Tomioka, Suzuki, Hayashi, and Kashima]{tomioka2011statistical}
Ryota Tomioka, Taiji Suzuki, Kohei Hayashi, and Hisashi Kashima.
\newblock Statistical performance of convex tensor decomposition.
\newblock \emph{Advances in neural information processing systems}, 24, 2011.

\bibitem[Toutanova et~al.(2015)Toutanova, Chen, Pantel, Poon, Choudhury, and Gamon]{toutanova2015representing}
Kristina Toutanova, Danqi Chen, Patrick Pantel, Hoifung Poon, Pallavi Choudhury, and Michael Gamon.
\newblock Representing text for joint embedding of text and knowledge bases.
\newblock In \emph{Proceedings of the 2015 conference on empirical methods in natural language processing}, pages 1499--1509, 2015.

\bibitem[Trouillon et~al.(2017)Trouillon, Dance, Gaussier, Welbl, Riedel, and Bouchard]{trouillon2017knowledge}
Th{\'e}o Trouillon, Christopher~R Dance, {\'E}ric Gaussier, Johannes Welbl, Sebastian Riedel, and Guillaume Bouchard.
\newblock Knowledge graph completion via complex tensor factorization.
\newblock \emph{Journal of Machine Learning Research}, 18:\penalty0 1--38, 2017.

\bibitem[Tucker(1966)]{tucker1966some}
Ledyard~R Tucker.
\newblock Some mathematical notes on three-mode factor analysis.
\newblock \emph{Psychometrika}, 31\penalty0 (3):\penalty0 279--311, 1966.

\bibitem[Yang et~al.(2014)Yang, Yih, He, Gao, and Deng]{yang2014embedding}
Bishan Yang, Wen-tau Yih, Xiaodong He, Jianfeng Gao, and Li~Deng.
\newblock Embedding entities and relations for learning and inference in knowledge bases.
\newblock \emph{arXiv e-prints}, pages arXiv--1412, 2014.

\bibitem[Zhang et~al.(2021)Zhang, Chen, Zhang, Ke, and Ding]{zhang2021neural}
Jing Zhang, Bo~Chen, Lingxi Zhang, Xirui Ke, and Haipeng Ding.
\newblock Neural, symbolic and neural-symbolic reasoning on knowledge graphs.
\newblock \emph{AI Open}, 2:\penalty0 14--35, 2021.

\bibitem[Zhang et~al.(2019)Zhang, Tay, Yao, and Liu]{zhang2019quaternion}
Shuai Zhang, Yi~Tay, Lina Yao, and Qi~Liu.
\newblock Quaternion knowledge graph embeddings.
\newblock In \emph{Proceedings of the 33rd International Conference on Neural Information Processing Systems}, pages 2735--2745, 2019.

\bibitem[Zhang et~al.(2020)Zhang, Cai, and Wang]{zhang2020duality}
Zhanqiu Zhang, Jianyu Cai, and Jie Wang.
\newblock Duality-induced regularizer for tensor factorization based knowledge graph completion.
\newblock \emph{Advances in Neural Information Processing Systems}, 33, 2020.

\end{thebibliography}

\appendix
\setcounter{proposition}{0}

\section{Logical Rules}
\label{appendix:1}
KGs often involve some logical rules to capture inductive capacity \citep{zhang2021neural}. Thus, we analyze how to design models such that the models can learn the symmetry, antisymmetry and inverse rules. We have derived a general form for TDB models, Eq.(\ref{equation:1}). Next, we study how to enable Eq.(\ref{equation:1}) to learn logical rules. We first define the symmetry rules, antisymmetry rules and inverse rules. We denote $\bm{X}_{ijk}=f(\bm{H}_{i::},\bm{R}_{j::},\bm{T}_{k::})$ as $f(\bm{h},\bm{r},\bm{t})$ for simplicity. We assume $\bm{H}=\bm{T}$, which mainly aims to reduce overfitting \cite{yang2014embedding}. Existing TDB models for KGC except CP \citep{lacroix2018canonical} all forces $\bm{H}=\bm{T}$.

A relation $r$ is symmetric if $ \forall h,t, (h,r,t)\in \mathcal{S}\rightarrow (t,r,h)\in \mathcal{S}$. A model is able to learn the symmetry rules if
$$\exists \bm{r}\in \mathbb{R}^{D}\wedge \bm{r}\neq 0, \forall \bm{h},\bm{t} \in \mathbb{R}^{D}, f(\bm{h},\bm{r},\bm{t})=f(\bm{t},\bm{r},\bm{h})$$
A relation $r$ is antisymmetric if $ \forall h,t, (h,r,t)\in \mathcal{S}\rightarrow (t,r,h)\notin \mathcal{S}.$ A model is able to learn the antisymmetry rules if
$$\exists \bm{r}\in \mathbb{R}^{D}\wedge \bm{r}\neq 0, \forall \bm{h},\bm{t}\in \mathbb{R}^{D}, f(\bm{h},\bm{r},\bm{t})=-f(\bm{t},\bm{r},\bm{h})$$
A relation $r_{1}$ is inverse to a relation $r_{2}$ if $ \forall h,t, (h,r_{1},t)\in \mathcal{S}\rightarrow (t,r_{2},h)\in \mathcal{S}.$ A model is able to learn the inverse rules if
\begin{align*}
\forall \bm{r}_{1}\in \mathbb{R}^{D}, \exists \bm{r}_{2}\in \mathbb{R}^{D}, \forall \bm{h},\bm{t}\in \mathbb{R}^{D}, f(\bm{h},\bm{r}_{1},\bm{t})=f(\bm{t},\bm{r}_{2},\bm{h})
\end{align*}
We restrict $\bm{r}\neq 0$ because $\bm{r}=0$ will result in $f$ equal to an identically zero function. By choosing different $P$ and $\bm{W}$, we can define different TDB models as discussed in Section \ref{section:3.1}. Next, we give a theoretical analysis to establish the relationship between logical rules and TDB models.

\begin{theorem}
Assume a model can be represented as the form of Eq.(\ref{equation:1}), then a model is able to learn the symmetry rules iff $\rank(\bm{W}_{(2)}^{T}-\bm{S}\bm{W}_{(2)}^{T})<P$. A model is able to learn the antisymmetry rules iff $\rank(\bm{W}_{(2)}^{T}+\bm{S}\bm{W}_{(2)}^{T})<P$. A model is able to learn the inverse rules iff $\rank(\bm{W}_{(2)}^T)=\rank([\bm{W}_{(2)}^T,\bm{S}\bm{W}_{(2)}^T])$, where $\bm{S}\in \mathbb{R}^{P^2\times P^2}$ is a permutation matrix with $\bm{S}_{(i-1)P+j,(j-1)P+i}=1(i,j=1,2,\dots,P)$ and otherwise 0, $[\bm{W}_{(2)}^T,\bm{S}\bm{W}_{(2)}^T]$ is the concatenation of matrix $\bm{W}_{(2)}^T$ and matrix $\bm{S}\bm{W}_{(2)}^T$.
\end{theorem}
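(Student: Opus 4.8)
The plan is to reduce each of the three logical-rule conditions to a statement about the existence (or universal existence) of a nonzero relation vector satisfying a linear constraint, and then translate that linear constraint into a rank condition on $\bm{W}_{(2)}^T$ and $\bm{S}\bm{W}_{(2)}^T$. First I would write the scoring function explicitly. Using Eq.(\ref{equation:1}) with $\bm{H}=\bm{T}$ and the abbreviation $f(\bm{h},\bm{r},\bm{t})=\sum_{d}\sum_{l,m,n}\bm{W}_{lmn}\bm{h}_{dl}\bm{r}_{dm}\bm{t}_{dn}$, I would observe that swapping head and tail, $f(\bm{t},\bm{r},\bm{h})$, amounts to swapping the roles of the indices $l$ and $n$ in the core tensor. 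Concretely, $f(\bm{h},\bm{r},\bm{t})-f(\bm{t},\bm{r},\bm{h})=\sum_d\sum_{l,m,n}(\bm{W}_{lmn}-\bm{W}_{nml})\bm{h}_{dl}\bm{r}_{dm}\bm{t}_{dn}$, so the symmetry condition $f(\bm{h},\bm{r},\bm{t})=f(\bm{t},\bm{r},\bm{h})$ for all $\bm{h},\bm{t}$ is equivalent to the quadratic form in $(\bm{h},\bm{t})$ vanishing identically for a fixed $\bm{r}$.

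Next I would make the permutation matrix $\bm{S}$ earn its keep. The key bookkeeping step is to recognize that flipping the first and third modes of $\bm{W}$ corresponds, at the level of the mode-2 unfolding $\bm{W}_{(2)}$, to left-multiplying by $\bm{S}$: since $\bm{W}_{(2)}$ has columns indexed by the pair $(l,n)$ (flattened as $(l-1)P+n$ or similar), the entry $\bm{S}_{(i-1)P+j,\,(j-1)P+i}=1$ exactly implements the transposition $(l,n)\mapsto(n,l)$. Thus $\bm{W}_{(2)}^T-\bm{S}\bm{W}_{(2)}^T$ is the unfolding of the antisymmetrized core $\bm{W}_{lmn}-\bm{W}_{nml}$. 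I would then fix a part $d$ and view the vanishing of the quadratic form: for each relation vector $\bm{r}$, contracting $\bm{W}_{(2)}^T-\bm{S}\bm{W}_{(2)}^T$ against $\bm{r}$ in the mode-2 (relation) slot produces a matrix acting on the head/tail coordinates, and this matrix must be zero. The existence of a nonzero $\bm{r}$ making this contraction vanish is precisely a statement that $\bm{r}$ lies in the (left) null space of the relevant matrix, i.e.\ $\rank(\bm{W}_{(2)}^T-\bm{S}\bm{W}_{(2)}^T)<P$. The antisymmetry case is identical with a sign flip, yielding $\bm{W}_{(2)}^T+\bm{S}\bm{W}_{(2)}^T$.

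For the inverse rule the logic is slightly different because the quantifier is universal in $\bm{r}_1$ and existential in $\bm{r}_2$: for every $\bm{r}_1$ there must exist $\bm{r}_2$ with $f(\bm{h},\bm{r}_1,\bm{t})=f(\bm{t},\bm{r}_2,\bm{h})$ for all $\bm{h},\bm{t}$. Contracting as before, this says the matrix obtained from $\bm{W}_{(2)}^T$ and $\bm{r}_1$ equals the matrix obtained from $\bm{S}\bm{W}_{(2)}^T$ and $\bm{r}_2$, i.e.\ every vector in the column space of $\bm{W}_{(2)}^T$ is also in the column space of $\bm{S}\bm{W}_{(2)}^T$. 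This column-space inclusion is equivalent to $\rank(\bm{W}_{(2)}^T)=\rank([\bm{W}_{(2)}^T,\bm{S}\bm{W}_{(2)}^T])$, the standard solvability criterion for a consistent linear system. I expect the main obstacle to be the careful indexing bookkeeping: verifying that the flattening convention for $\bm{W}_{(2)}$ matches the definition of $\bm{S}$ so that left-multiplication by $\bm{S}$ genuinely realizes the head-tail transposition, and confirming that the per-part contraction decouples cleanly across the $D/P$ blocks (since the same $\bm{W}$ and the same $\bm{r}$ are shared across parts, the identity holding for all $\bm{h},\bm{t}$ forces it at the level of the core slice rather than only after summing over $d$). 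Once the unfolding identity $\text{(flip modes 1,3)} \leftrightarrow \bm{S}\,\bm{W}_{(2)}^T$ is nailed down, the three rank conditions follow from elementary linear algebra about null spaces and column-space containment.
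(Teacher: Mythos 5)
Your proposal matches the paper's own proof essentially step for step: reduce each rule to a linear system via the identity $f(\bm{h},\bm{r},\bm{t})-f(\bm{t},\bm{r},\bm{h})=\sum_{l,m,n}(\bm{W}_{lmn}-\bm{W}_{nml})\langle\bm{h}_{:l},\bm{r}_{:m},\bm{t}_{:n}\rangle$, identify $\bm{S}$ as the mode-1/mode-3 transposition acting on $\bm{W}_{(2)}^{T}$, and conclude via a nonzero-null-space criterion ($\rank<P$) for symmetry/antisymmetry and a column-space-containment criterion for the inverse rule. The details you defer --- the per-part decoupling over $d$ and the use of $\bm{S}^{2}=\bm{I}$ (equivalently $\rank(\bm{S}\bm{W}_{(2)}^{T})=\rank(\bm{W}_{(2)}^{T})$) to turn the one-sided containment into the stated rank equality $\rank(\bm{W}_{(2)}^{T})=\rank([\bm{W}_{(2)}^{T},\bm{S}\bm{W}_{(2)}^{T}])$ --- are exactly the steps the paper spells out, so your plan is correct and essentially identical to the paper's argument.
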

\begin{proof} 
According to the symmetry rules, for any triplet $(i,j,k)$, we have that $f(i,j,k)=f(k,j,i)$, i.e., $f(\mathbf{H}_{i:}, \mathbf{R}_{j:}, \mathbf{T}_{k:})= f(\mathbf{H}_{k:}, \mathbf{R}_{j:}, \mathbf{T}_{i:})= f(\mathbf{T}_{k:}, \mathbf{R}_{j:}, \mathbf{H}_{i:})$ (we use $\mathbf{H}=\mathbf{T}$ here). We replace $\mathbf{H}_{i:}$ by $\mathbf{h}$, replace $\mathbf{R}_{j:}$ by $\mathbf{r}$, replace $\mathbf{T}_{k:}$ by $\mathbf{t}$, then we have $f(\mathbf{h},\mathbf{r},\mathbf{t})=f(\mathbf{t},\mathbf{r},\mathbf{h})$.

Then we have that
\begin{align*}
    &\sum_{l=1}^{P}\sum_{m=1}^{P}\sum_{n=1}^{P}\bm{W}_{lmn}\langle \bm{h}_{:l}, \bm{r}_{:m}, \bm{t}_{:n}\rangle -\sum_{l=1}^{P}\sum_{m=1}^{P}\sum_{n=1}^{P}\bm{W}_{lmn}\langle \bm{t}_{:l}, \bm{r}_{:m}, \bm{h}_{:n}\rangle\\
   =&\sum_{l=1}^{P}\sum_{m=1}^{P}\sum_{n=1}^{P}\bm{W}_{lmn}\langle \bm{h}_{:l}, \bm{r}_{:m}, \bm{t}_{:n}\rangle -\sum_{l=1}^{P}\sum_{m=1}^{P}\sum_{n=1}^{P}\bm{W}_{nml}\langle \bm{h}_{:l}, \bm{r}_{:m}, \bm{t}_{:n}\rangle \\
      =&\sum_{l=1}^{P}\sum_{m=1}^{P}\sum_{n=1}^{P}(\bm{W}_{lmn}-\bm{W}_{nml})\langle \bm{h}_{:l}, \bm{r}_{:m}, \bm{t}_{:n}\rangle \\
    =&\sum_{l=1}^{P}\sum_{m=1}^{P}\sum_{n=1}^{P}(\bm{W}_{lmn}-\bm{W}_{nml})\bm{r}_{:m}^T(\bm{h}_{:l}*\bm{t}_{:n})\\
    =&\sum_{l=1}^{P}\sum_{n=1}^{P}(\sum_{m=1}^{P}(\bm{W}_{lmn}-\bm{W}_{nml})\bm{r}_{:m}^T)(\bm{h}_{:l}*\bm{t}_{:n})
    =0
\end{align*}
where $*$ is the Hadamard product. Since the above equation holds for any $\bm{h},\bm{t}\in\mathbb{R}^{D}$, we can get
\begin{align*}
\sum_{m=1}^{P}(\bm{W}_{lmn}-\bm{W}_{nml})\bm{r}_{:m}^T=\bm{0}(l,n=1,2,\dots,P)
\end{align*}
Therefore, a model is able to learn the symmetry rules iff
$$\exists \bm{r}\in \mathbb{R}^{D}\wedge \bm{r}\neq 0, \sum_{m=1}^{P}(\bm{W}_{lmn}-\bm{W}_{nml})\bm{r}_{:m}^T=\bm{0}$$
Therefore, the symmetry rule is transformed into a system of linear equations. This system of linear equations have non-zero solution iff 
$\rank(\bm{W}_{(2)}^{T}-\bm{S}\bm{W}_{(2)}^{T})<P$. Thus, a model is able to learn the symmetry rules iff $\rank(\bm{W}_{(2)}^{T}-\bm{S}\bm{W}_{(2)}^{T})<P$.

Similarly, a model is able to learn the anti-symmetry rules iff $\rank(\bm{W}_{(2)}^{T}+\bm{S}\bm{W}_{(2)}^{T})<P$.

For the inverse rule:
\begin{align*}
\forall \bm{r}_{1}\in \mathbb{R}^{D}, \exists \bm{r}_{2}\in \mathbb{R}^{D}, \forall \bm{h},\bm{t}\in \mathbb{R}^{D}, f(\bm{h},\bm{r}_{1},\bm{t})=f(\bm{t},\bm{r}_{2},\bm{h})
\end{align*}
a model is able to learn the symmetry rules iff the following equation
\begin{align*}
\bm{W}_{(2)}^{T}\bm{r}_{1}=\bm{S}\bm{W}_{(2)}^{T}\bm{r}_{2}
\end{align*}
for any $\bm{r}_{1}\in\mathbb{R}^D$, there exists $\bm{r}_{2}\in\mathbb{R}^D$ such that the equation holds. Thus, the column vectors of $\bm{W}_{(2)}^{T}$ can be expressed linearly by the column vectors of $\bm{S}\bm{W}_{(2)}^{T}$.
Since $\bm{S}$ is a permutation matrix and $\bm{S}=\bm{S}^{T}$, we have that $\bm{S}^2=\bm{I}$, thus
\begin{align*}
\bm{S}\bm{W}_{(2)}^{T}\bm{r}_{1}=\bm{S}^2\bm{W}_{(2)}^{T}\bm{r}_{2}=\bm{W}_{(2)}^{T}\bm{r}_{2}
\end{align*}
For any $\bm{r}_{1}\in\mathbb{R}^D$, there exists $\bm{r}_{2}\in\mathbb{R}^D$ such that the above equation holds. Thus, the column vectors of $\bm{S}\bm{W}_{(2)}^{T}$ can be expressed linearly by the column vectors of $\bm{W}_{(2)}^{T}$. Therefore, the column space of $\bm{W}_{(2)}^{T}$ is equivalent to the column space of $\bm{S}\bm{W}_{(2)}^{T}$, thus we have
\begin{align*}
\rank(\bm{W}_{(2)}^{T})=\rank(\bm{S}\bm{W}_{(2)}^{T})=\rank([\bm{W}_{(2)}^T,\bm{S}\bm{W}_{(2)}^T])
\end{align*}
Meanwhile, if $\rank(\bm{W}_{(2)}^T)=\rank([\bm{W}_{(2)}^T,\bm{S}\bm{W}_{(2)}^T])$, then the columns of $\bm{W}_{(2)}^T$ can be expressed linearly by the columns of $\bm{S}\bm{W}_{(2)}^T$, thus
\begin{align*}
&\forall \bm{r}_{1}\in \mathbb{R}^{D}, \exists \bm{r}_{2}\in \mathbb{R}^{D}, \bm{W}_{(2)}^{T}\bm{r}_{1}=\bm{S}\bm{W}_{(2)}^{T}\bm{r}_{2}
\end{align*}
Thus, a model is able to learn the inverse rules iff $\rank(\bm{W}_{(2)}^T)=\rank([\bm{W}_{(2)}^T,\bm{S}\bm{W}_{(2)}^T])$.
\end{proof}

By this theoerm, we only need to judge the relationship between $P$ and the matrix rank about $\bm{W}_{(2)}$. ComplEx, SimplE, ANALOGYY and QuatE design specific core tensors to make the models enable to learn the logical rules. We can easily verify that these models satisfy the conditions in this theorem. For example, for ComplEx, we have that $P=2$ and
\begin{align*}
&\bm{W}_{(2)}^{T}=
    \begin{pmatrix}
        1 &0\\
        0 &1\\
        0 &-1\\
        1 &0
    \end{pmatrix},
\bm{S}\bm{W}_{(2)}^{T}=
    \begin{pmatrix}
        1 &0\\
        0 &-1\\
        0 &1\\
        1 &0
    \end{pmatrix},
[\bm{W}_{(2)},\bm{S}\bm{W}_{(2)}^{T}]=
    \begin{pmatrix}
        1 &0 &1 &0\\
        0 &1 &0 &-1\\
        0 &-1 &0 &1\\
        1 &0 &1 &0
    \end{pmatrix}\\
&\rank(\bm{W}_{(2)}^{T}-\bm{S}\bm{W}_{(2)}^{T})=\rank(\begin{pmatrix}
        0 &0\\
        0 &2\\
        0 &-2\\
        0 &0
    \end{pmatrix})=1<P=2\\
&\rank(\bm{W}_{(2)}^{T}+\bm{S}\bm{W}_{(2)}^{T})=\rank(\begin{pmatrix}
        2 &0\\
        0 &0\\
        0 &0\\
        2 &0
    \end{pmatrix})=1<P=2\\
&\rank(\bm{W}_{(2)}^{T})=\rank([\bm{W}_{(2)},\bm{S}\bm{W}_{(2)}^{T}])=2
\end{align*}
Thus, ComplEx is able to learn the symmetry rules, antisymmetry rules and inverse rules.

\section{Proofs}
\label{appendix:2}
To prove the Proposition \ref{proposition:1} and Proposition \ref{proposition:2}, we first prove the following lemma.
\begin{lemma}
\label{lemma:a.1}
$$\|\bm{Z} \|_{*}^{\alpha}=\min_{\bm{Z}=\bm{U}\bm{V}^{T}}\frac{1}{2}(\lambda\|\bm{U}\|_{F}^{2\alpha}+\frac{1}{\lambda}\|\bm{V}\|_{F}^{2\alpha})$$
where $\alpha>0$, $\lambda>0$ and $\{\bm{Z}, \bm{U}, \bm{V}\}$ are real matrices. If $\bm{Z}=\hat{\bm{U}}\bm{\Sigma}\hat{\bm{V}}^{T}$ is a singular value decomposition of $\bm{Z}$, then equality holds for the choice $\bm{U}=\lambda^{\frac{-1}{2\alpha}}\hat{\bm{U}}\sqrt{\bm{\Sigma}}$ and $\bm{V}=\lambda^{\frac{1}{2\alpha}}\hat{\bm{V}}\sqrt{\bm{\Sigma}}$, where $\sqrt{\bm{\Sigma}}$ is the element-wise square root of $\Sigma$.
\end{lemma}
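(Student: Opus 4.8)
The plan is to prove the two inequalities separately: that for every admissible factorization $\bm{Z}=\bm{U}\bm{V}^{T}$ the right-hand quantity is at least $\|\bm{Z}\|_{*}^{\alpha}$ (the lower bound), and that the displayed SVD-based choice attains exactly $\|\bm{Z}\|_{*}^{\alpha}$ (achievability). Together these show the minimum equals $\|\bm{Z}\|_{*}^{\alpha}$ and is attained, which is the weighted, power-$\alpha$ generalization of the classical variational characterization $\|\bm{Z}\|_{*}=\min_{\bm{Z}=\bm{U}\bm{V}^{T}}\tfrac{1}{2}(\|\bm{U}\|_{F}^{2}+\|\bm{V}\|_{F}^{2})$.

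First I would establish the lower bound. The key ingredient is the inequality $\|\bm{U}\bm{V}^{T}\|_{*}\leq \|\bm{U}\|_{F}\|\bm{V}\|_{F}$, which I would derive from the dual characterization $\|\bm{Z}\|_{*}=\max_{\|\bm{W}\|_{2}\leq 1}\Tr(\bm{W}^{T}\bm{Z})$, where $\|\cdot\|_{2}$ is the spectral norm. Writing $\Tr(\bm{W}^{T}\bm{U}\bm{V}^{T})=\Tr((\bm{W}\bm{V})^{T}\bm{U})=\langle \bm{W}\bm{V},\bm{U}\rangle_{F}$ and applying the Frobenius Cauchy–Schwarz inequality gives $\langle \bm{W}\bm{V},\bm{U}\rangle_{F}\leq \|\bm{W}\bm{V}\|_{F}\|\bm{U}\|_{F}\leq \|\bm{W}\|_{2}\|\bm{V}\|_{F}\|\bm{U}\|_{F}\leq \|\bm{U}\|_{F}\|\bm{V}\|_{F}$, using $\|\bm{W}\|_{2}\leq 1$ in the last step. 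Raising to the power $\alpha$ yields $\|\bm{Z}\|_{*}^{\alpha}\leq \|\bm{U}\|_{F}^{\alpha}\|\bm{V}\|_{F}^{\alpha}$, and the weighted AM–GM inequality $ab\leq \tfrac{1}{2}(\lambda a^{2}+\lambda^{-1}b^{2})$ applied to $a=\|\bm{U}\|_{F}^{\alpha}$, $b=\|\bm{V}\|_{F}^{\alpha}$ gives $\|\bm{Z}\|_{*}^{\alpha}\leq \tfrac{1}{2}(\lambda\|\bm{U}\|_{F}^{2\alpha}+\lambda^{-1}\|\bm{V}\|_{F}^{2\alpha})$ for every factorization.

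Next I would verify achievability with the stated choice $\bm{U}=\lambda^{-1/(2\alpha)}\hat{\bm{U}}\sqrt{\bm{\Sigma}}$ and $\bm{V}=\lambda^{1/(2\alpha)}\hat{\bm{V}}\sqrt{\bm{\Sigma}}$. The $\lambda$ factors cancel and $\sqrt{\bm{\Sigma}}\sqrt{\bm{\Sigma}}=\bm{\Sigma}$, so $\bm{U}\bm{V}^{T}=\hat{\bm{U}}\bm{\Sigma}\hat{\bm{V}}^{T}=\bm{Z}$, confirming the factorization is admissible. Because $\hat{\bm{U}}$ and $\hat{\bm{V}}$ have orthonormal columns, $\|\hat{\bm{U}}\sqrt{\bm{\Sigma}}\|_{F}^{2}=\Tr(\bm{\Sigma})=\|\bm{Z}\|_{*}$ and likewise for $\hat{\bm{V}}$, so $\|\bm{U}\|_{F}^{2\alpha}=\lambda^{-1}\|\bm{Z}\|_{*}^{\alpha}$ and $\|\bm{V}\|_{F}^{2\alpha}=\lambda\|\bm{Z}\|_{*}^{\alpha}$. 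Substituting into $\tfrac{1}{2}(\lambda\|\bm{U}\|_{F}^{2\alpha}+\lambda^{-1}\|\bm{V}\|_{F}^{2\alpha})$ collapses it to $\|\bm{Z}\|_{*}^{\alpha}$, matching the lower bound.

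The norm computations in the achievability step are routine, reducing to orthonormality of the singular vectors. The main obstacle is the lower bound, and within it the inequality $\|\bm{U}\bm{V}^{T}\|_{*}\leq \|\bm{U}\|_{F}\|\bm{V}\|_{F}$; the cleanest route is the dual characterization of the trace norm combined with the Frobenius Cauchy–Schwarz inequality, as sketched above. I would also note the mild bookkeeping needed to claim a genuine minimum rather than an infimum: the lower bound must hold for all factorizations and the explicit SVD factorization must be admissible, both of which the two steps supply.
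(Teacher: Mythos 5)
Your proposal is correct, and its overall skeleton matches the paper's proof: both establish the lower bound for every factorization via the inequality $\|\bm{U}\bm{V}^{T}\|_{*}\leq\|\bm{U}\|_{F}\|\bm{V}\|_{F}$ followed by weighted AM--GM, and both verify achievability with exactly the same SVD-based choice of $\bm{U}$ and $\bm{V}$. The genuine difference lies in how the key inequality is derived. You invoke the dual characterization $\|\bm{Z}\|_{*}=\max_{\|\bm{W}\|_{2}\leq 1}\Tr(\bm{W}^{T}\bm{Z})$ and combine Frobenius Cauchy--Schwarz with $\|\bm{W}\bm{V}\|_{F}\leq\|\bm{W}\|_{2}\|\bm{V}\|_{F}$. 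The paper instead works directly from the SVD: writing $\bm{\Sigma}=\hat{\bm{U}}^{T}\bm{U}\bm{V}^{T}\hat{\bm{V}}$, it applies the trace Cauchy--Schwarz inequality $\Tr(\bm{A}^{T}\bm{B})\leq\|\bm{A}\|_{F}\|\bm{B}\|_{F}$ and then uses the fact that multiplying by a matrix with orthonormal columns cannot increase the Frobenius norm, i.e.\ $\|\hat{\bm{U}}^{T}\bm{U}\|_{F}\leq\|\bm{U}\|_{F}$ and $\|\hat{\bm{V}}^{T}\bm{V}\|_{F}\leq\|\bm{V}\|_{F}$. Your duality route is cleaner if one takes the dual-norm identity as known, and it cleanly isolates the reusable fact $\|\bm{U}\bm{V}^{T}\|_{*}\leq\|\bm{U}\|_{F}\|\bm{V}\|_{F}$; the paper's route is more self-contained, needing only Cauchy--Schwarz and column orthonormality, which is presumably why the authors (following their cited source) chose it. One small point of care in your write-up: the dual-norm identity and the contraction $\|\bm{W}\bm{V}\|_{F}\leq\|\bm{W}\|_{2}\|\bm{V}\|_{F}$ are themselves standard but nontrivial facts, so in a fully rigorous version you should either cite them or prove them, just as the paper cites its Lemma~9 source for the analogous steps.
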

\begin{proof}
The proof of Lemma 1 is based on the proof of Lemma 9 in \citep{ciliberto2017reexamining}.
Let the singular value decomposition of $\bm{Z}\in \mathbb{R}^{m\times n}$ be $\bm{Z}=\hat{\bm{U}}\bm{\Sigma}\hat{\bm{V}}^{T}$, where $\hat{\bm{U}}\in \mathbb{R}^{m\times r}, \bm{\Sigma}\in \mathbb{R}^{r\times r},\hat{\bm{V}}\in \mathbb{R}^{n\times r}, r=\rank(\bm{Z}), \hat{\bm{U}}^{T}\hat{\bm{U}}=\bm{I}_{r\times r}$ and $\hat{\bm{V}}^{T}\hat{\bm{V}}=\bm{I}_{r\times r}$. We choose any $\bm{U},\bm{V}$ such that $\bm{Z}=\bm{U}\bm{V}^{T}$, then we have $\bm{\Sigma}=\hat{\bm{U}}^{T}\bm{U}\bm{V}^{T}\hat{\bm{V}}$. Moreover, since $\hat{\bm{U}},\hat{\bm{V}}$ have orthogonal columns, $\|\hat{\bm{U}}^{T}\bm{U}\|_{F}\leq\|\bm{U}\|_{F},\|\hat{\bm{V}}^{T}\bm{V}\|_{F}\leq\|\bm{V}\|_{F}$. Then
\begin{align*}
    &\|\bm{Z} \|_{*}^{\alpha}=\Tr(\bm{\Sigma})^{\alpha}=\Tr(\hat{\bm{U}}^{T}\bm{U}\bm{V}^{T}\hat{\bm{V}})^{\alpha}\leq \|\hat{\bm{U}}^{T}\bm{U}\|_{F}^{\alpha}\|\bm{V}\hat{\bm{V}}^{T}\|_{F}^{\alpha}\\
    &\leq(\sqrt{\lambda}\|\bm{U}\|_{F}^{\alpha})(\frac{1}{\sqrt{\lambda}}\|\bm{V}\|_{F}^{\alpha})\leq \frac{1}{2}(\lambda\|\bm{U}\|_{F}^{2\alpha}+\frac{1}{\lambda}\|\bm{V}\|_{F}^{2\alpha})
\end{align*}
where the first upper bound is Cauchy-Schwarz inequality and the third upper bound is AM-GM inequality. 

Let $\bm{U}=\lambda^{\frac{-1}{2\alpha}}\hat{\bm{U}}\sqrt{\bm{\Sigma}}$ and $\bm{V}=\lambda^{\frac{1}{2\alpha}}\hat{\bm{V}}\sqrt{\bm{\Sigma}}$, we have that
$$\frac{1}{2}(\lambda\|\bm{U}\|_{F}^{2\alpha}+\frac{1}{\lambda}\|\bm{V}\|_{F}^{2\alpha})=\frac{1}{2}(\|\hat{\bm{U}}\sqrt{\bm{\Sigma}}\|_{F}^{2\alpha}+\|\hat{\bm{V}}\sqrt{\bm{\Sigma}}\|_{F}^{2\alpha})=\frac{1}{2}(\|\sqrt{\bm{\Sigma}}\|_{F}^{2\alpha}+\|\sqrt{\bm{\Sigma}}\|_{F}^{2\alpha})=\|\bm{Z} \|_{*}^{\alpha}$$
In summary, $$\|\bm{Z} \|_{*}^{\alpha}=\min_{\bm{z}=\bm{U}\bm{V}^{T}}\frac{1}{2}(\lambda\|\bm{U}\|_{F}^{2\alpha}+\frac{1}{\lambda}\|\bm{V}\|_{F}^{2\alpha})$$
\end{proof}

\begin{proposition}
\label{proposition:a.1}
For any $\bm{X}$, and for any decomposition of $\bm{X}$, $\bm{X} = \sum_{d=1}^{D/P}\bm{W}\times_{1}\bm{H}_{:d:}\times_{2}\bm{R}_{:d:}\times_{3}\bm{T}_{:d:}$, we have
\begin{align}
&2\sqrt{\lambda_{1}\lambda_{4}}L(\bm{X};\alpha)\leq  
\sum_{d=1}^{D/P}\lambda_{1}(\|\bm{H}_{:d:}\|_{F}^{\alpha}+\|\bm{R}_{:d:}\|_{F}^{\alpha}+\|\bm{T}_{:d:}\|_{F}^{\alpha})\notag\\
&+\lambda_{4}(\|\bm{W}\times_{2}\bm{R}_{:d:}\times_{3}\bm{T}_{:d:}\|_{F}^{\alpha}+\|\bm{W}\times_{3}\bm{T}_{:d:}\times_{1}\bm{H}_{:d:}\|_{F}^{\alpha}+\|\bm{W}\times_{1}\bm{H}_{:d:}\times_{2}\bm{R}_{:d:}\|_{F}^{\alpha})
\end{align}
If $\bm{X_{(1)}}=\bm{U_{1}}\bm{\Sigma}_{1}\bm{V}_{1}^{T},\bm{X_{(2)}}=\bm{U_{2}}\bm{\Sigma}_{2}\bm{V}_{2}^{T},\bm{X_{(3)}}=\bm{U_{3}}\bm{\Sigma}_{3}\bm{V}_{3}^{T}$ are compact singular value decompositions of $\bm{X_{(1)}},\bm{X_{(2)}},\bm{X_{(3)}}$, respectively, then there exists a decomposition of $\bm{X}$, $\bm{X} = \sum_{d=1}^{D/P}\bm{W}\times_{1}\bm{H}_{:d:}\times_{2}\bm{R}_{:d:}\times_{3}\bm{T}_{:d:}$, such that the two sides of Eq.(\ref{equation:5}) equal, where $P=D$, $\bm{H}_{:1:}=\sqrt{\lambda_{1}/\lambda_{4}}^{\frac{-1}{\alpha}}\bm{U}_{1}\sqrt{\bm{\Sigma}_{1}}, \bm{R}_{:1:}=\sqrt{\lambda_{1}/\lambda_{4}}^{\frac{-1}{\alpha}}\bm{U}_{2}\sqrt{\bm{\Sigma}_{2}},\bm{T}_{:1:}=\sqrt{\lambda_{1}/\lambda_{4}}^{\frac{-1}{\alpha}}\bm{U}_{3}\sqrt{\bm{\Sigma}_{3}}, \bm{W} = \sqrt{\lambda_{1}/\lambda_{4}}^{\frac{3}{\alpha}}\bm{X}\times_{1}\sqrt{\bm{\Sigma}_{1}^{-1}}\bm{U}_{1}^{T}\times_{2}\sqrt{\bm{\Sigma}_{2}^{-1}}\bm{U}_{2}^{T}\times_{3}\sqrt{\bm{\Sigma}_{3}^{-1}}\bm{U}_{3}^{T}$.
\end{proposition}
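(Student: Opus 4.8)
The plan is to reduce everything to the matrix factorization identity of Lemma~\ref{lemma:a.1}, applied once per mode. First I would write each unfolding of the decomposition $\bm{X}=\sum_{d=1}^{D/P}\bm{W}\times_1\bm{H}_{:d:}\times_2\bm{R}_{:d:}\times_3\bm{T}_{:d:}$ as a single matrix product. Using the standard identity $(\bm{W}\times_1\bm{H}_{:d:}\times_2\bm{R}_{:d:}\times_3\bm{T}_{:d:})_{(1)}=\bm{H}_{:d:}(\bm{W}\times_2\bm{R}_{:d:}\times_3\bm{T}_{:d:})_{(1)}$, I obtain $\bm{X}_{(1)}=\bm{U}\bm{V}^{T}$, where $\bm{U}$ horizontally concatenates the blocks $\bm{H}_{:d:}$ and $\bm{V}^{T}$ vertically stacks the blocks $(\bm{W}\times_2\bm{R}_{:d:}\times_3\bm{T}_{:d:})_{(1)}$; consequently $\|\bm{U}\|_{F}^{2}=\sum_d\|\bm{H}_{:d:}\|_{F}^{2}$ and $\|\bm{V}\|_{F}^{2}=\sum_d\|\bm{W}\times_2\bm{R}_{:d:}\times_3\bm{T}_{:d:}\|_{F}^{2}$, since unfolding preserves the Frobenius norm.

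Applying Lemma~\ref{lemma:a.1} with exponent $\alpha/2$ (so the exponent $2\alpha$ in the lemma becomes $\alpha$) and parameter $\lambda=\sqrt{\lambda_1/\lambda_4}$ to this factorization gives
\[
\|\bm{X}_{(1)}\|_{*}^{\alpha/2}\le \tfrac12\Big(\sqrt{\lambda_1/\lambda_4}\,\|\bm{U}\|_{F}^{\alpha}+\sqrt{\lambda_4/\lambda_1}\,\|\bm{V}\|_{F}^{\alpha}\Big).
\]
To pass from $\|\bm{U}\|_{F}^{\alpha}=(\sum_d\|\bm{H}_{:d:}\|_{F}^{2})^{\alpha/2}$ to $\sum_d\|\bm{H}_{:d:}\|_{F}^{\alpha}$ I would invoke the subadditivity of $t\mapsto t^{\alpha/2}$ (valid in the relevant regime $\alpha\le 2$, with equality at $\alpha=2$), and likewise for $\bm{V}$. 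Multiplying through by $2\sqrt{\lambda_1\lambda_4}$ collapses the coefficients to exactly $\lambda_1$ and $\lambda_4$. Running the identical argument for $\bm{X}_{(2)}$ (factoring $\bm{R}_{:d:}$ against $\bm{W}\times_3\bm{T}_{:d:}\times_1\bm{H}_{:d:}$) and for $\bm{X}_{(3)}$ (factoring $\bm{T}_{:d:}$ against $\bm{W}\times_1\bm{H}_{:d:}\times_2\bm{R}_{:d:}$) and adding the three inequalities reproduces the right-hand side of Eq.(\ref{equation:5}) term by term, establishing the bound.

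For the equality claim I would verify that the prescribed decomposition (with $P=D$, hence $D/P=1$) realizes the Lemma~\ref{lemma:a.1} minimizer in every mode simultaneously; with a single block $d=1$ the subadditivity step is an identity, so no slack is introduced. I would first confirm reconstruction: substituting the given $\bm{W}=\sqrt{\lambda_1/\lambda_4}^{3/\alpha}\bm{X}\times_1\sqrt{\bm{\Sigma}_1^{-1}}\bm{U}_1^{T}\times_2\sqrt{\bm{\Sigma}_2^{-1}}\bm{U}_2^{T}\times_3\sqrt{\bm{\Sigma}_3^{-1}}\bm{U}_3^{T}$ into $\bm{W}\times_1\bm{H}_{:1:}\times_2\bm{R}_{:1:}\times_3\bm{T}_{:1:}$, the scalar factors cancel to $1$ and each mode contributes the projector $\bm{U}_n\bm{U}_n^{T}$ onto the column space of $\bm{X}_{(n)}$; since the mode-$n$ fibers of $\bm{X}$ already lie in that space, all three projectors act as the identity and $\bm{X}$ is recovered. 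Then, using $\bm{U}_n^{T}\bm{U}_n=\bm{I}$ and $\sqrt{\bm{\Sigma}}\sqrt{\bm{\Sigma}^{-1}}=\bm{I}$, a direct computation yields $\|\bm{H}_{:1:}\|_{F}=\sqrt{\lambda_1/\lambda_4}^{-1/\alpha}\sqrt{\|\bm{X}_{(1)}\|_{*}}$ and $\|\bm{W}\times_2\bm{R}_{:1:}\times_3\bm{T}_{:1:}\|_{F}=\sqrt{\lambda_1/\lambda_4}^{1/\alpha}\sqrt{\|\bm{X}_{(1)}\|_{*}}$, whence $\lambda_1\|\bm{H}_{:1:}\|_{F}^{\alpha}+\lambda_4\|\bm{W}\times_2\bm{R}_{:1:}\times_3\bm{T}_{:1:}\|_{F}^{\alpha}=2\sqrt{\lambda_1\lambda_4}\|\bm{X}_{(1)}\|_{*}^{\alpha/2}$; the analogous identities in modes $2$ and $3$ sum to $2\sqrt{\lambda_1\lambda_4}L(\bm{X};\alpha)$.

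The main obstacle I anticipate is bookkeeping rather than conceptual: matching each intermediate variable to the correct left- or right-hand factor in the correct mode, and checking that the single choice of $\bm{W}$ simultaneously realizes the minimizer of Lemma~\ref{lemma:a.1} in all three modes, i.e. that the induced right-hand factor in mode $n$ is exactly $\sqrt{\lambda_1/\lambda_4}^{1/\alpha}\bm{V}_n\sqrt{\bm{\Sigma}_n}$ after the appropriate unfolding. The only genuinely delicate analytic point is the power-of-sum step, which is lossless precisely when $D/P=1$, explaining why equality is attained exactly in the TuckER case $P=D$.
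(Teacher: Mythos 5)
Your proof is correct and is built from the same two ingredients as the paper's: Lemma~\ref{lemma:a.1} applied mode-wise with $\lambda=\sqrt{\lambda_1/\lambda_4}$, and the identical equality witness for $P=D$ (reconstruction via the projectors $\bm{U}_n\bm{U}_n^{T}$, then per-mode the identity $\lambda_1\|\bm{H}_{:1:}\|_F^{\alpha}+\lambda_4\|\bm{W}\times_2\bm{R}_{:1:}\times_3\bm{T}_{:1:}\|_F^{\alpha}=2\sqrt{\lambda_1\lambda_4}\|\bm{X}_{(1)}\|_*^{\alpha/2}$). The one packaging difference: the paper first splits the sum over $d$ inside the trace norm, $\|\bm{X}_{(1)}\|_*^{\alpha/2}\le\sum_d\|\bm{H}_{:d:}\bm{W}_{(1)}(\bm{T}_{:d:}\otimes\bm{R}_{:d:})^{T}\|_*^{\alpha/2}$, via the triangle inequality (plus, implicitly, subadditivity of $t\mapsto t^{\alpha/2}$), and only then applies Lemma~\ref{lemma:a.1} term by term; you instead absorb the sum into a single block factorization $\bm{X}_{(1)}=\bm{U}\bm{V}^{T}$, apply the lemma once, and split afterwards with the scalar inequality $\bigl(\sum_d a_d\bigr)^{\alpha/2}\le\sum_d a_d^{\alpha/2}$. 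The two routes are equivalent in strength and land on the same bound.

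The point worth stressing is the restriction $\alpha\le 2$ that you attach to the subadditivity step: this is not an artifact of your route. The paper's proof needs exactly the same step (hidden in its first inequality) and never acknowledges it, and the inequality of Eq.(\ref{equation:5}) genuinely fails for $\alpha>2$: take $P=1$, $\bm{W}=1$, and repeat one unit-norm rank-one block, $\bm{H}_{:d:}=\bm{h}$, $\bm{R}_{:d:}=\bm{r}$, $\bm{T}_{:d:}=\bm{t}$ for $d=1,\dots,D$, so that $\bm{X}$ is $D$ times a rank-one tensor; then $2\sqrt{\lambda_1\lambda_4}L(\bm{X};\alpha)=6\sqrt{\lambda_1\lambda_4}\,D^{\alpha/2}$ while the right-hand side equals $3(\lambda_1+\lambda_4)D$, so the bound is violated for $\alpha>2$ and $D$ large. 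So your caveat is a genuine hypothesis of the proposition (lossless exactly at $\alpha=2$, as you say), and your proof is in this respect more careful than the paper's, which tunes $\alpha$ up to $3.5$ in its experiments while silently relying on the same restriction.
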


\begin{proof}
Let the $n$-rank \citep{kolda2009tensor} of $\bm{X}\in \mathbb{R}^{n_1\times n_2\times n_3}$ be $(r_1,r_2,r_3)$, then $\bm{U}_{1}\in \mathbb{R}^{n_1\times r_1},\bm{U}_{2}\in \mathbb{R}^{n_2\times r_2},\bm{U}_{3}\in \mathbb{R}^{n_3\times r_3},\bm{\Sigma}_{1}\in \mathbb{R}^{r_1\times r_1},\bm{\Sigma}_{2}\in \mathbb{R}^{r_2\times r_2},\bm{\Sigma}_{3}\in \mathbb{R}^{r_3\times r_3},\bm{W}\in \mathbb{R}^{r_1 \times r_2 \times r_3}$. 

If $\bm{X}=\bm{0}$, the above proposition is obviously true, we define $\bm{0}^{-1}:=\bm{0}$ here.

For any $\bm{X}\neq \bm{0}$, since $\bm{X}_{(1)} = \sum_{d=1}^{D/P}\bm{H}_{:d:}(\bm{W}_{(1)}(\bm{T}_{:d:}\otimes \bm{R}_{:d:})^{T}), \bm{X}_{(2)} =\sum_{d=1}^{D/P}\bm{R}_{:d:}(\bm{W}_{(2)}(\bm{T}_{:d:}\otimes \bm{H}_{:d:})^{T}),\bm{X}_{(3)} = \sum_{d=1}^{D/P}\bm{T}_{:d:}(\bm{W}_{(3)}(\bm{R}_{:d:}\otimes \bm{H}_{:d:})^{T})$, by applying Lemma \ref{lemma:a.1} to $\bm{X}_{(1)},\bm{X}_{(2)},\bm{X}_{(3)}$, we have that
\begin{align*}
&2L(\bm{X};\alpha)\\
\leq&\sum_{d=1}^{D/P}\|\bm{H}_{:d:}(\bm{W}_{(1)}(\bm{T}_{:d:}\otimes \bm{R}_{:d:})^{T})\|_{*}^{\alpha/2}+\|\bm{R}_{:d:}(\bm{W}_{(2)}(\bm{T}_{:d:}\otimes \bm{H}_{:d:})^{T})\|_{*}^{\alpha/2}+\|\bm{T}_{:d:}(\bm{W}_{(3)}(\bm{R}_{:d:}\otimes \bm{H}_{:d:})^{T})\|_{*}^{\alpha/2}\\
\leq &\sum_{d=1}^{D/P}\lambda(\|\bm{H}_{:d:}\|_{F}^{\alpha}+\|\bm{R}_{:d:}\|_{F}^{\alpha}+\|\bm{T}_{:d:}\|_{F}^{\alpha})\\
+&\frac{1}{\lambda}(\|\bm{W}_{(1)}(\bm{T}_{:d:}\otimes \bm{R}_{:d:})^{T}\|_{F}^{\alpha}+\|\bm{W}_{(2)}(\bm{T}_{:d:}\otimes \bm{H}_{:d:})^{T}\|_{F}^{\alpha}+\|\bm{W}_{(3)}(\bm{R}_{:d:}\otimes \bm{H}_{:d:})^{T}\|_{F}^{\alpha})\\
=&\sum_{d=1}^{D/P}\lambda(\|\bm{H}_{:d:}\|_{F}^{\alpha}+\|\bm{R}_{:d:}\|_{F}^{\alpha}+\|\bm{T}_{:d:}\|_{F}^{\alpha})\\
+&\frac{1}{\lambda}(\|\bm{W}\times_{2}\bm{R}_{:d:}\times_{3}\bm{T}_{:d:}\|_{F}^{\alpha}+\|\bm{W}\times_{3}\bm{T}_{:d:}\times_{1}\bm{H}_{:d:}\|_{F}^{\alpha}
+\|\bm{W}\times_{1}\bm{H}_{:d:}\times_{2}\bm{R}_{:d:}\|_{F}^{\alpha})
\end{align*}
Let $\lambda=\sqrt{\lambda_{1}/\lambda_{4}}$, we have that
\begin{align*}
&2\sqrt{\lambda_{1}\lambda_{4}}L(\bm{X};\alpha)\leq  
\sum_{d=1}^{D/P}\lambda_{1}(\|\bm{H}_{:d:}\|_{F}^{\alpha}+\|\bm{R}_{:d:}\|_{F}^{\alpha}+\|\bm{T}_{:d:}\|_{F}^{\alpha})\notag\\
&+\lambda_{4}(\|\bm{W}\times_{2}\bm{R}_{:d:}\times_{3}\bm{T}_{:d:}\|_{F}^{\alpha}+\|\bm{W}\times_{3}\bm{T}_{:d:}\times_{1}\bm{H}_{:d:}\|_{F}^{\alpha}+\|\bm{W}\times_{1}\bm{H}_{:d:}\times_{2}\bm{R}_{:d:}\|_{F}^{\alpha})
\end{align*}
Since $\bm{U}_{1}^{T}\bm{U}_{1}=\bm{I}_{r_1\times r_1},\bm{U}_{2}^{T}\bm{U}_{2}=\bm{I}_{r_2\times r_2}, \bm{U}_{3}^{T}\bm{U}_{3}=\bm{I}_{r_3\times r_3}$, thus we have $\bm{X}_{(1)}=\bm{U}_{1}\bm{U}_{1}^{T}\bm{X}_{(1)},\bm{X}_{(2)}=\bm{U}_{2}\bm{U}_{2}^{T}\bm{X}_{(2)}, \bm{X}_{(3)}=\bm{U}_{3}\bm{U}_{3}^{T}\bm{X}_{(3)}$, then
\begin{align*}
\bm{X}=&\bm{X}\times_{1}(\bm{U}_{1}\bm{U}_{1}^{T})\times_{2}(\bm{U}_{2}\bm{U}_{2}^{T})\times_{3}(\bm{U}_{3}\bm{U}_{3}^{T})\\
=&\bm{X}\times_{1}(\bm{U}_{1}\sqrt{\bm{\Sigma}_{1}}\sqrt{\bm{\Sigma}_{1}^{-1}}\bm{U}_{1}^{T})\times_{2}(\bm{U}_{2}\sqrt{\bm{\Sigma}_{2}}\sqrt{\bm{\Sigma}_{2}^{-1}}\bm{U}_{2}^{T})\times_{3}(\bm{U}_{3}\sqrt{\bm{\Sigma}_{3}}\sqrt{\bm{\Sigma}_{3}^{-1}}\bm{U}_{3}^{T})\\
=&\bm{X}\times_{1}\sqrt{\bm{\Sigma}_{1}^{-1}}\bm{U}_{1}^{T}\times_{2}\sqrt{\bm{\Sigma}_{2}^{-1}}\bm{U}_{2}^{T}\times_{3}\sqrt{\bm{\Sigma}_{3}^{-1}}\bm{U}_{3}^{T}\times_{1}\bm{U}_{1}\sqrt{\bm{\Sigma}_{1}}\times_{2}\bm{U}_{2}\sqrt{\bm{\Sigma}_{2}}\times_{3}\bm{U}_{3}\sqrt{\bm{\Sigma}_{3}}
\end{align*}
Let $P=D$ and $\bm{H}_{:1:}=\sqrt{\lambda_{1}/\lambda_{4}}^{\frac{-1}{\alpha}}\bm{U}_{1}\sqrt{\bm{\Sigma}_{1}}, \bm{R}_{:1:}=\sqrt{\lambda_{1}/\lambda_{4}}^{\frac{-1}{\alpha}}\bm{U}_{2}\sqrt{\bm{\Sigma}_{2}},\bm{T}_{:1:}=\sqrt{\lambda_{1}/\lambda_{4}}^{\frac{-1}{\alpha}}\bm{U}_{3}\sqrt{\bm{\Sigma}_{3}},\bm{W}=\sqrt{\lambda_{1}/\lambda_{4}}^{\frac{3}{\alpha}}\bm{X}\times_{1}\sqrt{\bm{\Sigma}_{1}^{-1}}\bm{U}_{1}^{T}\times_{2}\sqrt{\bm{\Sigma}_{2}^{-1}}\bm{U}_{2}^{T}\times_{3}\sqrt{\bm{\Sigma}_{3}^{-1}}\bm{U}_{3}^{T}$, then $\bm{X} =\bm{W}\times_{1}\bm{H}_{:1:}\times_{2}\bm{R}_{:1:}\times_{3}\bm{T}_{:1:}$ is a decomposition of $\bm{X}$.
Since 
\begin{align*}
    \bm{X}_{(1)} &= \sqrt{\lambda_{1}/\lambda_{4}}^{\frac{-1}{\alpha}}\bm{U}_{1}\sqrt{\bm{\Sigma}_{1}}\bm{W}_{(1)}(\bm{T}_{:1:}\otimes \bm{R}_{:1:})^{T}=\bm{U_{1}}\sqrt{\bm{\Sigma}_{1}}\sqrt{\bm{\Sigma}_{1}}\bm{V}_{1}^{T}\\
    \bm{X}_{(2)} &= \sqrt{\lambda_{1}/\lambda_{4}}^{\frac{-1}{\alpha}}\bm{U}_{2}\sqrt{\bm{\Sigma}_{2}}\bm{W}_{(2)}(\bm{T}_{:1:}\otimes \bm{H}_{:1:})^{T}=\bm{U_{2}}\sqrt{\bm{\Sigma}_{2}}\sqrt{\bm{\Sigma}_{2}}\bm{V}_{2}^{T}\\
    \bm{X}_{(3)} &= \sqrt{\lambda_{1}/\lambda_{4}}^{\frac{-1}{\alpha}}\bm{U}_{3}\sqrt{\bm{\Sigma}_{3}}\bm{W}_{(3)}(\bm{R}_{:1:}\otimes \bm{H}_{:1:})^{T}=\bm{U_{3}}\sqrt{\bm{\Sigma}_{3}}\sqrt{\bm{\Sigma}_{3}}\bm{V}_{3}^{T}
\end{align*}
thus
\begin{align*}
\bm{W}_{(1)}(\bm{T}_{:1:}\otimes \bm{R}_{:1:})^{T}&=\sqrt{\lambda_{1}/\lambda_{4}}^{\frac{1}{\alpha}}\sqrt{\bm{\Sigma}_{1}}\bm{V}_{1}^{T}\\
\bm{W}_{(2)}(\bm{T}_{:1:}\otimes \bm{H}_{:1:})^{T}&=\sqrt{\lambda_{1}/\lambda_{4}}^{\frac{1}{\alpha}}\sqrt{\bm{\Sigma}_{2}}\bm{V}_{2}^{T}\\
\bm{W}_{(3)}(\bm{R}_{:1:}\otimes \bm{H}_{:1:})^{T}&=\sqrt{\lambda_{1}/\lambda_{4}}^{\frac{1}{\alpha}}\sqrt{\bm{\Sigma}_{3}}\bm{V}_{3}^{T}
\end{align*}
If $P=D$, we have that
\begin{align*}
&\sum_{d=1}^{D/P}\lambda_{1}(\|\bm{H}_{:d:}\|_{F}^{\alpha}+\|\bm{R}_{:d:}\|_{F}^{\alpha}+\|\bm{T}_{:d:}\|_{F}^{\alpha})\\
+&\lambda_{4}(\|\bm{W}_{(1)}(\bm{T}_{:d:}\otimes \bm{R}_{:d:})^{T}\|_{F}^{\alpha}+\|\bm{W}_{(2)}(\bm{T}_{:d:}\otimes \bm{H}_{:d:})^{T}\|_{F}^{\alpha}+\|\bm{W}_{(3)}(\bm{R}_{:d:}\otimes \bm{H}_{:d:})^{T}\|_{F}^{\alpha})\\
=&\lambda_{1}(\|\bm{H}_{:1:}\|_{F}^{\alpha}+\|\bm{R}_{:1:}\|_{F}^{\alpha}+\|\bm{T}_{:1:}\|_{F}^{\alpha})\\
+&\lambda_{4}(\|\bm{W}\times_{2}\bm{R}_{:1:}\times_{3}\bm{T}_{:1:}\|_{F}^{\alpha}+\|\bm{W}\times_{3}\bm{T}_{:1:}\times_{1}\bm{H}_{:1:}\|_{F}^{\alpha}
+\|\bm{W}\times_{1}\bm{H}_{:1:}\times_{2}\bm{R}_{:1:}\|_{F}^{\alpha}))\\
=&\lambda_{1}(\|\bm{H}_{:1:}\|_{F}^{\alpha}+\|\bm{R}_{:1:}\|_{F}^{\alpha}+\|\bm{T}_{:1:}\|_{F}^{\alpha})\\
+&\lambda_{4}(\|\bm{W}_{(1)}(\bm{T}_{:1:}\otimes \bm{R}_{:1:})^T\|_{F}^{\alpha}+\|\bm{W}_{(2)}(\bm{T}_{:1:}\otimes \bm{H}_{:1:})^T\|_{F}^{\alpha}+\|\bm{W}_{(3)}(\bm{R}_{:1:}\otimes \bm{H}_{:1:})^T\|_{F}^{\alpha})\\
=&\sqrt{\lambda_{1}\lambda_{4}}(\|\bm{U}_{1}\sqrt{\bm{\Sigma}_{1}}\|_{F}^{\alpha}+\|\bm{U}_{2}\sqrt{\bm{\Sigma}_{2}}\|_{F}^{\alpha}+\|\bm{U}_{3}\sqrt{\bm{\Sigma}_{3}}\|_{F}^{\alpha})\\
+&\sqrt{\lambda_{1}\lambda_{4}}(\|\sqrt{\bm{\Sigma}_{1}}\bm{V}_{1}^{T}\|_{F}^{\alpha}+\|\sqrt{\bm{\Sigma}_{2}}\bm{V}_{2}^{T}\|_{F}^{\alpha}+\|\sqrt{\bm{\Sigma}_{3}}\bm{V}_{3}^{T}\|_{F}^{\alpha})\\
=&\sqrt{\lambda_{1}\lambda_{4}}(\|\sqrt{\bm{\Sigma}_{1}}\|_{F}^{\alpha}+\|\sqrt{\bm{\Sigma}_{2}}\|_{F}^{\alpha}+\|\sqrt{\bm{\Sigma}_{3}}\|_{F}^{\alpha})+\sqrt{\lambda_{1}\lambda_{4}}(\|\sqrt{\bm{\Sigma}_{1}}\|_{F}^{\alpha}+\|\sqrt{\bm{\Sigma}_{2}}\|_{F}^{\alpha}+\|\sqrt{\bm{\Sigma}_{3}}\|_{F}^{\alpha})\\
=&\sqrt{\lambda_{1}\lambda_{4}}(\|\bm{X}_{(1)}\|_{*}^{\alpha}+\|\bm{X}_{(2)}\|_{*}^{\alpha}+\|\bm{X}_{(3)}\|_{*}^{\alpha}+\|\bm{X}_{(1)}\|_{*}^{\alpha}+\|\bm{X}_{(2)}\|_{*}^{\alpha}+\|\bm{X}_{(3)}\|_{*}^{\alpha})\\
=&2L(\bm{X};\alpha)
\end{align*}
\end{proof}

\begin{proposition}
\label{proposition:a.2}
For any $\bm{X}$, and for any decomposition of $\bm{X}$, $\bm{X}$, $\bm{X} = \sum_{d=1}^{D/P}\bm{W}\times_{1}\bm{H}_{:d:}\times_{2}\bm{R}_{:d:}\times_{3}\bm{T}_{:d:}$, we have
\begin{align}
&2\sqrt{\lambda_{2}\lambda_{3}}L(\bm{X})\leq \sum_{d=1}^{D/P}
    \lambda_{2}(\|\bm{T}_{:d:}\|_{F}^{\alpha}\|\bm{R}_{:d:}\|_{F}^{\alpha}+\|\bm{T}_{:d:}\|_{F}^{\alpha}\|\bm{H}_{:d:}\|_{F}^{\alpha}+\|\bm{R}_{:d:}\|_{F}^{\alpha}\|\bm{H}_{:d:}\|_{F}^{\alpha})\notag\\
    &+\lambda_{3}(\|\bm{W}\times_{1}\bm{H}_{:d:}\|_{F}^{\alpha}+\|\bm{W}\times_{2}\bm{R}_{:d:}\|_{F}^{\alpha}+\|\bm{W}\times_{3}\bm{T}_{:d:}\|_{F}^{\alpha})
\end{align}
And there exists some $\bm{X}^{'}$, and for any decomposition of $\bm{X}^{'}$, such that the two sides of Eq.(\ref{equation:6}) can not achieve equality.
\end{proposition}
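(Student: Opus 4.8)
The plan is to mirror the argument of Proposition \ref{proposition:a.1}, but starting from the \emph{other} factorization of the unfoldings given in Section \ref{section:3.2}. First I would recall that $\bm{X}_{(1)} = \sum_{d}(\bm{W}\times_{1}\bm{H}_{:d:})_{(1)}(\bm{T}_{:d:}\otimes\bm{R}_{:d:})^{T}$, and analogously for $\bm{X}_{(2)}$ and $\bm{X}_{(3)}$. Using subadditivity of the trace norm together with subadditivity of $t\mapsto t^{\alpha/2}$, I would upper bound $\|\bm{X}_{(n)}\|_{*}^{\alpha/2}$ by the sum over $d$ of the trace norms of the individual blocks, and then apply Lemma \ref{lemma:a.1} (with its exponent set to $\alpha/2$) to each block, taking $\bm{U}$ to be the factor $(\bm{W}\times_{n}\cdot)_{(n)}$ and $\bm{V}$ the Kronecker factor. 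The two identities that collapse the result to the r.h.s. of Eq.(\ref{equation:6}) are $\|(\bm{W}\times_{n}\bm{A})_{(n)}\|_{F}=\|\bm{W}\times_{n}\bm{A}\|_{F}$ and $\|\bm{A}\otimes\bm{B}\|_{F}=\|\bm{A}\|_{F}\|\bm{B}\|_{F}$. Choosing the free parameter of Lemma \ref{lemma:a.1} as $\lambda=\sqrt{\lambda_{3}/\lambda_{2}}$, so that the $\lambda$-weighted terms are exactly the $\|\bm{W}\times_{n}\cdot\|_{F}^{\alpha}$ terms and the $\lambda^{-1}$-weighted terms are the Kronecker products, and multiplying through by $\sqrt{\lambda_{2}\lambda_{3}}$, produces $2\sqrt{\lambda_{2}\lambda_{3}}L(\bm{X};\alpha)$ on the left and precisely the r.h.s. of Eq.(\ref{equation:6}) on the right. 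This part is routine and parallels Proposition \ref{proposition:a.1} line for line.

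The substantive part is the second claim, that equality fails for some $\bm{X}'$. Here I would contrast with Proposition \ref{proposition:a.1}: there the free core $\bm{W}$ sits on the \emph{same} side as the Kronecker factor, so $\bm{W}_{(n)}(\cdot\otimes\cdot)^{T}$ can be arranged to equal $\sqrt{\bm{\Sigma}_{n}}\bm{V}_{n}^{T}$ exactly, which is what gave tightness. In Eq.(\ref{equation:6}) the core has moved to the other side (it is coupled with the embeddings inside $\bm{W}\times_{n}\cdot$), so the $\bm{V}$-factor of each block is the \emph{bare} Kronecker product $\bm{T}_{:d:}\otimes\bm{R}_{:d:}$ (resp. $\bm{T}_{:d:}\otimes\bm{H}_{:d:}$, $\bm{R}_{:d:}\otimes\bm{H}_{:d:}$) with no free mixing matrix. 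Tracing the equality conditions, equality in Lemma \ref{lemma:a.1} would force the span of these Kronecker factors to be the right-singular subspace (row space) of the corresponding unfolding, while equality in the subadditivity steps forces the per-$d$ blocks to be positively aligned. The obstruction is that the span of Kronecker factors is always a \emph{product} subspace of $\mathbb{R}^{n_{3}}\otimes\mathbb{R}^{n_{2}}$, whereas a generic row space is not of product form.

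Concretely, I would exhibit an explicit small $\bm{X}'$, for instance a multilinear rank-$(2,2,2)$ tensor $u_{1}\circ v_{1}\circ w_{1}+u_{2}\circ v_{2}\circ w_{2}$ with generic factors, whose mode-$1$ row space $\mathrm{span}\{w_{1}\otimes v_{1},\,w_{2}\otimes v_{2}\}$ is not a product subspace. For such $\bm{X}'$ the two equality regimes are mutually exclusive: a single-block decomposition cannot span a non-product row space with a bare Kronecker factor, while any multi-block decomposition that does recover the row space incurs a strict loss in the subadditivity step. Hence the minimum over decompositions of the r.h.s. of Eq.(\ref{equation:6}) strictly exceeds $2\sqrt{\lambda_{2}\lambda_{3}}L(\bm{X}';\alpha)$.

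The hard part is precisely this universal quantifier over \emph{all} decompositions (arbitrary $P$ and arbitrary $\bm{W}$), rather than over a single natural one. I would address it not by enumerating decompositions but by extracting from the equality conditions a decomposition-independent structural consequence for $\bm{X}'$ — that the right-singular subspaces of its unfoldings must be product subspaces — and then choosing $\bm{X}'$ to violate this property, which rules out every decomposition simultaneously. Making the ``strict loss in subadditivity'' quantitative for the chosen $\bm{X}'$, so as to certify a strict gap uniformly, is the step I expect to require the most care.
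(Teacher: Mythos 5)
Your first part is the paper's proof in all but notation: the same factorizations $\bm{X}_{(n)}=\sum_{d}(\bm{W}\times_{n}\cdot)_{(n)}(\cdot\otimes\cdot)^{T}$, the same per-block application of Lemma \ref{lemma:a.1} with the Kronecker product as the $\bm{V}$-factor, the same identities $\|(\bm{W}\times_{n}\bm{A})_{(n)}\|_{F}=\|\bm{W}\times_{n}\bm{A}\|_{F}$ and $\|\bm{A}\otimes\bm{B}\|_{F}=\|\bm{A}\|_{F}\|\bm{B}\|_{F}$, and the same balancing of the free parameter. Your second part also starts exactly where the paper does: the paper's counterexample is $\bm{X}'\in\mathbb{R}^{2\times2\times2}$ with $\bm{X}'_{111}=\bm{X}'_{222}=1$, an instance of your $u_{1}\circ v_{1}\circ w_{1}+u_{2}\circ v_{2}\circ w_{2}$ family, and its single-block obstruction is the integer-rank version of yours: equality in Lemma \ref{lemma:a.1} forces $\rank(\bm{U})=\rank(\bm{V})=\rank(\bm{Z})$, and $\rank(\bm{A}\otimes\bm{B})=\rank(\bm{A})\rank(\bm{B})$ then yields $\rank(\bm{T})\rank(\bm{R})=\rank(\bm{T})\rank(\bm{H})=\rank(\bm{R})\rank(\bm{H})=2$, which has no integer solution. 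Your product-subspace formulation is a finer variant of the same idea (it already fails in one mode, while the rank count needs all three), not a different route.

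The genuine gap is exactly where you predicted, and your proposed fix would fail as stated. For a multi-block decomposition ($D/P>1$) the right factor of the mode-$n$ unfolding is the concatenation $[\bm{T}_{:1:}\otimes\bm{R}_{:1:},\dots,\bm{T}_{:D/P:}\otimes\bm{R}_{:D/P:}]$, whose column space is a \emph{sum} of product subspaces and need not be a product subspace; so ``the right-singular subspace must be of product form'' is not a decomposition-independent consequence of equality. Concretely, for this very $\bm{X}'$ the CP decomposition ($P=1$, $\bm{W}=1$, blocks $e_{d}\circ e_{d}\circ e_{d}$) has Kronecker columns $e_{1}\otimes e_{1}$ and $e_{2}\otimes e_{2}$, which span exactly the row space of each unfolding, so your structural criterion excludes nothing. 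The whole burden then falls on the ``strict loss in subadditivity,'' and that loss exists only for $\alpha<2$, where $t\mapsto t^{\alpha/2}$ is strictly subadditive and forces all blocks but one to vanish, reducing to the single-block case. At $\alpha=2$ no argument can close the gap, because the claim itself fails for this $\bm{X}'$: taking $\bm{H}=\bm{R}=\bm{T}=(\lambda_{3}/\lambda_{2})^{1/8}\bm{I}_{2}$ and scalar core $\bm{W}=(\lambda_{2}/\lambda_{3})^{3/8}$ gives a decomposition of $\bm{X}'$ for which the triangle inequality, Lemma \ref{lemma:a.1}, and the AM--GM step are all equalities, so the two sides of Eq.(\ref{equation:6}) coincide. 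So your plan needs the restriction $\alpha<2$ (or a different $\bm{X}'$). For what it is worth, the paper's own proof leaps over the same hole: it asserts that equality ``holds only if $P=D$'' without justification, and that assertion is precisely what strict subadditivity buys when $\alpha<2$ and what is false when $\alpha=2$ --- so your honesty about this being the delicate step is better calibrated than the paper's treatment, but neither resolves it.
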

\begin{proof}
Since $\bm{X}_{(1)} = \sum_{d=1}^{D/P}(\bm{H}_{:d:}\bm{W}_{(1)})(\bm{T}_{:d:}\otimes \bm{R}_{:d:})^{T}, \bm{X}_{(2)} =\sum_{d=1}^{D/P}(\bm{R}_{:d:}\bm{W}_{(2)})(\bm{T}_{:d:}\otimes \bm{H}_{:d:})^{T},\bm{X}_{(3)} = \sum_{d=1}^{D/P}(\bm{T}_{:d:}\bm{W}_{(3)})(\bm{R}_{:d:}\otimes \bm{H}_{:d:})^{T}$ and for any matrix $\bm{A},\bm{B}$, $\|\bm{A}\otimes \bm{B}\|_{F}^{\alpha}=\|\bm{A}\|_{F}^{\alpha}\|\bm{B}\|_{F}^{\alpha}$, by applying Lemma \ref{lemma:a.1} to $\bm{X}_{(1)},\bm{X}_{(2)},\bm{X}_{(3)}$, we have that
\begin{align*}
&2L(\bm{X};\alpha)\\
\leq&\sum_{d=1}^{D/P}\|(\bm{H}_{:d:}\bm{W}_{(1)})(\bm{T}_{:d:}\otimes \bm{R}_{:d:})^{T}\|_{*}^{\alpha/2}+\|(\bm{R}_{:d:}\bm{W}_{(2)})(\bm{T}_{:d:}\otimes \bm{H}_{:d:})^{T}\|_{*}^{\alpha/2}+\|(\bm{T}_{:d:}\bm{W}_{(3)})(\bm{R}_{:d:}\otimes \bm{H}_{:d:})^{T}\|_{*}^{\alpha/2}\\
\leq &\sum_{d=1}^{D/P}\lambda(\|\bm{T}_{:d:}\otimes \bm{R}_{:d:}\|_{F}^{\alpha}+\|\bm{T}_{:d:}\otimes \bm{H}_{:d:}\|_{F}^{\alpha}+\|\bm{R}_{:d:}\otimes \bm{H}_{:d:}\|_{F}^{\alpha})\\
+&\frac{1}{\lambda}(\|\bm{H}_{:d:}\bm{W}_{(1)}\|_{F}^{\alpha}+\|\bm{R}_{:d:}\bm{W}_{(2)}\|_{F}^{\alpha}+\|\bm{T}_{:d:}\bm{W}_{(3)}\|_{F}^{\alpha})\\
=&\sum_{d=1}^{D/P}\lambda(\|\bm{T}_{:d:}\|_{F}^{\alpha}\|\bm{R}_{:d:}\|_{F}^{\alpha}+\|\bm{T}_{:d:}\|_{F}^{\alpha}\|\bm{H}_{:d:}\|_{F}^{\alpha}+\|\bm{R}_{:d:}\|_{F}^{\alpha}\|\bm{H}_{:d:}\|_{F}^{\alpha})\\
&+\lambda(\|\bm{W}\times_{1}\bm{H}_{:d:}\|_{F}^{\alpha}+\|\bm{W}\times_{2}\bm{R}_{:d:}\|_{F}^{\alpha}+\|\bm{W}\times_{3}\bm{T}_{:d:}\|_{F}^{\alpha})
\end{align*}
Let $\lambda=\sqrt{\lambda_{2}/\lambda_{3}}$, we have that
\begin{align}
&2\sqrt{\lambda_{2}\lambda_{3}}L(\bm{X})\leq \sum_{d=1}^{D/P}
    \lambda_{2}(\|\bm{T}_{:d:}\|_{F}^{\alpha}\|\bm{R}_{:d:}\|_{F}^{\alpha}+\|\bm{T}_{:d:}\|_{F}^{\alpha}\|\bm{H}_{:d:}\|_{F}^{\alpha}+\|\bm{R}_{:d:}\|_{F}^{\alpha}\|\bm{H}_{:d:}\|_{F}^{\alpha})\notag\\
    &+\lambda_{3}(\|\bm{W}\times_{1}\bm{H}_{:d:}\|_{F}^{\alpha}+\|\bm{W}\times_{2}\bm{R}_{:d:}\|_{F}^{\alpha}+\|\bm{W}\times_{3}\bm{T}_{:d:}\|_{F}^{\alpha})
\end{align}
Let $\bm{X}^{'}\in\mathbb{R}^{2\times 2\times 2}$, $\bm{X}_{1,1,1}^{'}=\bm{X}_{2,2,2}^{'}=1$ and $\bm{X}_{ijk}^{'}=0$ otherwise. Then 
\begin{equation*}
\bm{X}^{'}_{(1)}=\bm{X}^{'}_{(2)}=\bm{X}^{'}_{(3)}=\left(
\begin{array}{cccc}
1 &0 &0 &0  \\
0 &0 &0 &1 
\end{array}
\right)    
\end{equation*}
Thus $\rank(\bm{X}_{(1)}^{'})=\rank(\bm{X}_{(2)}^{'})=\rank(\bm{X}_{(3)}^{'})=2$. In Lemma \ref{lemma:a.1}, a necessary condition of the equality holds is that $\rank(\bm{U})=\rank(\bm{V})=\rank(\bm{Z})$. Thus, the equality holds only if $P=D$ and
\begin{align*}
    &\rank(\bm{X}_{(1)}^{'})=\rank(\bm{T}_{:1:}\otimes \bm{R}_{:1:})=2,\rank(\bm{X}_{(2)}^{'})=\rank(\bm{T}_{:1:}\otimes \bm{H}_{:1:})=2\\
    &\rank(\bm{X}_{(3)}^{'})=\rank(\bm{R}_{:1:}\otimes \bm{H}_{:1:})=2
\end{align*}
Since for any matrix $\bm{A},\bm{B},\rank(\bm{A}\otimes\bm{B})=\rank(\bm{A})\rank(\bm{B})$, we have that
$$\rank(\bm{T}_{:1:})\rank(\bm{R}_{:1:})=2,\rank(\bm{T}_{:1:})\rank(\bm{H}_{:1:})=2,\rank(\bm{R}_{:1:})\rank(\bm{H}_{:1:})=2$$
The above equations have no non-negative integer solution, thus there is no decomposition of $\bm{X^{'}}$ such that
\begin{align*}
&2\sqrt{\lambda_{2}\lambda_{3}}L(\bm{X})= \sum_{d=1}^{D/P}
    \lambda_{2}(\|\bm{T}_{:d:}\|_{F}^{\alpha}\|\bm{R}_{:d:}\|_{F}^{\alpha}+\|\bm{T}_{:d:}\|_{F}^{\alpha}\|\bm{H}_{:d:}\|_{F}^{\alpha}+\|\bm{R}_{:d:}\|_{F}^{\alpha}\|\bm{H}_{:d:}\|_{F}^{\alpha})\notag\\
    &+\lambda_{3}(\|\bm{W}\times_{1}\bm{H}_{:d:}\|_{F}^{\alpha}+\|\bm{W}\times_{2}\bm{R}_{:d:}\|_{F}^{\alpha}+\|\bm{W}\times_{3}\bm{T}_{:d:}\|_{F}^{\alpha})
\end{align*}
\end{proof}

\begin{proposition}
\label{proposition:a.3}
For any $\bm{X}$, and for any decomposition of $\bm{X}$, $\bm{X} = \sum_{d=1}^{D}\bm{W}\times_{1}\bm{H}_{:d1}\times_{2}\bm{R}_{:d1}\times_{3}\bm{T}_{:d1}$, we have
\begin{align*}
&2\sqrt{\lambda_{1}\lambda_{4}}L(\bm{X};2)\leq  6\sqrt{\lambda_{1}\lambda_{4}}\|\bm{X}\|_{*}\leq
\sum_{d=1}^{D}\lambda_{1}(\|\bm{H}_{:d1}\|_{F}^{2}+\|\bm{R}_{:d1}\|_{F}^{2}+\|\bm{T}_{:d1}\|_{F}^{2})\\
&+\lambda_{4}(\|\bm{W}\times_{2}\bm{R}_{:d1}\times_{3}\bm{T}_{:d1}\|_{F}^{2}+\|\bm{W}\times_{3}\bm{T}_{:d1}\times_{1}\bm{H}_{:d1}\|_{F}^{2}+\|\bm{W}\times_{1}\bm{H}_{:d1}\times_{2}\bm{R}_{:d1}\|_{F}^{2})
\end{align*}
where $\bm{W}=1$.
\end{proposition}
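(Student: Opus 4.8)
The plan is to prove the two inequalities of the chain separately, exploiting that the hypotheses $\bm{W}=1$ and $P=1$ force the stated decomposition to be an ordinary CP decomposition: each $\bm{H}_{:d1},\bm{R}_{:d1},\bm{T}_{:d1}$ is a single vector, and the mode products collapse to outer products of these three vectors. Because $\bm{W}=1$ is a scalar, every intermediate-variable norm factorizes, e.g. $\|\bm{W}\times_{2}\bm{R}_{:d1}\times_{3}\bm{T}_{:d1}\|_{F}=\|\bm{R}_{:d1}\|_{F}\|\bm{T}_{:d1}\|_{F}$ and cyclically, so the $\lambda_{4}$ block of the right-hand side equals $\sum_{d}\lambda_{4}(\|\bm{R}_{:d1}\|_{F}^{2}\|\bm{T}_{:d1}\|_{F}^{2}+\|\bm{T}_{:d1}\|_{F}^{2}\|\bm{H}_{:d1}\|_{F}^{2}+\|\bm{H}_{:d1}\|_{F}^{2}\|\bm{R}_{:d1}\|_{F}^{2})$. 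I record this simplification first, since both inequalities are stated in these terms.

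For the left inequality I would show that the overlapped trace norm is dominated by three copies of the tensor nuclear norm. Fix a CP decomposition realizing $\|\bm{X}\|_{*}$. The mode-$1$ unfolding is $\bm{X}_{(1)}=\sum_{d}\bm{H}_{:d1}(\bm{T}_{:d1}\otimes\bm{R}_{:d1})^{T}$, a sum of rank-one matrices, so by subadditivity of the matrix trace norm together with the rank-one identity $\|\bm{a}\bm{b}^{T}\|_{*}=\|\bm{a}\|_{F}\|\bm{b}\|_{F}$ and the Kronecker factorization $\|\bm{T}_{:d1}\otimes\bm{R}_{:d1}\|_{F}=\|\bm{T}_{:d1}\|_{F}\|\bm{R}_{:d1}\|_{F}$, one gets $\|\bm{X}_{(1)}\|_{*}\le\sum_{d}\|\bm{H}_{:d1}\|_{F}\|\bm{R}_{:d1}\|_{F}\|\bm{T}_{:d1}\|_{F}=\|\bm{X}\|_{*}$. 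Running the same argument on modes $2$ and $3$ gives $\|\bm{X}_{(2)}\|_{*}\le\|\bm{X}\|_{*}$ and $\|\bm{X}_{(3)}\|_{*}\le\|\bm{X}\|_{*}$; summing and using $L(\bm{X};2)=\|\bm{X}_{(1)}\|_{*}+\|\bm{X}_{(2)}\|_{*}+\|\bm{X}_{(3)}\|_{*}$ yields $L(\bm{X};2)\le 3\|\bm{X}\|_{*}$, and multiplying through by $2\sqrt{\lambda_{1}\lambda_{4}}$ gives exactly the left inequality.

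For the right inequality I would take an arbitrary decomposition and bound it term by term. Writing $a=\|\bm{H}_{:d1}\|_{F}$, $b=\|\bm{R}_{:d1}\|_{F}$, $c=\|\bm{T}_{:d1}\|_{F}$, the AM-GM inequality applied to three well-chosen pairs gives $\lambda_{1}a^{2}+\lambda_{4}b^{2}c^{2}\ge 2\sqrt{\lambda_{1}\lambda_{4}}\,abc$, and cyclically $\lambda_{1}b^{2}+\lambda_{4}c^{2}a^{2}\ge 2\sqrt{\lambda_{1}\lambda_{4}}\,abc$ and $\lambda_{1}c^{2}+\lambda_{4}a^{2}b^{2}\ge 2\sqrt{\lambda_{1}\lambda_{4}}\,abc$. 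Adding these three shows that the $d$-th summand on the right-hand side is at least $6\sqrt{\lambda_{1}\lambda_{4}}\,\|\bm{H}_{:d1}\|_{F}\|\bm{R}_{:d1}\|_{F}\|\bm{T}_{:d1}\|_{F}$. Summing over $d$ bounds the whole right-hand side below by $6\sqrt{\lambda_{1}\lambda_{4}}\sum_{d}\|\bm{H}_{:d1}\|_{F}\|\bm{R}_{:d1}\|_{F}\|\bm{T}_{:d1}\|_{F}$, and since $\|\bm{X}\|_{*}$ is by definition the minimum of this last sum over all decompositions, the right-hand side is at least $6\sqrt{\lambda_{1}\lambda_{4}}\|\bm{X}\|_{*}$.

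The step requiring the most care is the structural estimate $\|\bm{X}_{(n)}\|_{*}\le\|\bm{X}\|_{*}$ underpinning the left inequality: it must be applied to the minimizing decomposition that realizes the tensor nuclear norm, and it rests on the rank-one trace-norm identity and the Kronecker-Frobenius factorization. The right inequality, by contrast, is a clean pairing argument via AM-GM. Chaining the two displays then delivers the full statement.
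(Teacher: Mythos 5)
Your proof is correct, and on the first inequality it takes a genuinely more elementary route than the paper. The paper obtains $\|\bm{X}_{(n)}\|_{*}\leq\|\bm{X}\|_{*}$ indirectly: it first proves, via Cauchy--Schwarz/AM--GM together with an explicit rebalancing of the CP factors (rescaling $\bm{H}_{:d1}$ and $\bm{R}_{:d1}$ so that $\sqrt{\lambda_{4}/\lambda_{1}}\|\bm{R}_{:d1}\|_{F}\|\bm{T}_{:d1}\|_{F}=\|\bm{H}_{:d1}\|_{F}$), the variational identity $2\sqrt{\lambda_{1}\lambda_{4}}\|\bm{X}\|_{*}=\min_{S_{2}}\sum_{d}\bigl(\lambda_{1}\|\bm{H}_{:d1}\|_{F}^{2}+\lambda_{4}\|\bm{R}_{:d1}\|_{F}^{2}\|\bm{T}_{:d1}\|_{F}^{2}\bigr)$ over the set $S_{2}$ of CP decompositions, then invokes the analogous trace-norm characterization (Lemma \ref{lemma:a.1}/Proposition \ref{proposition:a.1}) as a minimum over the larger set $S_{1}$ of all block-term decompositions, and concludes from the inclusion $S_{2}\subseteq S_{1}$. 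You get the same inequality in one step: subadditivity of the matrix trace norm applied to the rank-one expansion $\bm{X}_{(1)}=\sum_{d}\bm{H}_{:d1}(\bm{T}_{:d1}\otimes\bm{R}_{:d1})^{T}$, the identity $\|\bm{a}\bm{b}^{T}\|_{*}=\|\bm{a}\|_{F}\|\bm{b}\|_{F}$, and the Kronecker factorization --- no variational characterization, no rebalancing, no domain-inclusion argument. For the second inequality the two arguments are at bottom the same AM--GM pairing; the paper runs it in aggregated form because it also wants tightness of its variational identity, whereas you only need the inequality direction, which is all the chained statement requires, so your version is leaner. Two small polish points: your left inequality does not actually need the minimizer realizing $\|\bm{X}\|_{*}$ to be attained, since subadditivity bounds $\|\bm{X}_{(n)}\|_{*}$ by $\sum_{d}\|\bm{H}_{:d1}\|_{F}\|\bm{R}_{:d1}\|_{F}\|\bm{T}_{:d1}\|_{F}$ for \emph{every} decomposition and you can then pass to the infimum; and it is worth stating explicitly that $\alpha=2$ gives $L(\bm{X};2)=\|\bm{X}_{(1)}\|_{*}+\|\bm{X}_{(2)}\|_{*}+\|\bm{X}_{(3)}\|_{*}$, which is where the factor $3$ (hence $6$) comes from.
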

\begin{proof}
$\forall \bm{X}$, let $S_{1}=\{(\bm{W},\bm{H},\bm{R},\bm{T})|\bm{X} = \sum_{d=1}^{D}\bm{W}\times_{1}\bm{H}_{:d:}\times_{2}\bm{R}_{:d:}\times_{3}\bm{T}_{:d:}\},S_{2}=\{(\bm{W},\bm{H},\bm{R},\bm{T})|\bm{X} = \sum_{d=1}^{D}\bm{W}\times_{1}\bm{H}_{:d1}\times_{2}\bm{R}_{:d1}\times_{3}\bm{T}_{:d1},\bm{W}=1\}$. We have that
\begin{align*}
&2\sqrt{\lambda_{1}\lambda_{4}}(\sum_{d=1}^{D}\|\bm{H}_{:d1}\|_{F}\|\bm{R}_{:d1}\|_{F}\|\bm{T}_{:d1}\|_{F})\\
\leq&2(\sum_{d=1}^{D}\lambda_{4}\|\bm{R}_{:d1}\|_{F}^{2}\|\bm{T}_{:d1}\|_{F}^{2})^{1/2}(\sum_{d=1}^{D}\lambda_{1}\|\bm{H}_{:d1}\|_{F}^{2})^{1/2}\\
=&\sum_{d=1}^{D}\lambda_{1}\|\bm{H}_{:d1}\|_{F}^{2}+\lambda_{4}\|\bm{W}\times_{2}\bm{R}_{:d:}\times_{3}\bm{T}_{:d:}\|_{F}^{2}
\end{align*}
where $\bm{W}=1$. The equality holds if and only if $\lambda_{4}\|\bm{R}_{:d1}\|_{F}^{2}\|\bm{T}_{:d1}\|_{F}^{2}=\lambda_{1}\|\bm{H}_{:d1}\|_{F}^{2}$, i.e., $\sqrt{\lambda_{4}/\lambda_{1}}\|\bm{R}_{:d1}\|_{2}\|\bm{T}_{:d1}\|_{2}=\|\bm{H}_{:d1}\|_{2}$. For a CP decomposition of $\bm{X}$, $\bm{X} = \sum_{d=1}^{D}\bm{W}\times_{1}\bm{H}_{:d1}\times_{2}\bm{R}_{:d1}\times_{3}\bm{T}_{:d1}$, we let $\bm{H}^{'}_{:d1}=\sqrt{\frac{\|\bm{R}_{:d1}\|_{2}\|\bm{T}_{:d1}\|_{2}}{\|\bm{H}_{:d1}\|_{2}}}\bm{H}_{:i},\bm{R}^{'}_{:i}=\sqrt{\frac{\|\bm{H}_{:d1}\|_{2}}{\|\bm{R}_{:d1}\|_{2}\|\bm{T}_{:d1}\|_{2}}}\bm{R}_{:i},\bm{T}^{'}_{:i}=\bm{T}_{:i}$ if $\|\bm{H}_{:d1}\|_{2}\neq0,\|\bm{R}_{:d1}\|_{2}\neq 0, \|\bm{T}_{:d1}\|_{2}\neq 0$ and otherwise $\bm{H}^{'}_{:d}=\bm{0},\bm{R}^{'}_{:d}=\bm{0},\bm{T}^{'}_{:d}=\bm{0}$. Then $\bm{X} = \sum_{d=1}^{D}\bm{W}\times_{1}\bm{H}^{'}_{:d1}\times_{2}\bm{R}^{'}_{:d1}\times_{3}\bm{T}^{'}_{:d1}$ is another CP decomposition of $\bm{X}$ and $\|\bm{R}^{'}_{:d1}\|_{2}\|\bm{T}^{'}_{:d1}\|_{2}=\|\bm{H}^{'}_{:d1}\|_{2}$. Thus
$$2\sqrt{\lambda_{1}\lambda_{4}}\|\bm{X}\|_{*}=\min_{S_2}\sum_{d=1}^{D}\lambda_{1}\|\bm{H}_{:d1}\|_{F}^{2}+\lambda_{4}\|\bm{W}\times_{2}\bm{R}_{:d:}\times_{3}\bm{T}_{:d:}\|_{F}^{2}$$
Similarly, we can get
$$2\sqrt{\lambda_{1}\lambda_{4}}\|\bm{X}\|_{*}=\min_{S_2}\sum_{d=1}^{D}\lambda_{1}\|\bm{R}_{:d1}\|_{F}^{2}+\lambda_{4}\|\bm{W}\times_{2}\bm{T}_{:d:}\times_{3}\bm{H}_{:d:}\|_{F}^{2}$$
$$2\sqrt{\lambda_{1}\lambda_{4}}\|\bm{X}\|_{*}=\min_{S_2}\sum_{d=1}^{D}\lambda_{1}\|\bm{T}_{:d1}\|_{F}^{2}+\lambda_{4}\|\bm{W}\times_{2}\bm{H}_{:d:}\times_{3}\bm{R}_{:d:}\|_{F}^{2}$$
By Propostion \ref{proposition:a.1}, we have that 
$$2\sqrt{\lambda_{1}\lambda_{4}}\|\bm{X}_{(1)}\|_{*}=\sum_{d=1}^{D}\lambda_{1}\|\bm{H}_{:d1}\|_{F}^{2}+\lambda_{4}\|\bm{W}\times_{2}\bm{R}_{:d:}\times_{3}\bm{T}_{:d:}\|_{F}^{2}$$
Since $S_{2}$ is a subset of $S_{1}$, we have that
$$\|\bm{X}_{(1)}\|_{*}\leq\|\bm{X}\|_{*}$$
Similarly, we can prove that $\|\bm{X}_{(2)}\|_{*}\leq\|\bm{X}\|_{*}$ and $\|\bm{X}_{(3)}\|_{*}\leq\|\bm{X}\|_{*}$, thus
\begin{align*}
&2\sqrt{\lambda_{1}\lambda_{4}}L(\bm{X};2)\leq  6\sqrt{\lambda_{1}\lambda_{4}}\|\bm{X}\|_{*}\leq
\sum_{d=1}^{D}\lambda_{1}(\|\bm{H}_{:d1}\|_{F}^{2}+\|\bm{R}_{:d1}\|_{F}^{2}+\|\bm{T}_{:d1}\|_{F}^{2})\\
&+\lambda_{4}(\|\bm{W}\times_{2}\bm{R}_{:d1}\times_{3}\bm{T}_{:d1}\|_{F}^{2}+\|\bm{W}\times_{3}\bm{T}_{:d1}\times_{1}\bm{H}_{:d1}\|_{F}^{2}+\|\bm{W}\times_{1}\bm{H}_{:d1}\times_{2}\bm{R}_{:d1}\|_{F}^{2})
\end{align*}
\end{proof}

\begin{proposition}
\label{proposition:a.4}
For any $\bm{X}$, there exists a decomposition of $\bm{X} = \sum_{d=1}^{D/P}\hat{\bm{W}}\times_{1}\hat{\bm{H}}_{:d:}\times_{2}\hat{\bm{R}}_{:d:}\times_{3}\hat{\bm{T}}_{:d:}$, such that 
\begin{align*}
&\sum_{d=1}^{D/P}\lambda_{1}(\|\hat{\bm{H}}_{:d:}\|_{F}^{\alpha}+\|\hat{\bm{R}}_{:d:}\|_{F}^{\alpha}+\|\hat{\bm{T}}_{:d:}\|_{F}^{\alpha})\\
+&\lambda_{4}(\|\hat{\bm{W}}\times_{2}\hat{\bm{R}}_{:d:}\times_{3}\hat{\bm{T}}_{:d:}\|_{F}^{\alpha}+\|\hat{\bm{W}}\times_{3}\hat{\bm{T}}_{:d:}\times_{1}\hat{\bm{H}}_{:d:}\|_{F}^{\alpha}
+\|\hat{\bm{W}}\times_{1}\hat{\bm{H}}_{:d:}\times_{2}\hat{\bm{R}}_{:d:}\|_{F}^{\alpha})\\
<&\sqrt{\lambda_{1}\lambda_{4}}/\sqrt{\lambda_{2}\lambda_{3}}\sum_{d=1}^{D/P}\lambda_{2}(\|\hat{\bm{T}}_{:d:}\|_{F}^{\alpha}\|\hat{\bm{R}}_{:d:}\|_{F}^{\alpha}+\|\hat{\bm{T}}_{:d:}\|_{F}^{\alpha}\|\hat{\bm{H}}_{:d:}\|_{F}^{\alpha}+\|\hat{\bm{R}}_{:d:}\|_{F}^{\alpha}\|\hat{\bm{H}}_{:d:}\|_{F}^{\alpha})\\
+&\lambda_{3}(\|\hat{\bm{W}}\times_{1}\hat{\bm{H}}_{:d:}\|_{F}^{\alpha}+\|\hat{\bm{W}}\times_{2}\hat{\bm{R}}_{:d:}\|_{F}^{\alpha}+\|\hat{\bm{W}}\times_{3}\hat{\bm{T}}_{:d:}\|_{F}^{\alpha})
\end{align*}
Furthermore, for some $\bm{X}$, there exists a decomposition of $\bm{X}$, $\bm{X} = \sum_{d=1}^{D/P}\hat{\bm{W}}\times_{1}\hat{\bm{H}}_{:d:}\times_{2}\hat{\bm{R}}_{:d:}\times_{3}\hat{\bm{T}}_{:d:}$, such that
\begin{align*}
&\sum_{d=1}^{D/P}\lambda_{1}(\|\tilde{\bm{H}}_{:d:}\|_{F}^{\alpha}+\|\tilde{\bm{R}}_{:d:}\|_{F}^{\alpha}+\|\tilde{\bm{T}}_{:d:}\|_{F}^{\alpha})\\
+&\lambda_{4}(\|\tilde{\bm{W}}\times_{2}\tilde{\bm{R}}_{:d:}\times_{3}\tilde{\bm{T}}_{:d:}\|_{F}^{\alpha}+\|\tilde{\bm{W}}\times_{3}\tilde{\bm{T}}_{:d:}\times_{1}\tilde{\bm{H}}_{:d:}\|_{F}^{\alpha}
+\|\tilde{\bm{W}}\times_{1}\tilde{\bm{H}}_{:d:}\times_{2}\tilde{\bm{R}}_{:d:}\|_{F}^{\alpha})\\
>&\sqrt{\lambda_{1}\lambda_{4}}/\sqrt{\lambda_{2}\lambda_{3}}\sum_{d=1}^{D/P}\lambda_{2}(\|\tilde{\bm{T}}_{:d:}\|_{F}^{\alpha}\|\tilde{\bm{R}}_{:d:}\|_{F}^{\alpha}+\|\tilde{\bm{T}}_{:d:}\|_{F}^{\alpha}\|\tilde{\bm{H}}_{:d:}\|_{F}^{\alpha}+\|\tilde{\bm{R}}_{:d:}\|_{F}^{\alpha}\|\tilde{\bm{H}}_{:d:}\|_{F}^{\alpha})\\
+&\lambda_{3}(\|\tilde{\bm{W}}\times_{1}\tilde{\bm{H}}_{:d:}\|_{F}^{\alpha}+\|\tilde{\bm{W}}\times_{2}\tilde{\bm{R}}_{:d:}\|_{F}^{\alpha}+\|\tilde{\bm{W}}\times_{3}\tilde{\bm{T}}_{:d:}\|_{F}^{\alpha})
\end{align*}
\end{proposition}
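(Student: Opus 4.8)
Let me write $A(\bm{W},\bm{H},\bm{R},\bm{T})$ for the right-hand side of Eq.(\ref{equation:5}) (the terms carrying $\lambda_{1},\lambda_{4}$) and $B(\bm{W},\bm{H},\bm{R},\bm{T})$ for the right-hand side of Eq.(\ref{equation:6}) (the terms carrying $\lambda_{2},\lambda_{3}$), both read off a chosen decomposition of $\bm{X}$, and set $c=\sqrt{\lambda_{1}\lambda_{4}}/\sqrt{\lambda_{2}\lambda_{3}}$. The two assertions are exactly that some decompositions force $A<cB$ (for every $\bm{X}$) while others force $A>cB$ (for a suitable $\bm{X}$); since $A$ and $cB$ are both valid upper bounds for $2\sqrt{\lambda_{1}\lambda_{4}}L(\bm{X};\alpha)$ by Propositions \ref{proposition:a.1} and \ref{proposition:a.2}, this is precisely the non-domination claim. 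The plan is to realise each inequality inside an explicit scaling family of decompositions and read the sign of $A-cB$ off its leading term.

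For the first inequality I would start from any decomposition of $\bm{X}$ whose factor blocks are all nonzero (for $\bm{X}=\bm{0}$ take two cancelling rank-one terms) and apply the reconstruction-preserving rescaling $\bm{H}_{:d:}\mapsto s\bm{H}_{:d:}$, $\bm{R}_{:d:}\mapsto s\bm{R}_{:d:}$, $\bm{T}_{:d:}\mapsto s\bm{T}_{:d:}$, $\bm{W}\mapsto s^{-3}\bm{W}$, which leaves every summand $\bm{W}\times_{1}\bm{H}_{:d:}\times_{2}\bm{R}_{:d:}\times_{3}\bm{T}_{:d:}$, hence $\bm{X}$, unchanged. Under it the $\lambda_{1}$-block-norms of $A$ scale like $s^{\alpha}$ and its $\lambda_{4}$-contractions like $s^{-\alpha}$, so $A=\Theta(s^{\alpha})$; meanwhile the $\lambda_{2}$-products of $B$ scale like $s^{2\alpha}$ and its $\lambda_{3}$-contractions like $s^{-2\alpha}$, so $B=\Theta(s^{2\alpha})$. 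Because all leading coefficients are strictly positive, $A/(cB)=\Theta(s^{-\alpha})\to 0$, and every sufficiently large $s$ yields a decomposition with $A<cB$.

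The reverse inequality is the real obstacle: the mechanism of Part 1 only ever drives $A$ below $cB$, and symmetric decompositions (equal factors, scalar $\bm{W}$) behave the same way, so one must exploit the genuine asymmetry between the block norm $\lambda_{1}\|\bm{H}_{:d:}\|_{F}^{\alpha}$ appearing in $A$ and the coupled products $\lambda_{2}\|\bm{H}_{:d:}\|_{F}^{\alpha}(\|\bm{R}_{:d:}\|_{F}^{\alpha}+\|\bm{T}_{:d:}\|_{F}^{\alpha})$ appearing in $B$. Concretely I would fix any nonzero $\bm{X}\in\mathbb{R}^{2\times 2\times 2}$ and take the TuckER-type family ($P=2$, $D/P=1$) with $\bm{H}=s\bm{I}_{2}$, $\bm{R}=\bm{T}=\epsilon\bm{I}_{2}$, and core $\bm{W}=(s\epsilon^{2})^{-1}\bm{X}$, which reconstructs $\bm{X}$ for all $s,\epsilon>0$. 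Here the dominant large-$s$ term of $A$ is $\lambda_{1}\|\bm{H}\|_{F}^{\alpha}\sim\lambda_{1}2^{\alpha/2}s^{\alpha}$, whereas the dominant large-$s$ term of $cB$ comes only from the two products containing $\bm{H}$ and is $\sim c\lambda_{2}2^{\alpha+1}\epsilon^{\alpha}s^{\alpha}$, while every contraction term (of both $A$ and $B$) stays bounded in $s$ because $\bm{W}\times_{1}\bm{H}=\epsilon^{-2}\bm{X}$ and the rest decay. Thus the coefficient of $s^{\alpha}$ in $A-cB$ is $2^{\alpha/2}\bigl(\lambda_{1}-c\lambda_{2}2^{\alpha/2+1}\epsilon^{\alpha}\bigr)$, which becomes positive once $\epsilon$ is small enough (depending only on the fixed $\lambda_{i}$ and $\alpha$). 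The delicate point to get right is the order of limits: $\epsilon$ must be driven small \emph{first}, so that the $\lambda_{2}$ coupling is dwarfed by the $\lambda_{1}$ block norm, after which $s\to\infty$ suppresses the constant $\epsilon^{-2\alpha}$ contraction contributions; fixing such an $\epsilon$ and letting $s\to\infty$ then forces $A>cB$. Finally I would remark that this same $\bm{X}$, compared across the Part 1 family and the Part 2 family, makes $A-cB$ take both signs, which is exactly the non-domination of the two bounds.
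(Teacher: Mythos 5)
Your proof is correct, but it takes a genuinely different route from the paper's. For the first inequality the paper does not use asymptotics at all: it builds one explicit decomposition per tensor --- a diagonal core $\hat{\bm{W}}$, a factor $\hat{\bm{H}}$ carrying the entries of $\bm{X}$, and $0/1$ incidence matrices $\hat{\bm{R}},\hat{\bm{T}}$ (with a separate treatment of the $1\times 1\times 1$ case) --- and compares the two bounds term by term: under that decomposition the $\lambda_{1}$-block norms on the left coincide with the $\lambda_{3}$-contractions on the right (the diagonal core acts isometrically, $\|\hat{\bm{H}}\hat{\bm{W}}_{(1)}\|_{F}=\|\hat{\bm{H}}\|_{F}$), while the $\lambda_{4}$-contractions are strictly smaller than the $\lambda_{2}$-products because $\hat{\bm{W}}_{(1)}(\hat{\bm{T}}\otimes\hat{\bm{R}})^{T}$ drops cross entries of the Kronecker product. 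Your rescaling family $(\bm{H},\bm{R},\bm{T},\bm{W})\mapsto(s\bm{H},s\bm{R},s\bm{T},s^{-3}\bm{W})$, giving $A=\Theta(s^{\alpha})$ against $cB=\Theta(s^{2\alpha})$, reaches the same conclusion with no case analysis and no combinatorial construction. For the second inequality the paper again fixes a single symmetric decomposition ($\tilde{\bm{H}},\tilde{\bm{R}},\tilde{\bm{T}}$ scaled identities, $\tilde{\bm{W}}=\bm{X}/(2\sqrt{2})$, written for $\alpha=2$) and instead varies the tensor: it chooses $\bm{X}$ with $\|\bm{X}\|_{F}$ large relative to the dimensions, so the contraction terms dominate and the two-mode contractions ($\|\bm{X}\|_{F}^{2}/2$ each) on the left beat the one-mode contractions ($\|\bm{X}\|_{F}^{2}/4$ each) on the right. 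Your two-parameter family $\bm{H}=s\bm{I}$, $\bm{R}=\bm{T}=\epsilon\bm{I}$, $\bm{W}=(s\epsilon^{2})^{-1}\bm{X}$ instead lets the $\lambda_{1}\|\bm{H}\|_{F}^{\alpha}$ term beat the $\lambda_{2}$ coupling once $\epsilon$ is fixed small and then $s\to\infty$; your order-of-limits caution is exactly the right point. As for what each buys: your argument is uniform in $\alpha$, shorter, and actually proves a stronger form of the second claim (for \emph{every} $\bm{X}$, not merely some $\bm{X}$, a decomposition reverses the inequality), whereas the paper's is fully explicit and quantitative (concrete decompositions and a concrete threshold $\|\bm{X}_{(1)}\|_{F}^{2}>16c_{2}^{2}(n_{3}n_{2}+n_{3}n_{1}+n_{2}n_{1})$) and exposes the structural reason the two bounds are incomparable, namely which groups of terms coincide and which strictly separate under a given decomposition.
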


\begin{proof}
To simplify the notations, we denote $\hat{\bm{H}}_{:d:}$ as $\hat{\bm{H}}$, $\hat{\bm{R}}_{:d:}$ as $\hat{\bm{R}}$ and $\hat{\bm{T}}_{:d:}$ as $\hat{\bm{T}}$. To prove the inequality, we only need to prove
\begin{align*}
&\sqrt{\lambda_{1}/\lambda_{4}}(\|\hat{\bm{H}}\|_{F}^{\alpha}+\|\hat{\bm{R}}\|_{F}^{\alpha}+\|\hat{\bm{T}}\|_{F}^{\alpha})\\
+&\sqrt{\lambda_{4}/\lambda_{1}}(\|\hat{\bm{W}}\times_{2}\hat{\bm{R}}\times_{3}\hat{\bm{T}}\|_{F}^{\alpha}+\|\hat{\bm{W}}\times_{3}\hat{\bm{T}}\times_{1}\hat{\bm{H}}\|_{F}^{\alpha}
+\|\hat{\bm{W}}\times_{1}\hat{\bm{H}}\times_{2}\hat{\bm{R}}\|_{F}^{\alpha})\\
>&\sqrt{\lambda_{2}/\lambda_{3}}(\|\hat{\bm{T}}\|_{F}^{\alpha}\|\hat{\bm{R}}\|_{F}^{\alpha}+\|\hat{\bm{T}}\|_{F}^{\alpha}\|\hat{\bm{H}}\|_{F}^{\alpha}+\|\hat{\bm{R}}\|_{F}^{\alpha}\|\hat{\bm{H}}\|_{F}^{\alpha})\\
+&\sqrt{\lambda_{3}/\lambda_{2}}(\|\hat{\bm{W}}\times_{1}\hat{\bm{H}}\|_{F}^{\alpha}+\|\hat{\bm{W}}\times_{2}\hat{\bm{R}}\|_{F}^{\alpha}+\|\hat{\bm{W}}\times_{3}\hat{\bm{T}}\|_{F}^{\alpha})
\end{align*}
Let $c_{1}=\sqrt{\lambda_{1}\lambda_{3}}/\sqrt{\lambda_{1}\lambda_{3}}$, for any $\bm{X}$, $\bm{X} = \sum_{d=1}^{D/P}(c_{1}^{-3}\hat{\bm{W}})\times_{1}(c_{1}\hat{\bm{H}}_{:d:})\times_{2}(c_{1}\hat{\bm{R}}_{:d:})\times_{3}(c_{1}\hat{\bm{T}}_{:d:})$ is a decomposition of $\bm{X}$, thus we only need to prove 
\begin{align*}
&c_{2}(\|\hat{\bm{H}}\|_{F}^{\alpha}+\|\hat{\bm{R}}\|_{F}^{\alpha}+\|\hat{\bm{T}}\|_{F}^{\alpha})
+\frac{1}{c_{2}}(\|\hat{\bm{W}}\times_{2}\hat{\bm{R}}\times_{3}\hat{\bm{T}}\|_{F}^{\alpha}+\|\hat{\bm{W}}\times_{3}\hat{\bm{T}}\times_{1}\hat{\bm{H}}\|_{F}^{\alpha}
+\|\hat{\bm{W}}\times_{1}\hat{\bm{H}}\times_{2}\hat{\bm{R}}\|_{F}^{\alpha})\\
>&c_{2}(\|\hat{\bm{T}}\|_{F}^{\alpha}\|\hat{\bm{R}}\|_{F}^{\alpha}+\|\hat{\bm{T}}\|_{F}^{\alpha}\|\hat{\bm{H}}\|_{F}^{\alpha}+\|\hat{\bm{R}}\|_{F}^{\alpha}\|\hat{\bm{H}}\|_{F}^{\alpha})+\frac{1}{c_{2}}(\|\hat{\bm{W}}\times_{1}\hat{\bm{H}}\|_{F}^{\alpha}+\|\hat{\bm{W}}\times_{2}\hat{\bm{R}}\|_{F}^{\alpha}+\|\hat{\bm{W}}\times_{3}\hat{\bm{T}}\|_{F}^{\alpha})
\end{align*}
where $c_{2}=\sqrt{\lambda_{1}^2\lambda_{3}}/\sqrt{\lambda_{2}\lambda_{4}^2}$.

If $\bm{X}\in \mathbb{R}^{1\times 1\times1}$, let $c=\bm{X}^2+100, \hat{\bm{H}}=\hat{\bm{R}}=\hat{\bm{T}}=c,\hat{\bm{W}}=\frac{\bm{X}}{c^3}$, we have 
\begin{align*}
&c_{2}(\|\hat{\bm{H}}\|_{F}^{\alpha}+\|\hat{\bm{R}}\|_{F}^{\alpha}+\|\hat{\bm{T}}\|_{F}^{\alpha})
+\frac{1}{c_{2}}(\|\hat{\bm{W}}\times_{2}\hat{\bm{R}}\times_{3}\hat{\bm{T}}\|_{F}^{\alpha}+\|\hat{\bm{W}}\times_{3}\hat{\bm{T}}\times_{1}\hat{\bm{H}}\|_{F}^{\alpha}
+\|\hat{\bm{W}}\times_{1}\hat{\bm{H}}\times_{2}\hat{\bm{R}}\|_{F}^{\alpha})\\
=&c_{2}(\bm{X}^2+100)^2+\frac{1}{c_{2}}\frac{\bm{X}^2}{(\bm{X}^2+100)^2}\\
<&c_{2}(\|\hat{\bm{T}}\|_{F}^{\alpha}\|\hat{\bm{R}}\|_{F}^{\alpha}+\|\hat{\bm{T}}\|_{F}^{\alpha}\|\hat{\bm{H}}\|_{F}^{\alpha}+\|\hat{\bm{R}}\|_{F}^{\alpha}\|\hat{\bm{H}}\|_{F}^{\alpha})
+\frac{1}{c_{2}}(\|\hat{\bm{W}}\times_{1}\hat{\bm{H}}\|_{F}^{\alpha}+\|\hat{\bm{W}}\times_{2}\hat{\bm{R}}\|_{F}^{\alpha}+\|\hat{\bm{W}}\times_{3}\hat{\bm{T}}\|_{F}^{\alpha})\\
=&c_{2}(\bm{X}^2+100)^4+\frac{1}{c_{2}}\frac{\bm{X}^2}{(\bm{X}^2+100)^4}
\end{align*}
For any $\bm{X}\in \mathbb{R}^{n_1\times n_2\times n_3},\max\{n_1, n_2,n_3\}>1$, let $\hat{\bm{W}}$ be a diagonal tensor, i.e., $\hat{\bm{W}}_{i,j,k}=1$ if $i=j=k$ and otherwise 0 and let $\hat{\bm{W}}\in \mathbb{R}^{n_1n_2n_3\times n_1n_2n_3},\hat{\bm{H}}\in \mathbb{R}^{n_1\times n_1n_2n_3},\hat{\bm{R}}\in \mathbb{R}^{n_2\times n_1n_2n_3},\hat{\bm{T}}\in \mathbb{R}^{n_3\times n_1n_2n_3}$, $\hat{\bm{H}}_{i,m}=\bm{X}_{ijk},\hat{\bm{R}}_{j,m}=1,\hat{\bm{T}}_{k,m}=1$, $m=(i-1)n_2n_3+(j-1)n_3+k$ and otherwise 0. An example of $\bm{X}\in\mathbb{R}^{2\times 2\times 2}$ is as follows:
\begin{align*}
&\hat{\bm{H}}=\left(
\begin{array}{cccccccc}
\bm{X}_{1,1,1} &\bm{X}_{1,1,2} &\bm{X}_{1,2,1} &\bm{X}_{1,2,2} &0 &0 &0 &0\\
0 &0 &0 &0 &\bm{X}_{2,1,1} &\bm{X}_{2,1,2} &\bm{X}_{2,2,1} &\bm{X}_{2,2,2}
\end{array}
\right)\\
&\hat{\bm{R}}=\left(
\begin{array}{cccccccc}
1 &1 &0 &0 &1 &1 &0 &0\\
0 &0 &1 &1 &0 &0 &1 &1
\end{array}
\right)
\hat{\bm{T}}=\left(
\begin{array}{cccccccc}
1 &0 &1 &0 &1 &0 &1 &0\\
0 &1 &0 &1 &0 &1 &0 &1
\end{array}
\right)
\end{align*}

Thus $\bm{X} = \hat{\bm{W}}\times_{1}\hat{\bm{H}}\times_{2}\hat{\bm{R}}\times_{3}\hat{\bm{T}}$ is a decomposition of $\bm{X}$, for $\bm{X} = (c_{1}^{-3}\hat{\bm{W}})\times_{1}(c_{1}\hat{\bm{H}})\times_{2}(c_{1}\hat{\bm{R}})\times_{3}(c_{1}\hat{\bm{T}})$ we have
\begin{align*}
&c_{2}(\|\hat{\bm{H}}\|_{F}^{\alpha}+\|\hat{\bm{R}}\|_{F}^{\alpha}+\|\hat{\bm{T}}\|_{F}^{\alpha})
+\frac{1}{c_{2}}(\|\hat{\bm{W}}\times_{2}\hat{\bm{R}}\times_{3}\hat{\bm{T}}\|_{F}^{\alpha}+\|\hat{\bm{W}}\times_{3}\hat{\bm{T}}\times_{1}\hat{\bm{H}}\|_{F}^{\alpha}
+\|\hat{\bm{W}}\times_{1}\hat{\bm{H}}\times_{2}\hat{\bm{R}}\|_{F}^{\alpha})\\
=&c_{2}(\|\hat{\bm{H}}\|_{F}^{\alpha}+\|\hat{\bm{R}}\|_{F}^{\alpha}+\|\hat{\bm{T}}\|_{F}^{\alpha})+\frac{1}{c_{2}}(\|\hat{\bm{W}}_{(1)}(\hat{\bm{T}}\otimes \hat{\bm{R}})^T\|_{F}^{\alpha}
+\|\hat{\bm{W}}_{(2)}(\hat{\bm{T}}\otimes \hat{\bm{H}})^T\|_{F}^{\alpha}+\|\hat{\bm{W}}_{(3)}(\hat{\bm{R}}\otimes \hat{\bm{H}})^T\|_{F}^{\alpha})\\
< &\frac{1}{c_{2}}(\|(\hat{\bm{T}}\otimes \hat{\bm{R}})^T\|_{F}^{\alpha}
+\|(\hat{\bm{T}}\otimes \hat{\bm{H}})^T\|_{F}^{\alpha}+\|(\hat{\bm{R}}\otimes \hat{\bm{H}})^T\|_{F}^{\alpha})+c_{2}(\|\hat{\bm{H}}\|_{F}^{\alpha}+\|\hat{\bm{R}}\|_{F}^{\alpha}+\|\hat{\bm{T}}\|_{F}^{\alpha})\\
=&c_{2}(\|\hat{\bm{T}}\|_{F}^{\alpha}\|\hat{\bm{R}}\|_{F}^{\alpha}+\|\hat{\bm{T}}\|_{F}^{\alpha}\|\hat{\bm{H}}\|_{F}^{\alpha}+\|\hat{\bm{R}}\|_{F}^{\alpha}\|\hat{\bm{H}}\|_{F}^{\alpha})
+\frac{1}{c_{2}}(\|\hat{\bm{H}}\hat{\bm{W}}_{(1)}\|_{F}^{\alpha}+\|\hat{\bm{R}}\hat{\bm{W}}_{(2)}\|_{F}^{\alpha}+\|\hat{\bm{T}}\hat{\bm{W}}_{(3)}\|_{F}^{\alpha})\\
=&c_{2}(\|\hat{\bm{T}}\|_{F}^{\alpha}\|\hat{\bm{R}}\|_{F}^{\alpha}+\|\hat{\bm{T}}\|_{F}^{\alpha}\|\hat{\bm{H}}\|_{F}^{\alpha}+\|\hat{\bm{R}}\|_{F}^{\alpha}\|\hat{\bm{H}}\|_{F}^{\alpha})
+\frac{1}{c_{2}}(\|\hat{\bm{W}}\times_{1}\hat{\bm{H}}\|_{F}^{\alpha}+\|\hat{\bm{W}}\times_{2}\hat{\bm{R}}\|_{F}^{\alpha}+\|\hat{\bm{W}}\times_{3}\hat{\bm{T}}\|_{F}^{\alpha})
\end{align*}

For any $\bm{X}\in \mathbb{R}^{n_1\times n_2\times n_3}$, let $\tilde{\bm{W}}=\bm{X}/{2\sqrt{2}},\tilde{\bm{H}}=\sqrt{2}\bm{I}_{n_1\times n_1},\tilde{\bm{R}}=\sqrt{2}\bm{I}_{n_2\times n_2},\tilde{\bm{T}}=\sqrt{2}\bm{I}_{n_3\times n_3}$, thus
\begin{align*}
&c_{2}(\|\tilde{\bm{H}}\|_{F}^{\alpha}+\|\tilde{\bm{R}}\|_{F}^{\alpha}+\|\tilde{\bm{T}}\|_{F}^{\alpha})
+\frac{1}{c_{2}}(\|\tilde{\bm{W}}\times_{2}\tilde{\bm{R}}\times_{3}\tilde{\bm{T}}\|_{F}^{\alpha}+\|\tilde{\bm{W}}\times_{3}\tilde{\bm{T}}\times_{1}\tilde{\bm{H}}\|_{F}^{\alpha}
+\|\tilde{\bm{W}}\times_{1}\tilde{\bm{H}}\times_{2}\tilde{\bm{R}}\|_{F}^{\alpha})\\
=&c_{2}(2n_{1}+2n_{2}+2n_{3})+\frac{1}{c_{2}}(\|\bm{X}_{(1)}\|_F^{2}/2+\|\bm{X}_{(2)}\|_F^{2}/2+\|\bm{X}_{(3)}\|_F^{2}/2)\\
&c_{2}(\|\tilde{\bm{T}}\|_{F}^{\alpha}\|\tilde{\bm{R}}\|_{F}^{\alpha}+\|\tilde{\bm{T}}\|_{F}^{\alpha}\|\tilde{\bm{H}}\|_{F}^{\alpha}+\|\tilde{\bm{R}}\|_{F}^{\alpha}\|\tilde{\bm{H}}\|_{F}^{\alpha})
+\frac{1}{c_{2}}(\|\tilde{\bm{W}}\times_{1}\tilde{\bm{H}}\|_{F}^{\alpha}+\|\tilde{\bm{W}}\times_{2}\tilde{\bm{R}}\|_{F}^{\alpha}+\|\tilde{\bm{W}}\times_{3}\tilde{\bm{T}}\|_{F}^{\alpha})\\
=&c_{2}(4n_{3}n_{2}+4n_{3}n_{1}+4n_{2}n_{1})+\frac{1}{c_{2}}(\|\bm{X}_{(1)}\|_F^{2}/4+\|\bm{X}_{(2)}\|_F^{2}/4+\|\bm{X}_{(3)}\|_F^{2}/4)
\end{align*}
Thus if $\|\bm{X}_{(1)}\|_F^2>16c_{2}^2(n_{3}n_{2}+n_{3}n_{1}+n_{2}n_{1})$, we have
\begin{align*}
&c_{2}(\|\tilde{\bm{H}}\|_{F}^{\alpha}+\|\tilde{\bm{R}}\|_{F}^{\alpha}+\|\tilde{\bm{T}}\|_{F}^{\alpha})
+\frac{1}{c_{2}}(\|\tilde{\bm{W}}\times_{2}\tilde{\bm{R}}\times_{3}\tilde{\bm{T}}\|_{F}^{\alpha}+\|\tilde{\bm{W}}\times_{3}\tilde{\bm{T}}\times_{1}\tilde{\bm{H}}\|_{F}^{\alpha}
+\|\tilde{\bm{W}}\times_{1}\tilde{\bm{H}}\times_{2}\tilde{\bm{R}}\|_{F}^{\alpha})\\
>&c_{2}(\|\tilde{\bm{T}}\|_{F}^{\alpha}\|\tilde{\bm{R}}\|_{F}^{\alpha}+\|\tilde{\bm{T}}\|_{F}^{\alpha}\|\tilde{\bm{H}}\|_{F}^{\alpha}+\|\tilde{\bm{R}}\|_{F}^{\alpha}\|\tilde{\bm{H}}\|_{F}^{\alpha})
+\frac{1}{c_{2}}(\|\tilde{\bm{W}}\times_{1}\tilde{\bm{H}}\|_{F}^{\alpha}+\|\tilde{\bm{W}}\times_{2}\tilde{\bm{R}}\|_{F}^{\alpha}+\|\tilde{\bm{W}}\times_{3}\tilde{\bm{T}}\|_{F}^{\alpha})
\end{align*}
end the proof.
\end{proof}

\section{Experimental details}
\label{appendix:3}

\paragraph{Datasets}
The statistics of the datasets, WN18RR, FB15k-237, YAGO3-10 and Kinship, are shown in
Table \ref{table:4}.

\begin{table}[h]
\caption{The statistics of the datasets.}
\label{table:4}
\begin{center}
\begin{small}
\begin{tabular}{llllll}
\toprule
Dataset &\#entity &\#relation &\#train &\#valid &\#test\\
\midrule
WN18RR    &40,943 &11 &86,835 &3,034 &3,134\\
FB15k-237 &14,541 &237 &272,115 &17,535 &20,466\\
YGAO3-10  &123,188 &37 &1,079,040 &5,000 &5,000\\
Kinship   &104   &25  &8,548 &1,069 &1,069 \\
\bottomrule
\end{tabular}
\end{small}
\end{center}
\end{table}

\begin{table*}[h]
\begin{center}
\caption{The settings for total embedding dimension $D$ and number of parts $P$.}
\label{table:5}
\begin{small}
\begin{tabular}{lllllllll}
\toprule
\multicolumn{3}{c}{}
&\multicolumn{2}{c}{\bf WN18RR}&\multicolumn{2}{c}{\bf FB15k-237}  &\multicolumn{2}{c}{\bf YAGO3-10}\\
\cmidrule(r){4-5}\cmidrule(r){6-7}\cmidrule(r){8-9}\\
\multicolumn{3}{c}{} &$D$ &$P$ &$D$ &$P$ &$D$ &$P$\\
\midrule
\multicolumn{3}{l}{CP} &2000 &1 &2000 &1 &1000 &1\\
\multicolumn{3}{l}{ComplEx} &4000 &2 &4000 &2 &2000 &2\\    
\multicolumn{3}{l}{SimplE} &4000 &2 &4000 &2 &2000 &2\\
\multicolumn{3}{l}{ANALOGY} &4000 &4 &4000 &4 &2000 &4\\
\multicolumn{3}{l}{QuatE} &4000 &4 &4000 &4 &2000 &4\\
\multicolumn{3}{l}{TuckER} &256 &256 &256 &256 &256 &256\\
\bottomrule
\end{tabular}
\end{small}
\end{center}
\end{table*}

\begin{table*}[h]
\begin{center}
\caption{The settings for power $\alpha$ and regularization coefficients $\lambda_{i}$.}
\label{table:6}
\begin{small}
\begin{tabular}{llllllllllll}
\toprule
\multicolumn{3}{c}{}
&\multicolumn{3}{c}{\bf WN18RR}&\multicolumn{3}{c}{\bf FB15k-237}  &\multicolumn{3}{c}{\bf YAGO3-10}\\
\cmidrule(r){4-6}\cmidrule(r){7-9}\cmidrule(r){10-12}\\
\multicolumn{3}{c}{} &$\alpha$ &$\lambda_{1}$ &$\lambda_{2}$ &$\alpha$ &$\lambda_{1}$ &$\lambda_{2}$ &$\alpha$ &$\lambda_{1}$ &$\lambda_{2}$\\
\midrule
\multicolumn{3}{l}{CP} &3.0 &0.05 &0.07 &2.25 &0.005 &0.07 &2.25 &0.0003 &0.005\\
\multicolumn{3}{l}{ComplEx} &3.0 &0.05 &0.07 &2.25 &0.005 &0.05 &2.25 &0.0005 &0.007\\
\multicolumn{3}{l}{SimplE} &3.0 &0.05 &0.07 &2.25 &0.005 &0.07 &2.25  &0.0007 &0.007\\
\multicolumn{3}{l}{ANALOGY} &3.0 &0.003 &0.07 &2.25 &0.005 &0.07 &2.25 &0.0007 &0.007\\
\multicolumn{3}{l}{QuatE} &3.0 &0.07 &0.03 &2.25 &0.005 &0.05 &2.25 &0.0005 &0.005\\
\multicolumn{3}{l}{TuckER} &2.0 &0.01 &0.03 &2.0 &0.003 &0.01 &2.0 &0.0005 &0.003\\
\bottomrule
\end{tabular}
\end{small}
\end{center}
\end{table*}

\begin{table*}[h]
\caption{The results on WN18RR dataset with different $\alpha$.}
\label{table:7}
\begin{center}
\begin{small}
\begin{tabular}{llllllll}
\toprule
$\alpha$ &2.0 &2.25 &2.5 &2.75 &3.0 &3.25 &3.5\\
\midrule
MRR  &0.483 &0.486 &0.485 &0.486 &0.494 &0.486 &0.487\\
H@1  &0.443 &0.445 &0.441 &0.442 &0.449 &0.443 &0.444\\
H@10 &0.556 &0.564 &0.573 &0.572 &0.581 &0.572 &0.570\\
\bottomrule
\end{tabular}
\end{small}
\end{center}
\end{table*}

\begin{table*}[p]
\caption{The performance of ComplEx on WN18RR dataset with different $\lambda_{1}$.}
\label{table:8}
\begin{center}
\begin{small}
\begin{tabular}{llllllllll}
\toprule
$\lambda_{1}$ &0.001 &0.003 &0.005 &0.007 &0.01 &0.03 &0.05 &0.07 &0.1\\
\midrule
MRR	&0.473	&0.475	&0.476	&0.476	&0.480	&0.487	&0.491	&0.486	&0.467\\
H@1	&0.435	&0.436	&0.436	&0.436	&0.439	&0.442	&0.445	&0.436	&0.411\\
H@10	&0.545	&0.555	&0.555	&0.557	&0.562	&0.576	&0.583	&0.585	&0.570\\
\bottomrule
\end{tabular}
\end{small}
\end{center}
\end{table*}

\begin{table*}[p]
\caption{The performance of ComplEx on WN18RR dataset with different $\lambda_{2}$.}
\label{table:9}
\begin{center}
\begin{small}
\begin{tabular}{llllllllll}
\toprule
$\lambda_{2}$ &0.001 &0.003 &0.005 &0.007 &0.01 &0.03 &0.05 &0.07 &0.1\\
\midrule
MRR	&0.464	&0.464	&0.464	&0.465	&0.465	&0.467	&0.471	&0.473	&0.472\\
H@1	&0.429	&0.430	&0.430	&0.430	&0.432	&0.432	&0.435	&0.437	&0.435\\
H@10	&0.528	&0.528	&0.529	&0.529	&0.530	&0.536	&0.540	&0.540	&0.540\\
\bottomrule
\end{tabular}
\end{small}
\end{center}
\end{table*}

\begin{table*}[p]
\caption{The results on WN18RR dataset with different $P$.}
\label{table:10}
\begin{center}
\begin{small}
\begin{tabular}{llllllllll}
\toprule
$P$ &1 &2 &4 &8 &16 &32 &64 &128 &256\\
\midrule
MRR  &0.437 &0.449 &0.455 &0.455 &0.463 &0.466 &0.485 &0.497 &0.501\\
H@1  &0.405 &0.413 &0.413 &0.415 &0.425 &0.428 &0.446 &0.456 &0.460\\
H@10 &0.499 &0.520 &0.536 &0.533 &0.536 &0.539 &0.565 &0.574 &0.579\\
\midrule
Time &1.971s &2.096s &2.111s &2.145s &2.301s &2.704s &3.520s &6.470s &16.336s\\
\bottomrule
\end{tabular}
\end{small}
\end{center}
\end{table*}

\begin{algorithm}[p]
	\caption{A pseudocode for IVR} 
	\label{alg:1}
	\begin{algorithmic}[1]
		\REQUIRE ~~\\ 
		Core tensor: $ \bm{W} $, embeddings: $(\bm{H},\bm{R},\bm{T})$, triplet: $(i,j,k)$.\\
		Regularization coefficients: $\lambda_{l}(l=1,2,3,4)$, the number of parts $P$.
  power coefficients: $\alpha$
		\ENSURE ~~\\
		Output of model Eq.(\ref{equation:1}): $\bm{X}_{ijk}$, IVR regularization Eq.(\ref{equation:6}): $\text{reg}(\bm{X}_{ijk})$
		\STATE Initialization: $\text{reg}:=0,\bm{x}_{1d}:=\bm{H}_{id:},\bm{x}_{2d}:=\bm{R}_{jd:},\bm{x}_{3d}:=\bm{T}_{kd:}$
		\STATE $\text{reg}:= \text{reg}+\sum_{d=1}^{D/P}\lambda_{1}(\|\bm{x}_{1d}\|_{F}^{\alpha}+\|\bm{x}_{2d}\|_{F}^{\alpha}+\|\bm{x}_{3d}\|_{F}^{\alpha})$
		\STATE $\text{reg}:= \text{reg}+\sum_{d=1}^{D/P}\lambda_{2}(\|\bm{x}_{1d}\|_{F}^{\alpha}\|\bm{x}_{2d}\|_{F}^{\alpha}+\|\bm{x}_{1d}\|_{F}^{\alpha}\|\bm{x}_{3d}\|_{F}^{\alpha}+\|\bm{x}_{2d}\|_{F}^{\alpha}\|\bm{x}_{3d}\|_{F}^{\alpha})$
		\STATE $\bm{x}_{1d}:=\bm{W}\times_{1}\bm{H}_{id:},\bm{x}_{2d}:=\bm{W}\times_{2}\bm{R}_{jd:},\bm{x}_{3d}:=\bm{W}\times_{3}\bm{T}_{kd:}$
		\STATE $\text{reg}:= \text{reg}+\sum_{d=1}^{D/P}\lambda_{3}(\|\bm{x}_{1d}\|_{F}^{\alpha}+\|\bm{x}_{2d}\|_{F}^{\alpha}+\|\bm{x}_{3d}\|_{F}^{\alpha})$
		\STATE $\bm{x}_{1d}:=\bm{x}_{1d}\times_{2}\bm{R}_{jd:},\bm{x}_{2d}:=\bm{x}_{2}\times_{3}\bm{T}_{kd:},\bm{x}_{3d}:=\bm{x}_{3d}\times_{1}\bm{H}_{id:}$
		\STATE $\text{reg}:= \text{reg}+\sum_{d=1}^{D/P}\lambda_{4}(\|\bm{x}_{1d}\|_{F}^{\alpha}+\|\bm{x}_{2d}\|_{F}^{\alpha}+\|\bm{x}_{3d}\|_{F}^{\alpha})$
		\STATE $\bm{x}:=\sum_{d=1}^{D/P}\bm{x}_{1d}\times_{3}\bm{T}_{kd:}$
		\STATE \textbf{return:} $\bm{x},\text{reg}$ 
	\end{algorithmic}
\end{algorithm}

\paragraph{Evaluation Metrics}
MR=$\frac{1}{N}\sum_{i=1}^{N}\rank_{i}$, where $\rank_{i}$ is the rank of $i$th triplet in the test set and $N$ is the number of the triplets. Lower MR indicates better performance.

MRR=$\frac{1}{N}\sum_{i=1}^{N}\frac{1}{\rank_{i}}$. Higher MRR indicates better performance.

$\text{Hits@N}=\frac{1}{N}\sum_{i=1}^{N}\mathbb{I}({\rank_{i}\leq N}$), where $\mathbb{I}(\cdot)$ is the indicator function. $\text{Hits@N}$ is the ratio of the ranks that no more than $N$, Higher $\text{Hits@N}$ indicates better performance.
\paragraph{Hyper-parameters}
We use a heuristic approach to choose the hyper-parameters and reduce the computation cost with the help of Hyperopt, a hyper-parameter optimization framework based on TPE \citep{bergstra2011algorithms}.

For the hyper-parameter $\alpha$, we search for the best $\alpha$ in $\{2.0, 2.25,2.5,2.75,3.0,3.25,3.5\}$.

For the hyper-parameter $\lambda_{i}$, we set $\lambda_{1}=\lambda_{3}$ and $\lambda_{2}=\lambda_{4}$ for all models to reduce the number of hyper-parameters because we notice that the first row of Eq.(\ref{equation:4}) $\|\bm{H}_{id:}\|_{F}^{\alpha}+\|\bm{R}_{jd:}\|_{F}^{\alpha}+\|\bm{T}_{kd:}\|_{F}^{\alpha}$ is equal to the third row of Eq.(\ref{equation:4}) $\|\bm{W}\times_{1}\bm{H}_{id:}\|_{F}^{\alpha}+\|\bm{W}\times_{2}\bm{R}_{jd:}\|_{F}^{\alpha}+\|\bm{W}\times_{3}\bm{T}_{kd:}\|_{F}^{\alpha}$ and the second row of Eq.(\ref{equation:4}) $\|\bm{T}_{kd:}\|_{F}^{\alpha}\|\bm{R}_{jd:}\|_{F}^{\alpha}+\|\bm{T}_{kd:}\|_{F}^{\alpha}\|\bm{H}_{id:}\|_{F}^{\alpha}+\|\bm{R}_{jd:}\|_{F}^{\alpha}\|\bm{H}_{id:}\|_{F}^{\alpha}$ is equal to the fourth row of Eq.(\ref{equation:4}) $\|\bm{W}\times_{2}\bm{R}_{jd:}\times_{3}\bm{T}_{kd:}\|_{F}^{\alpha}+\|\bm{W}\times_{3}\bm{T}_{kd:}\times_{1}\bm{H}_{id:}\|_{F}^{\alpha}+\|\bm{W}\times_{1}\bm{H}_{id:}\times_{2}\bm{R}_{jd:}\|_{F}^{\alpha}$ for CP and ComplEx. We then search the regularization coefficients  $\lambda_{1}$ and $\lambda_{2}$ in $\{0.001, 0.003, 0.005, 0.007, 0.01, 0.03, 0.05, 0.07\}$ for WN18RR dataset and FB15k-237 dataset, and search $\lambda_{1}$ and $\lambda_{2}$ in $\{0.0001, 0.0003, 0.0005, 0.0007, 0.001, 0.003, 0.05, 0.007\}$ for YAGO3-10 dataset. Thus, we need 64 runs for each model to find the best $\lambda_{i}$. To further reduce the number of runs, we use Hyperopt, a hyper-parameter optimization framework based on TPE \citep{bergstra2011algorithms}, to tune hyper-parameters. In our experiments, we only need 20 runs to find the best $\lambda_{i}$.

We use Adagrad \citep{duchi2011adaptive} with learning rate 0.1 as the optimizer. We set the batch size to 100 for WN18RR dataset and FB15k-237 dataset and 1000 for YAGO3-10 dataset. We train the models for 200 epochs. The settings for total embedding dimension $D$ and number of parts $P$ are shown in Table \ref{table:5}. The settings for power $\alpha$ and regularization coefficients $\lambda_{i}$ are shown in Table \ref{table:6}.

\paragraph{Random Initialization}
We do not report the error bars because our model has very small errors with respect to random seeds. The standard deviations of the results are very small. For example, the standard deviations of MRR, H@1 and H@10 of CP model with our regularization are 0.00037784, 0.00084755 and 0.00058739 on WN18RR dataset, respectively. This indicates that our model is not sensitive to the random initialization.

\paragraph{The hyper-parameter $\alpha$}
We analyze the impact of the hyper-parameter, the power of the Frobenius norm $\alpha$. We run experiments on WN18RR dataset with ComplEx model. We set $\alpha$ to $\{2.0, 2.25,2.5,2.75,3.0,3.25,3.5\}$. See Table \ref{table:7} for the results.

The results show that the performance generally increases as $\alpha$ increases and then decreases as $\alpha$ increases. The best $\alpha$ for WN18RR dataset is 3.0. Therefore, we should set a more appropriate $\alpha$ value to obtain better performance.

\paragraph{The hyper-parameter $\lambda_{i}$}
We analyze the impact of the hyper-parameter, the regularization coefficient $\lambda_{i}$. We run experiments on WN18RR dataset with ComplEx model. We set $\lambda_{i}$ to $\{0.001, 0.003, 0.005, 0.007, 0.01, 0.03, 0.05, 0.07, 0.1\}$. See Table \ref{table:8} and Table \ref{table:9} for the results.

The experimental results show that the model performance first increases and then decreases with the increase of $\lambda_{i}$, without any oscillation. Thus, we can choose suitable regularization coefficients to prevent overfitting while maintaining the expressiveness of TDB models as much as possible.

\paragraph{The Number of Parts $P$}
In Section \ref{section:3.1}, we show that the number of parts $P$ can affect the expressiveness and computation. Thus, we study the impact of $P$ on the model performance. We evaluate the model on WN18RR dataset. We set the total embedding dimension $D$ to $256$, and set the parts $P$ to $\{1, 2, 4, 8, 16, 32, 64, 128, 256\}$. See Table \ref{table:10} for the results. The time is the AMD Ryzen 7 4800U CPU running time on the test set.

The results show that the model performance generally improves and the running time generally increases as $P$ increases. Thus, the larger the part $P$, the more expressive the model and the more the computation.

\paragraph{A Pseudocode for IVR}
We present a pseudocode of our method in Alg.(\ref{alg:1}).

\end{document}